  \newcommand\figcaption{\def\@captype{figure}\caption}
  \newcommand\tabcaption{\def\@captype{table}\caption}
\newtheorem{theorem}{Theorem}
\newtheorem{lemma}{Lemma}
\newtheorem{corollary}{Corollary}
\newtheorem{assumption}{Assumption}
\title{Off-Policy RL Algorithms Can be Sample-Efficient for Continuous Control via Sample Multiple Reuse}
\author{%
  Jiafei Lyu$^{1}$\thanks{Work done while working as an intern at Tencent IEG. $^\dagger$ Corresponding Authors.}, Le Wan$^{2}$ Zongqing Lu$^{3 \dagger}$, Xiu Li$^{1 \dagger}$ \\
  $^{1}$Tsinghua Shenzhen International Graduate School, Tsinghua University\\
  $^{2}$IEG, Tencent \\
  $^{3}$School of Computer Science, Peking University \\
  \texttt{lvjf20@mails.tsinghua.edu.cn, li.xiu@sz.tsinghua.edu.cn}
}
\begin{document}

\maketitle

\begin{abstract}
  Sample efficiency is one of the most critical issues for online reinforcement learning (RL). Existing methods achieve higher sample efficiency by adopting model-based methods, Q-ensemble, or better exploration mechanisms. We, instead, propose to train an off-policy RL agent via updating on a fixed sampled batch multiple times, thus reusing these samples and better exploiting them within a single optimization loop. We name our method \textit{sample multiple reuse} (SMR). We theoretically show the properties of Q-learning with SMR, e.g., convergence. Furthermore, we incorporate SMR with off-the-shelf off-policy RL algorithms and conduct experiments on a variety of continuous control benchmarks. Empirical results show that SMR significantly boosts the sample efficiency of the base methods across most of the evaluated tasks without any hyperparameter tuning or additional tricks.
\end{abstract}

\section{Introduction}

In recent years, the success of reinforcement learning (RL) has been witnessed in fields like games \cite{Mnih2015HumanlevelCT, Silver2016MasteringTG, Vinyals2019GrandmasterLI}, neuroscience \cite{Dabney2020ADC}, fast matrix multiplication \cite{Fawzi2022DiscoveringFM}, and nuclear fusion control \cite{Degrave2022MagneticCO}. 

Online RL, different from batch RL \cite{Lange2012BatchRL}, defines the task of learning an optimal policy via continual interactions with the environment. The agent can generally explore (discover unseen regions) and exploit (use what it already knows) \cite{Sutton2005ReinforcementLA} the data due to the accessibility to the environment. Prior work explores many exploration methods for both discrete \cite{Ecoffet2020FirstRT, Burda2018ExplorationBR} and continuous control \cite{lillicrap2015continuous, Colas2017GEPPGDE} domains. With respect to the exploitation, off-policy deep RL algorithms are known to be more sample-efficient than on-policy methods, as they usually store past experiences and reuse them during training. Unfortunately, most of the off-policy deep RL algorithms, especially on continuous control domains, still need a vast number of interactions to learn meaningful policies. Such a phenomenon undoubtedly barriers the wide application of RL algorithms in real-world problems, e.g., robotics.

\begin{figure}[!htb]
    \centering
    \includegraphics[width=0.4\linewidth]{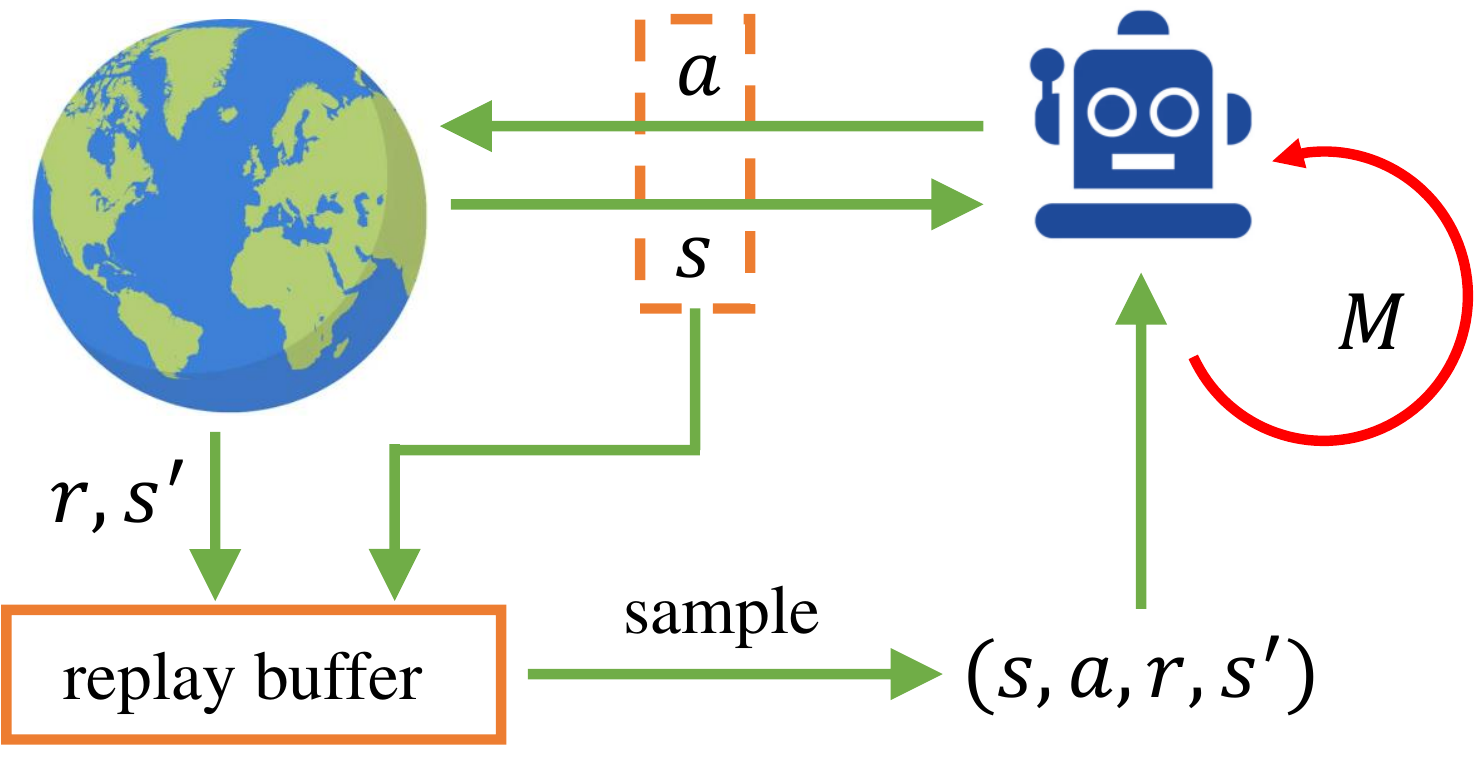}
    \includegraphics[width=0.55\linewidth]{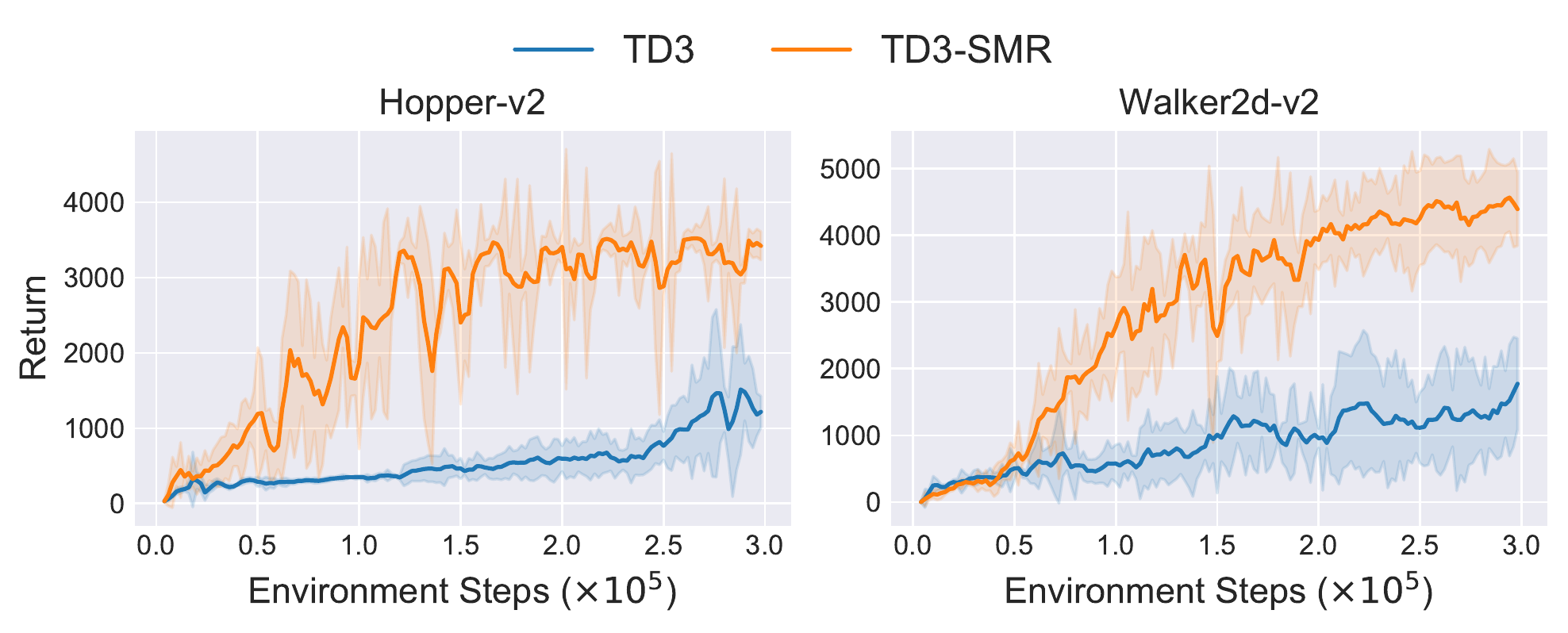}
    \caption{\textbf{Left:} the key idea behind Sample Multiple Reuse (SMR) lies in the {\color{red} red arrow} where we update the agent on the fixed samples for $M$ times. \textbf{Right:} SMR significantly boosts the sample efficiency of TD3 \cite{Fujimoto2018AddressingFA}.}
    \label{fig:example}
\end{figure}

In this paper, we set our focus on continuous control domains. There are many efforts in enhancing the exploration capability of the off-policy RL algorithms by adding extra bonus reward \cite{Tang2016ExplorationAS, Fu2017EX2EW, Houthooft2016CuriositydrivenEI, Achiam2017SurpriseBasedIM}, leveraging maximum entropy framework \cite{ziebart2010modeling, Haarnoja2018SoftAO, haarnoja2018softactorcritic}, etc. Another line of research focuses on better exploiting the data. They achieve this by alleviating the overestimation bias in value estimate \cite{Fujimoto2018AddressingFA, Lee2020SUNRISEAS, Kuznetsov2020ControllingOB, Efficient2022Lyu}, using high update-to-data (UTD) ratio \cite{Chen2021RandomizedED, Hiraoka2021DropoutQF}, adopting model-based methods \cite{Janner2019WhenTT, Lai2020BidirectionalMP, Pan2020TrustTM, wuplan2022}, etc. Nevertheless, these advances often involve complex components like ensemble. We wonder: \textit{is it possible to design a simple method that can universally better exploit data and improve sample efficiency?}


To this end, we propose \textit{sample multiple reuse} (SMR), where we update the actor and the critic network multiple times on the fixed sampled batch data, as shown in Figure \ref{fig:example}. By doing so, the networks can better fit and exploit the batch data (as depicted in Figure \ref{fig:smriteration}). We deem that every collected sample from online interaction is valuable and is worth being utilized more times during training. SMR is general and can be combined with \textit{any} off-the-shelf off-policy continuous control RL algorithms by modifying only a few lines of code.

To illustrate the rationality and benefits of SMR, we combine it with Q-learning and propose Q-SMR algorithm. We theoretically analyze the convergence property of Q-SMR in the tabular case. We empirically show that Q-SMR exhibits stronger sample efficiency than vanilla Q-learning. We then combine SMR with five typical continuous control RL algorithms and run experiments on four tasks from OpenAI Gym \cite{Brockman2016OpenAIG}. We combine SMR with SAC  \cite{Haarnoja2018SoftAO} and extensively evaluate SAC-SMR on two additional continuous control benchmarks, yielding a total of 30 tasks. Across most of the evaluated tasks, we observe improvement in sample efficiency over the base algorithms, often by a large margin (as shown in Figure \ref{fig:example}). The empirical results reveal that SMR is very general and can improve the sample efficiency of different algorithms in a variety of tasks.


To ensure that our proposed method is reproducible \cite{Islam2017ReproducibilityOB, henderson2018deep}, we include the anonymous code in \href{https://anonymous.4open.science/r/SMR-F3F2}{https://anonymous.4open.science/r/SMR-F3F2}, and evaluate the experimental results across fair evaluation metrics. 

\begin{figure}
    \centering
    \includegraphics[width=0.95\linewidth]{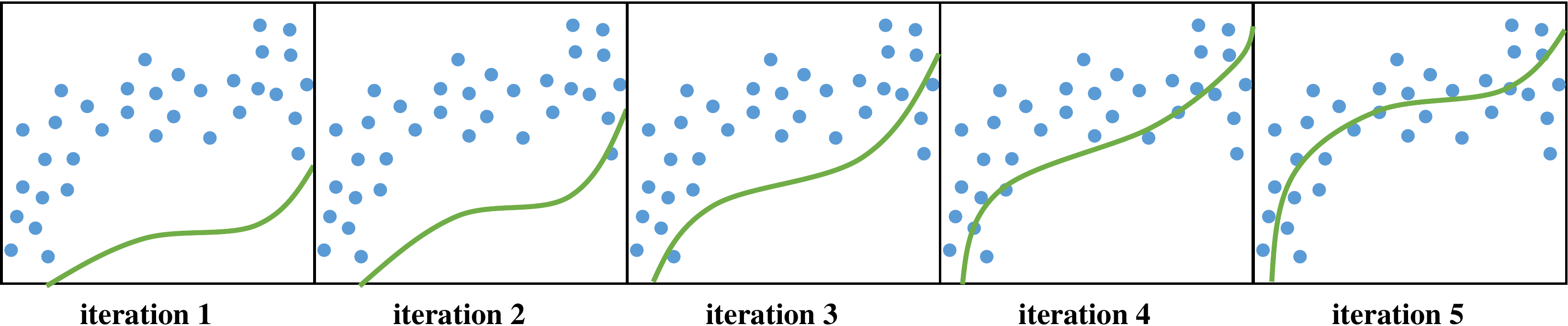}
    \caption{The key idea illustration of sample multiple reuse. The {\color{blue} blue} dots represent the samples in a batch. With only one iteration, it is hard for the approximator (e.g., neural network) to well-fit the data. Whereas, with more updates on the identical batch, the approximator can better fit the samples.}
    \label{fig:smriteration}
    \vspace{-0.4cm}
\end{figure}

\section{Preliminaries}
Reinforcement learning (RL) aims at dealing with sequential decision-making tasks. It can be formulated as a Markov decision process (MDP) defined by a tuple $\langle \mathcal{S}, \mathcal{A}, r, p, \gamma \rangle$. $\mathcal{S}$ is the state space, $\mathcal{A}$ is the action space, $r:\mathcal{S}\times\mathcal{A}\mapsto\mathbb{R}$ is the scalar reward signal, $p(\cdot|s,a)$ is the dynamics transition probability, and $\gamma\in[0,1)$ is the discount factor. In online RL, the agent can continually interact with the environment by following a policy $\pi:\mathcal{S}\mapsto\mathcal{A}$. The goal of the agent is to maximize the expected discounted long-term rewards, i.e.,
\begin{equation}
    \max J(\phi) = \mathbb{E} \left[ \sum_{t=0}^\infty \gamma^t r(s_t,a_t) \bigg| s_0, a_0; \pi \right].
\end{equation}
A policy is said to be \textit{stationary} if it is time-invariant. The state-action value function (also $Q$-function) $Q^\pi:\mathcal{S}\times\mathcal{A}\mapsto\mathbb{R}$ given a policy $\pi$ is defined by 
\begin{equation}
    Q^\pi(s,a) = \mathbb{E}_\pi \left[ \sum_{t=0}^\infty \gamma^t r(s_t,a_t) \bigg| s_0=s, a_0=a\right].
\end{equation}
The optimal $Q$-function $Q^*$ is the unique fixed point of the Bellman operator $\mathcal{T}Q$, which is given by:
\begin{equation}
    \mathcal{T}Q(s,a) := r(s,a) + \gamma\mathbb{E}_{s^\prime\sim p(\cdot|s,a)} [ \max_{a^\prime\in\mathcal{A}}Q(s^\prime,a^\prime)].
\end{equation}

A typical off-policy RL algorithm is Q-learning \cite{Watkins1992Qlearning}. It aims at learning the optimal $Q$-function and updates its entry via the following rule:
\begin{equation}
\label{eq:qlearning}
    Q_{t+1}(s,a) = (1-\alpha_t)Q_t(s,a) + \alpha_t (r_t + \gamma\max_{a^\prime\in\mathcal{A}}Q_t(s^\prime,a^\prime)),
\end{equation}
where $\alpha_t$ is the learning rate at timestep $t$.


\section{Why Not Reuse Your Data More?}

In online deep RL, it is a common practice that we sample a mini-batch in a bootstrapping way from the replay buffer, where the past experience is stored, for training the RL agent. However, existing off-policy RL methods only evaluate \textit{once} upon the sampled transitions, which is a waste since they fail to better exploit the collected valuable samples.

We remedy existing off-policy RL algorithms by reusing the sampled batch data more times. Our key intuition and motivation lie in the fact that it is hard for the neural network to well-fit and well-evaluate the sampled batch with just one glance (check Figure \ref{fig:smriteration}). With more updates on the sampled batch, the network can better adapt to the sample distribution, in conjunction with a more reliable evaluation upon them. We name our method \textit{sample multiple reuse} (SMR), which can be combined with \textit{any} off-policy RL algorithms. We first combine our method with vanilla Q-learning \cite{Watkins1992Qlearning}, yielding the Q-SMR algorithm as depicted in Algorithm \ref{alg:q-smr}. We further define the number of iterations $M$ as the SMR ratio, which measures the fixed batch reusing frequency of the agent. Empirically, Figure \ref{fig:q-smrcliff} illustrates the superior sample efficiency of our proposed Q-SMR algorithm against vanilla Q-learning in the tabular case, where a fixed $M=10$ is utilized for the Q-SMR. In both the classical cliff-walking environment and a maze environment, Q-SMR is able to learn faster and converge faster.

\begin{minipage}{0.48\textwidth}
\begin{algorithm}[H]
    \centering
    \caption{Q-SMR}\label{alg:q-smr}
    \begin{algorithmic}[1]
        \STATE Set learning rate sequence $\{\alpha_t\}$, number of iterations $T$.
        \STATE Initialize $Q(s,a)$ table with 0.
        \FOR{$t$ = 1 to $T$}
        \STATE Choose action $a$ derived from $Q$, e.g., $\epsilon$-greedy, and observe reward $r$ and next state $s^\prime$.
        \color{red}
        \FOR{$m$ = 1 to $M$}
        \STATE Update $Q_t$ according to Equation \ref{eq:qlearning}.
        \ENDFOR
        \ENDFOR
    \end{algorithmic}
\end{algorithm}
\vspace{-2mm}
\includegraphics[width=\linewidth]{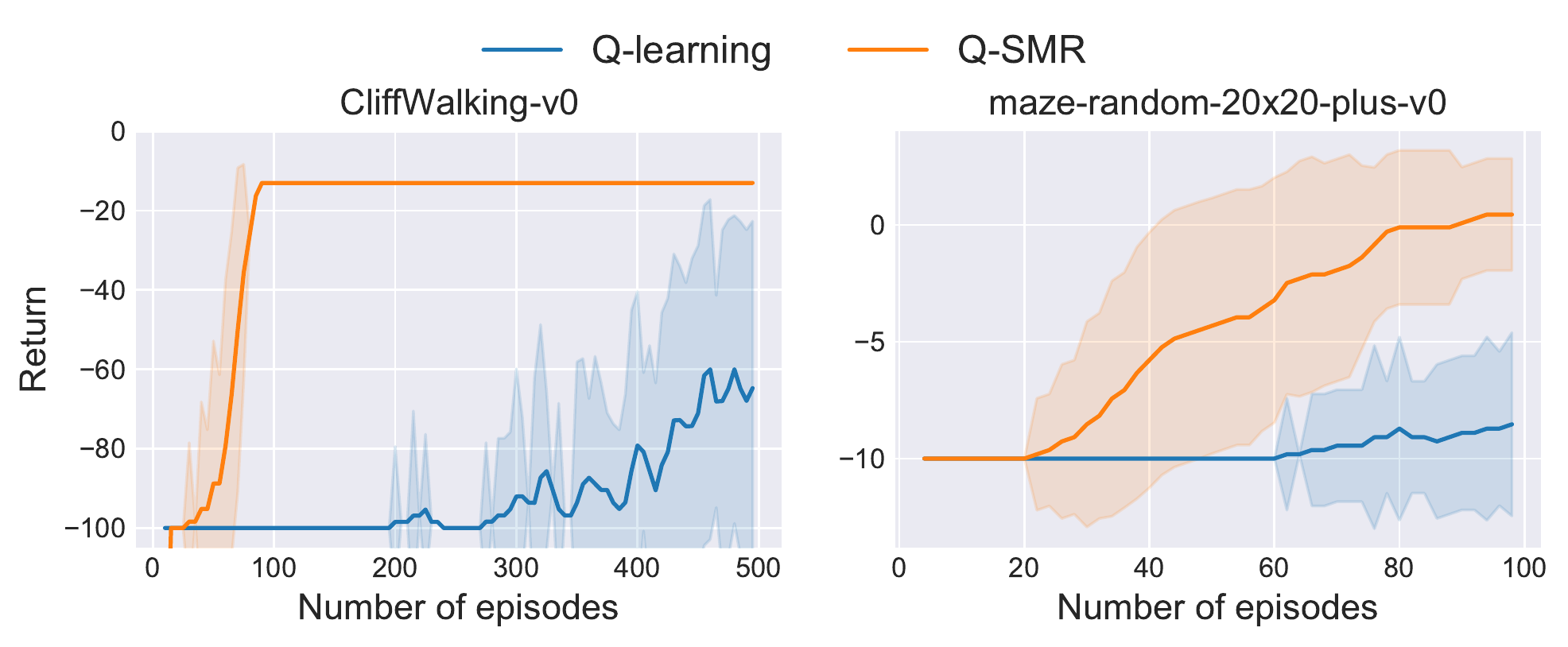}
\captionof{figure}{Comparison of Q-SMR and Q-learning on CliffWalking-v0 and maze-random-20x20-plus-v0 tasks from Gym \cite{Brockman2016OpenAIG}. The results are averaged over 20 independent runs, and the shaded region is the standard deviation.}
\label{fig:q-smrcliff}
\end{minipage}
\hfill
\begin{minipage}{0.47\textwidth}
\begin{algorithm}[H]
    \centering
    \caption{Off-policy actor-critic with SMR}\label{alg:ac-smr}
    \begin{algorithmic}[1]
        \STATE Initialize critic network parameter $\theta$, actor network parameter $\phi$ with random parameters.\\
        \STATE Initialize target critic network parameter $\theta^\prime\leftarrow \theta$.
        \STATE Initialize empty replay buffer $\mathcal{B}=\{\}$. \\
        \STATE (Optional) Initialize target actor network parameter $\phi^\prime\leftarrow \phi$.
        \FOR{$t$ = 1 to $T$}
        \STATE Choose action $a$ and observe reward $r$, next state $s^\prime$.
        \STATE Store the transition in the replay buffer, i.e., $\mathcal{B}\leftarrow \mathcal{B}\cup \{(s,a,r,s^\prime)\}$.
        \STATE Sample $N$ transitions $\{(s_j,a_j,r_j,s^\prime_j)\}_{j=1}^{N}$ from $\mathcal{B}$.
        \color{red}
        \FOR{$m$ = 1 to $M$}
        \STATE Update critic by minimizing Bellman error.
        \STATE Update actor with policy gradient.
        \STATE Update target network.
        \ENDFOR
        \ENDFOR
    \end{algorithmic}
\end{algorithm}
\end{minipage}




Moreover, our method can also be incorporated with any off-policy (deep) RL algorithms, and the experimental results in Figure \ref{fig:q-smrcliff} shed light on doing so. We detail the (abstracted) off-policy actor-critic with SMR in Algorithm \ref{alg:ac-smr}. Compared to typical actor-critic methods, our revised algorithm only enforces the agent to train on identical batch data multiple times. This requires a minimal change to the base algorithm, which can be completed by modifying a few lines of code. We defer the detailed pseudo-code of various off-policy algorithms with SMR in Appendix \ref{sec:pseudocodes}.



\section{Theoretical Analysis}
In this section, we aim at showing the theoretical properties of the Q-SMR algorithm in the tabular case. The theoretical guarantee of Q-SMR can pave the way for applying SMR in complex continuous control tasks. All missing proofs can be found in Appendix \ref{sec:missingproof}.

We consider asynchronous Q-learning \cite{EvenDar2004LearningRF, Li2020SampleCO} which follows the update rule:
\begin{equation}
    \begin{aligned}
    &Q_{t+1}(s_t,a_t) = (1-\alpha_t)Q_t(s_t,a_t)+\alpha_t \mathcal{T}_{t+1}Q_{t}(s_t,a_t), \\
    &Q_{t+1}(s,a) = Q_t(s,a) \quad \forall\, (s,a)\neq (s_t,a_t),
    \end{aligned}
\end{equation}
where $\mathcal{T}_{t+1}Q_{t}(s_t,a_t)=r_t+\gamma\max_{a^\prime\in\mathcal{A}}Q_t(s_{t+1},a^\prime)$. We have access to a sample trajectory $\{s_t,a_t,r_t\}_{t=0}^\infty$ from a behavior policy $\pi_b$, and we only update one $(s,a)$-entry each step here. 

Given the SMR ratio $M$, we define $Q_t^{(i)}(s,a), i\in[1,M]$ as the intermediate $Q$-function at timestep $t$ and iteration $i$. The resulting $Q$-function after SMR iteration is $Q_t(s,a)$ where we omit superscript $(M)$ for $Q_t(s,a)$. We define $Q_{t+1}^{(0)}(s,a)=Q_t^{(M)}(s,a)$. We first give the update rule for Q-SMR that is equivalent to the loop (line 4-6 in Algorithm \ref{alg:q-smr}) in Theorem \ref{theo:updaterule}.

\begin{theorem}
\label{theo:updaterule}
The update rule of Q-SMR is equivalent to:
\begin{equation}
    \begin{aligned}
    &Q_{t+1}(s_t,a_t) = (1-\alpha_t)^M Q_{t}(s_t,a_t)  + \sum_{i=0}^{M-1} \alpha_t (1-\alpha_t)^i \mathcal{T}_{t+1}Q_{t+1}^{(M-1-i)}(s_t,a_t), \\
    &Q_{t+1}(s,a) = Q_t(s,a) \quad \forall\, (s,a)\neq (s_t,a_t),
    \end{aligned}
\end{equation}
where $\mathcal{T}_{t+1}Q_{t+1}(s_t,a_t) = r_t + \gamma\max_{a^\prime\in\mathcal{A}}Q_{t+1}(s_{t+1},a^\prime)$ denotes the empirical Bellman operator w.r.t. timestep $t+1$.
\end{theorem}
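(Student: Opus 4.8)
The plan is to treat the inner loop (lines~4--6 of Algorithm~\ref{alg:q-smr}) as an $M$-fold composition of the one-step asynchronous Q-learning update applied to the single entry $(s_t,a_t)$, with the sampled transition $(s_t,a_t,r_t,s_{t+1})$ held fixed throughout, and to solve the resulting scalar recursion in closed form. Concretely, abbreviating $\alpha\equiv\alpha_t$, by the definition of the intermediate iterates the $i$-th inner update ($i=1,\dots,M$) reads
\begin{equation}
Q_{t+1}^{(i)}(s_t,a_t) = (1-\alpha)\,Q_{t+1}^{(i-1)}(s_t,a_t) + \alpha\,\mathcal{T}_{t+1}Q_{t+1}^{(i-1)}(s_t,a_t),
\end{equation}
with the initialization $Q_{t+1}^{(0)}(s_t,a_t)=Q_t^{(M)}(s_t,a_t)=Q_t(s_t,a_t)$. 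Writing $x_i := Q_{t+1}^{(i)}(s_t,a_t)$ and $b_i := \mathcal{T}_{t+1}Q_{t+1}^{(i)}(s_t,a_t)$, this is simply the first-order linear recursion $x_i=(1-\alpha)x_{i-1}+\alpha b_{i-1}$ with $x_0=Q_t(s_t,a_t)$.

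The next step is to unroll this recursion. I would prove by induction on $M$ that $x_M = (1-\alpha)^M x_0 + \alpha\sum_{j=0}^{M-1}(1-\alpha)^{M-1-j}\,b_j$: the base case $M=1$ is the recursion itself, and for the inductive step one applies the recursion once more as $x_{M+1}=(1-\alpha)x_M+\alpha b_M$, substitutes the hypothesis for $x_M$, and collects the geometric weights, which shift cleanly to yield the $M+1$ case. Substituting back $x_0=Q_t(s_t,a_t)$ and $b_j=\mathcal{T}_{t+1}Q_{t+1}^{(j)}(s_t,a_t)$ and then reindexing the sum via $i=M-1-j$ produces exactly the first line of the claimed identity. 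For the second line it suffices to observe that each of the $M$ inner updates modifies only the entry $(s_t,a_t)$, so $Q_{t+1}^{(i)}(s,a)=Q_t(s,a)$ for every $i$ and every $(s,a)\neq(s_t,a_t)$, and in particular for $i=M$.

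I do not expect a substantive obstacle here — the argument is a routine unrolling of a linear recursion — so the only things that need care are bookkeeping. First, the index reversal between the inner-loop counter $i$ and the exponent of $(1-\alpha_t)$ in the final sum: one must check that it is $Q_{t+1}^{(M-1-i)}$ (the \emph{earlier} iterate) that multiplies $\alpha_t(1-\alpha_t)^i$, which is precisely what the substitution $i\mapsto M-1-j$ delivers. Second, one should keep in mind that the argument of $\mathcal{T}_{t+1}$ in each summand is the running iterate $Q_{t+1}^{(\cdot)}$ and not a frozen copy of $Q_t$, since when $s_{t+1}=s_t$ the bootstrap target $\max_{a'}Q_{t+1}^{(i)}(s_{t+1},a')$ can itself change across inner iterations; the identity is stated so as to be agnostic to this, recording the dependence only through the superscripts. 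Verifying the claim by hand for $M=2$ is a quick sanity check that all indices line up.
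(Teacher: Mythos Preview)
Your proposal is correct and takes essentially the same approach as the paper: both arguments proceed by induction on $M$, unrolling the one-step inner recursion $Q_{t+1}^{(i)}=(1-\alpha_t)Q_{t+1}^{(i-1)}+\alpha_t\mathcal{T}_{t+1}Q_{t+1}^{(i-1)}$ at the single entry $(s_t,a_t)$ and noting that all other entries are untouched. Your abstraction to the scalar recursion $x_i=(1-\alpha)x_{i-1}+\alpha b_{i-1}$ and the explicit reindex $i=M-1-j$ make the bookkeeping slightly cleaner than the paper's write-up, but the underlying argument is identical.
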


\noindent\textbf{Remark:} The update rule of Q-SMR relies on the intermediate value during the SMR iteration. The influence of the current Q-value in Q-SMR is reduced (as $(1-\alpha_t)^M\le (1-\alpha_t)$), and hence the value estimate can change faster by querying the maximal value. This we believe can partly explain the superior sample efficiency of Q-SMR against vanilla Q-learning depicted in Figure \ref{fig:q-smrcliff}.


\begin{assumption}
\label{ass:mdp}
Assume that $\forall t$, the reward signal is bounded, $|r_t|\le r_{\rm max}$.
\end{assumption}

We note that this is a widely used assumption, which can also be easily satisfied in practice as many reward functions are hand-crafted. We then show in Theorem \ref{theo:stability} that Q-SMR outputs a bounded value estimate throughout its iteration. 

\begin{theorem}[Stability]
\label{theo:stability}
Let Assumption \ref{ass:mdp} hold and assume the initial $Q$-function is set to be 0, then for any iteration $t$, the value estimate induced by Q-SMR, $\hat{Q}_t$, is bounded, i.e., $|\hat{Q}_t|\le \dfrac{r_{\rm max}}{1-\gamma},\forall t$.
\end{theorem}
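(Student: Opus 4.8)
The plan is to run a straightforward induction, but at the granularity of the individual single-step updates inside the SMR loop rather than at the outer timesteps, so that every intermediate iterate $Q_t^{(i)}$ is controlled simultaneously. Write $B := r_{\max}/(1-\gamma)$ and let $\|\cdot\|_\infty$ denote the sup-norm over all $(s,a)$-entries of a $Q$-table. I will assume, as is standard for Q-learning, that the learning rates satisfy $\alpha_t\in[0,1]$.

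First I would observe that one pass of the inner loop is exactly a single asynchronous Q-learning update on the fixed transition $(s_t,a_t,r_t,s_{t+1})$: for $i=0,\dots,M-1$,
$$Q_{t+1}^{(i+1)}(s_t,a_t) = (1-\alpha_t)\,Q_{t+1}^{(i)}(s_t,a_t) + \alpha_t\bigl(r_t + \gamma\max_{a'\in\mathcal A}Q_{t+1}^{(i)}(s_{t+1},a')\bigr),$$
with all other entries left unchanged, and with the bookkeeping convention $Q_{t+1}^{(0)}=Q_t^{(M)}=\hat Q_t$. The key elementary fact is the one-step invariance of the ball of radius $B$: if $\|Q\|_\infty\le B$, then by Assumption \ref{ass:mdp},
$$\bigl|r_t + \gamma\max_{a'}Q(s_{t+1},a')\bigr| \le r_{\max} + \gamma B = r_{\max} + \frac{\gamma r_{\max}}{1-\gamma} = B,$$
so the updated entry, being a convex combination (here $\alpha_t\in[0,1]$ is used) of $Q(s_t,a_t)$ and a quantity of magnitude at most $B$, again has magnitude at most $B$, while the untouched entries still have magnitude at most $B$. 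Hence $\|Q\|_\infty\le B$ is preserved by a single inner update.

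With this invariant in hand, the theorem follows by induction over the countable, linearly ordered sequence of all inner updates. The base case is the initialization $\hat Q_0 = Q_0\equiv 0$, for which $\|\hat Q_0\|_\infty = 0\le B$. For the inductive step, each inner update carries an iterate of sup-norm at most $B$ to one of sup-norm at most $B$ by the fact above; chaining $M$ such updates across the inner loop and then passing $Q_{t+1}^{(0)}=Q_t^{(M)}$ into the next outer step gives $\|Q_t^{(i)}\|_\infty\le B$ for every $t$ and every $i\in\{0,\dots,M\}$. In particular $\|\hat Q_t\|_\infty = \|Q_t^{(M)}\|_\infty \le B = r_{\max}/(1-\gamma)$ for all $t$, which is the claim.

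If one prefers to work directly from the closed-form rule of Theorem \ref{theo:updaterule}, the same bound drops out from the identity $(1-\alpha_t)^M + \alpha_t\sum_{i=0}^{M-1}(1-\alpha_t)^i = 1$ (a finite geometric sum), which makes $Q_{t+1}(s_t,a_t)$ a convex combination of terms each bounded by $B$ — but only once one has already established inductively that all the intermediate iterates $Q_{t+1}^{(M-1-i)}$ appearing in the sum obey the bound. I expect that this is the only subtlety worth flagging: because the recursion in Theorem \ref{theo:updaterule} references intermediate values, the induction must be organized to cover all inner iterates at once rather than only the post-loop tables $\hat Q_t$; after that, the arithmetic is routine.
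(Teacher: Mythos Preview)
Your proof is correct. The paper argues by induction on the outer timestep $t$, invoking the closed-form update rule of Theorem~\ref{theo:updaterule} and the geometric identity $(1-\alpha_t)^M+\alpha_t\sum_{i=0}^{M-1}(1-\alpha_t)^i=1$ to read off that $\hat Q_{t+1}(s_t,a_t)$ is a convex combination of terms of magnitude at most $B$; in doing so it silently uses $|\max_{a'}\hat Q_{t+1}^{(M-1-i)}(s_{t+1},a')|\le B$ for the intermediate iterates without separately justifying it. Your approach instead runs the induction at the finer granularity of single inner updates, proving the one-step invariance of the ball $\{\|Q\|_\infty\le B\}$ and chaining through all $M$ inner passes. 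This is more elementary (it does not rely on Theorem~\ref{theo:updaterule}) and, as you correctly flag, it cleanly handles the intermediate tables $Q_{t+1}^{(i)}$---precisely the point the paper's presentation glosses over in the returnable case $s_{t+1}=s_t$. The two arguments are close cousins, but yours is the tidier version.
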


We further show that the Q-SMR algorithm is guaranteed to converge to the optimal Q-value, which reveals the rationality of utilizing the Q-SMR algorithm in practice and paves the way for extending the Q-SMR algorithm into deep RL scenarios.
\begin{theorem}[Convergence]
\label{theo:convergence}
Under some mild assumptions that are similar to \cite{Fujimoto2018AddressingFA,melo2001convergence}, the Q-SMR algorithm converges to the optimal $Q$-function.
\end{theorem}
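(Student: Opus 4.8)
The plan is to reduce Theorem~\ref{theo:convergence} to the classical stochastic-approximation contraction lemma for $Q$-learning, in the form used in \cite{melo2001convergence} and in the convergence argument of \cite{Fujimoto2018AddressingFA}. Concretely, one works under the standard ``mild assumptions'': the MDP is finite, every state--action pair is visited infinitely often, and the learning rates obey $\alpha_t(s,a)\in[0,1]$, $\sum_t\alpha_t(s,a)=\infty$, $\sum_t\alpha_t^2(s,a)<\infty$ almost surely. Recall the lemma: if a process $\Delta_{t+1}(x)=(1-\beta_t(x))\Delta_t(x)+\beta_t(x)F_t(x)$ on a finite set satisfies (i) $\beta_t\in[0,1]$ with $\sum_t\beta_t=\infty$, $\sum_t\beta_t^2<\infty$; (ii) $\|\mathbb{E}[F_t\mid\mathcal{F}_t]\|_\infty\le\gamma\|\Delta_t\|_\infty+c_t$ with $c_t\to0$ a.s.; and (iii) $\mathrm{Var}[F_t\mid\mathcal{F}_t]\le K(1+\|\Delta_t\|_\infty)^2$, then $\Delta_t\to0$ a.s. It thus suffices to cast Q-SMR into this template with $\Delta_t=Q_t-Q^*$ and check (i)--(iii).

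For the reduction I would use the equivalent update of Theorem~\ref{theo:updaterule}. Introducing the \emph{effective} learning rate $\beta_t:=1-(1-\alpha_t)^M=\sum_{i=0}^{M-1}\alpha_t(1-\alpha_t)^i$, the update on the visited entry reads
\[
Q_{t+1}(s_t,a_t)=(1-\beta_t)Q_t(s_t,a_t)+\beta_t Y_t,\qquad
Y_t:=\frac{1}{\beta_t}\sum_{i=0}^{M-1}\alpha_t(1-\alpha_t)^i\,\mathcal{T}_{t+1}Q_{t+1}^{(M-1-i)}(s_t,a_t),
\]
so that $\Delta_{t+1}=(1-\beta_t)\Delta_t+\beta_t F_t$ with $F_t:=Y_t-Q^*(s_t,a_t)$ a convex combination of $M$ empirical Bellman backups evaluated at the intermediate iterates, minus $Q^*$; all other entries are unchanged, consistent with $\beta_t=0$ there. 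Since $\alpha_t\le\beta_t\le M\alpha_t$, the schedule $\{\beta_t\}$ inherits $\beta_t\in[0,1]$, $\sum_t\beta_t=\infty$, $\sum_t\beta_t^2\le M^2\sum_t\alpha_t^2<\infty$ and is $\mathcal{F}_t$-measurable, giving (i); and (iii) is immediate because, by the stability guarantee of Theorem~\ref{theo:stability}, every intermediate iterate lies in $[-r_{\max}/(1-\gamma),\,r_{\max}/(1-\gamma)]$, so $|F_t|$ is uniformly bounded (if one only assumes bounded reward variance, an analogous bound via $\mathrm{Var}[r_t\mid\mathcal{F}_t]$ works).

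The crux is condition (ii), and this is exactly where Q-SMR departs from vanilla $Q$-learning: the intermediate iterates $Q_{t+1}^{(M-1-i)}$ inside $F_t$ are built from the \emph{same} sampled transition $(r_t,s_{t+1})$, so the clean identity $\mathbb{E}[\mathcal{T}_{t+1}Q\mid\mathcal{F}_t]=\mathcal{T}Q$ is not directly available. I would resolve this with a pathwise perturbation estimate: the inner loop alters only the entry $(s_t,a_t)$, and does so through a convex combination whose total mass toward the (bounded) backup target is $\beta_t\le M\alpha_t$, so with Theorem~\ref{theo:stability} one gets $\|Q_{t+1}^{(k)}-Q_t\|_\infty\le \frac{2Mr_{\max}}{1-\gamma}\alpha_t=:\epsilon_t$ for every $k\in\{0,\dots,M\}$, deterministically. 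Hence $\|Q_{t+1}^{(k)}-Q^*\|_\infty\le\|\Delta_t\|_\infty+\epsilon_t$ pathwise; substituting this sandwich into $\mathbb{E}[\,r_t+\gamma\max_{a'}Q_{t+1}^{(k)}(s_{t+1},a')\mid\mathcal{F}_t\,]$ and using $Q^*(s_t,a_t)=r(s_t,a_t)+\gamma\mathbb{E}_{s'}[\max_{a'}Q^*(s',a')]$ yields $|\mathbb{E}[F_t\mid\mathcal{F}_t]|\le\gamma(\|\Delta_t\|_\infty+\epsilon_t)$. Since $\sum_t\alpha_t^2<\infty$ forces $\alpha_t\to0$, we have $c_t:=\gamma\epsilon_t\to0$ a.s., so (ii) holds with modulus $\gamma<1$, and the lemma gives $\Delta_t\to0$, i.e. $Q_t\to Q^*$ almost surely. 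I expect the pathwise control of the coupled intermediate iterates in this last step to be the main obstacle; the remainder is a faithful re-run of the standard $Q$-learning proof with $\alpha_t$ replaced by the effective rate $\beta_t$.
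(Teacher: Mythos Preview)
Your proposal is correct and shares the paper's overall skeleton: both cast Q-SMR into the stochastic-approximation lemma (the paper cites the version from \cite{Singh2000ConvergenceRF}) with the effective learning rate $\beta_t=1-(1-\alpha_t)^M$, and both verify the Robbins--Monro conditions on $\beta_t$ via the sandwich $\alpha_t\le\beta_t\le M\alpha_t$ (which the paper records as a separate lemma).

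Where you differ is in the treatment of condition~(ii) for the general update, i.e., how to handle the fact that the intermediate iterates $Q_{t+1}^{(k)}$ are built from the same sample $(r_t,s_{t+1})$. The paper takes an indirect route: it first establishes convergence of the \emph{simplified} rule $Q_{t+1}=(1-\beta_t)Q_t+\beta_t\,\mathcal{T}_{t+1}Q_t$ (essentially Corollary~\ref{coro:simpleupdaterule}, where only $Q_t$ appears in the target and (ii) is the standard $Q$-learning contraction), and then squeezes the full Q-SMR iterate between upper and lower envelopes obtained by replacing the convex combination of intermediate backups with $\max_i$ and $\min_i$, asserting that both envelopes converge to $Q^*$ ``by following the same analysis.'' Your route is more direct: you keep the full convex combination, bound the drift of the intermediates pathwise by $\|Q_{t+1}^{(k)}-Q_t\|_\infty\le 2M\alpha_t\,r_{\max}/(1-\gamma)$ via Theorem~\ref{theo:stability}, and absorb that $O(\alpha_t)$ slack into the vanishing term $c_t$, using that $\sum_t\alpha_t^2<\infty$ forces $\alpha_t\to 0$. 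This avoids the detour through the simplified rule and makes explicit the mechanism by which the sample-coupling is controlled; the paper's final sandwich step is terse and, to be made fully rigorous, would arguably need a perturbation bound of exactly the kind you supply.
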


Interestingly, we can establish a connection between modified learning rate and SMR update rule by assuming that the underlying MDP is \textit{nonreturnable}, i.e., $s_{t+1}\neq s_t$. Then, the rule can be simplified.

\begin{corollary}
\label{coro:simpleupdaterule}
If the MDP is nonreturnable, the update rule of Q-SMR gives:
\begin{equation}
    \label{eq:simpleupdaterule}
    \begin{aligned}
    &Q_{t+1}(s_t,a_t) = (1-\alpha_t)^M Q_{t}(s_t,a_t) + \left[ 1-(1-\alpha_t)^M \right] \mathcal{T}_{t+1}Q_t(s_t,a_t), \\
    &Q_{t+1}(s,a) = Q_t(s,a) \quad \forall\, (s,a)\neq (s_t,a_t).
    \end{aligned}
\end{equation}
\end{corollary}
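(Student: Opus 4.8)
The plan is to start from the equivalent update rule established in Theorem \ref{theo:updaterule} and show that the nonreturnable assumption collapses all the intermediate Bellman targets $\mathcal{T}_{t+1}Q_{t+1}^{(M-1-i)}(s_t,a_t)$ into a single one, namely $\mathcal{T}_{t+1}Q_t(s_t,a_t)$. First I would observe that during the inner SMR loop at timestep $t$, the only table entry that ever gets modified is $(s_t,a_t)$; every other entry, in particular $(s_{t+1},a^\prime)$ for all $a^\prime\in\mathcal{A}$, is left untouched throughout the $M$ iterations. Hence $Q_{t+1}^{(j)}(s_{t+1},a^\prime)=Q_t(s_{t+1},a^\prime)$ for every $j\in\{0,\dots,M\}$ and every $a^\prime$ — and this requires knowing that $(s_{t+1},a^\prime)\neq(s_t,a_t)$, which is exactly guaranteed because $s_{t+1}\neq s_t$ under the nonreturnable assumption.

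Next I would use the fact that the empirical Bellman operator $\mathcal{T}_{t+1}Q(s_t,a_t)=r_t+\gamma\max_{a^\prime\in\mathcal{A}}Q(s_{t+1},a^\prime)$ depends on $Q$ only through its values at state $s_{t+1}$. Combining this with the previous step gives $\mathcal{T}_{t+1}Q_{t+1}^{(M-1-i)}(s_t,a_t)=\mathcal{T}_{t+1}Q_t(s_t,a_t)$ for all $i\in\{0,\dots,M-1\}$, so in the sum $\sum_{i=0}^{M-1}\alpha_t(1-\alpha_t)^i\,\mathcal{T}_{t+1}Q_{t+1}^{(M-1-i)}(s_t,a_t)$ the Bellman target is constant in $i$ and can be factored out.

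It then remains to evaluate the geometric sum $\alpha_t\sum_{i=0}^{M-1}(1-\alpha_t)^i=\alpha_t\cdot\frac{1-(1-\alpha_t)^M}{1-(1-\alpha_t)}=1-(1-\alpha_t)^M$ (the degenerate case $\alpha_t=0$ being trivial), which, substituted back into the rule from Theorem \ref{theo:updaterule}, immediately yields $Q_{t+1}(s_t,a_t)=(1-\alpha_t)^M Q_t(s_t,a_t)+\bigl[1-(1-\alpha_t)^M\bigr]\mathcal{T}_{t+1}Q_t(s_t,a_t)$; the off-entry identity $Q_{t+1}(s,a)=Q_t(s,a)$ for $(s,a)\neq(s_t,a_t)$ carries over verbatim. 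The only point that requires genuine care is the bookkeeping in the first step — checking that nonreturnability really does prevent the SMR inner loop from ever altering the bootstrap target $\max_{a^\prime}Q(s_{t+1},a^\prime)$ — which is precisely where the hypothesis is used; everything after that is a routine geometric-series computation.
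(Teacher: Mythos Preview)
Your proposal is correct and follows essentially the same approach as the paper's proof: both start from Theorem~\ref{theo:updaterule}, observe that the asynchronous update only touches the $(s_t,a_t)$ entry so that nonreturnability ($s_{t+1}\neq s_t$) freezes the bootstrap target $\max_{a'}Q(s_{t+1},a')$ throughout the inner loop, and then collapse the resulting constant-target sum via the geometric series $\alpha_t\sum_{i=0}^{M-1}(1-\alpha_t)^i=1-(1-\alpha_t)^M$. If anything, you are more explicit than the paper about exactly where the nonreturnable hypothesis enters.
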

\noindent\textbf{Remark:} Compared to vanilla Q-learning, this rule actually \textit{modifies} the learning rate from $\alpha_t$ to $1-(1-\alpha_t)^M$. Since $\alpha_t\in[0,1]$, it is easy to see $1 - (1-\alpha_t)^M\in[0,1],\forall\,t$.

Furthermore, we can derive the finite time error bound of the Q-SMR algorithm based on the above corollary, which improves over the prior results \cite{Szepesvari1997TheAC, EvenDar2004LearningRF,Qu2020FiniteTimeAO}. Please refer to Appendix \ref{sec:additionaltheory} for more details and discussions.


\section{Experiments}
\begin{figure*}[!htb]
    \centering
    \includegraphics[width=0.95\linewidth]{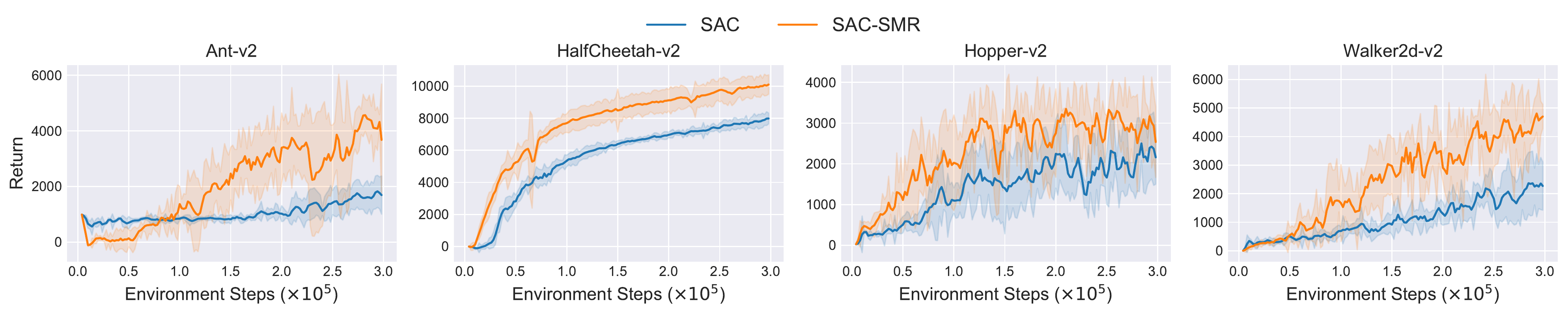}
    \includegraphics[width=0.95\linewidth]{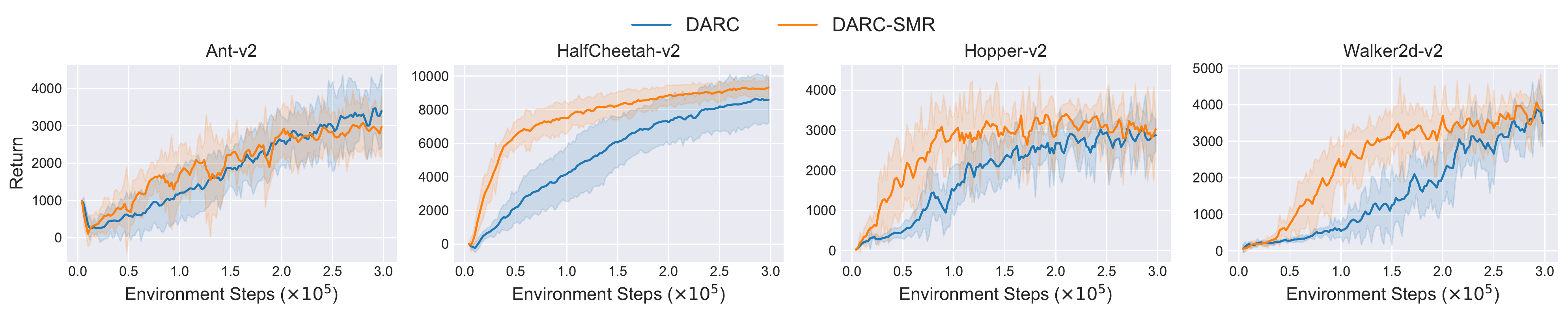}
    \includegraphics[width=0.95\linewidth]{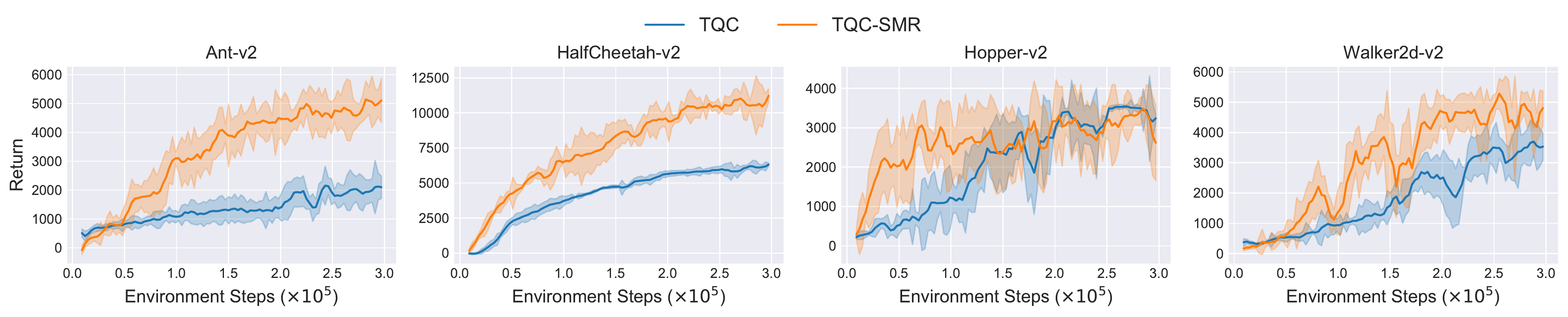}
    \includegraphics[width=0.95\linewidth]{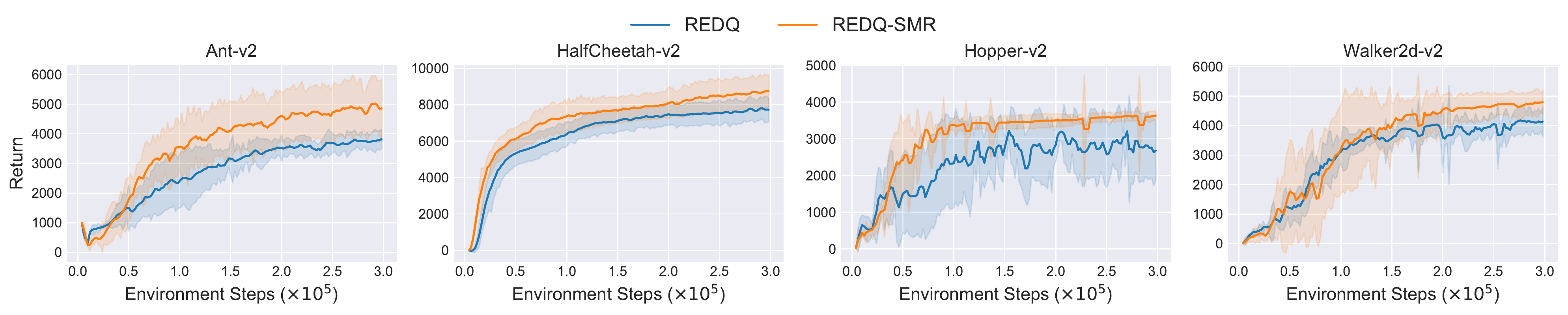}
    \caption{Experimental results of four typical continuous control algorithms with and w/o SMR on four OpenAI Gym \cite{Brockman2016OpenAIG} environments. The results are averaged over 6 independent runs. The shaded region denotes the standard deviation.}
    \label{fig:gymresult}
\end{figure*}

In this section, we investigate the benefits of SMR upon off-the-shelf off-policy continuous control RL algorithms. We aim at answering the following questions: (1) is the SMR general enough to benefit wide off-policy RL algorithms? (2) how much performance gain can off-policy RL algorithms acquire by using SMR? 

In order to show the strong data exploitation ability and the generality of SMR, we combine SMR with TD3 \cite{Fujimoto2018AddressingFA}, SAC \cite{haarnoja2018softactorcritic}, DARC \cite{Efficient2022Lyu}, TQC \cite{Kuznetsov2020ControllingOB}, and REDQ \cite{Chen2021RandomizedED}. We choose these methods as they typically represent different categories of continuous control algorithms, i.e., TD3 leverages clipped double Q-learning, SAC is based on the maximum entropy RL, DARC enhances the agent's exploration capability by using double actors, TQC addresses overestimation by incorporating distributional RL into the continuous setting, and REDQ is the state-of-the-art model-free RL method which trains critic ensemble and uses a high update-to-data (UTD) ratio.

Besides the loop of reusing samples (line 5-9 in Algorithm \ref{alg:ac-smr}), we do not make any additional modifications (e.g., parameter tuning) to the base algorithm. We run experiments on four continuous control tasks from OpenAI Gym \cite{Brockman2016OpenAIG} simulated by MuJoCo \cite{Todorov2012MuJoCoAP}. All methods are run for 300K online interactions where we adopt the SMR ratio $M=10$ by default except REDQ where we set $M=5$ (as REDQ already uses a large UTD ratio). We note that 300K is a typical interaction step adopted widely in prior work \cite{Chen2021RandomizedED,Janner2019WhenTT,Hansen2022TemporalDL} for examining sample efficiency.

Each algorithm is repeated with 6 random seeds and evaluated over 10 trials every 1000 timesteps. We find that SMR significantly improves the sample efficiency of the base algorithms on almost every task, often outperforming them by a large margin (see Figure \ref{fig:gymresult}). SAC-SMR achieves 4x and TQC-SMR has 3x sample efficiency than the base algorithm as shown in Table \ref{tab:sampleefficiency}. Notably, SAC-SMR takes only 93K online interactions to reach 3000 in Hopper-v2, and TQC-SMR takes merely 34K online interactions. The results even match the performance of MBPO \cite{Janner2019WhenTT} (around 73K). We show in Appendix \ref{sec:missingexperiments} that other off-policy RL algorithms like DDPG, DrQ-v2 \cite{yarats2022mastering} can benefit from SMR as well. These altogether reveal that {\color{red} the advantage of SMR is algorithm-agnostic.}

\begin{figure*}[!htb]
    \centering
    \vspace{-0.2cm}
    \includegraphics[width=0.95\linewidth]{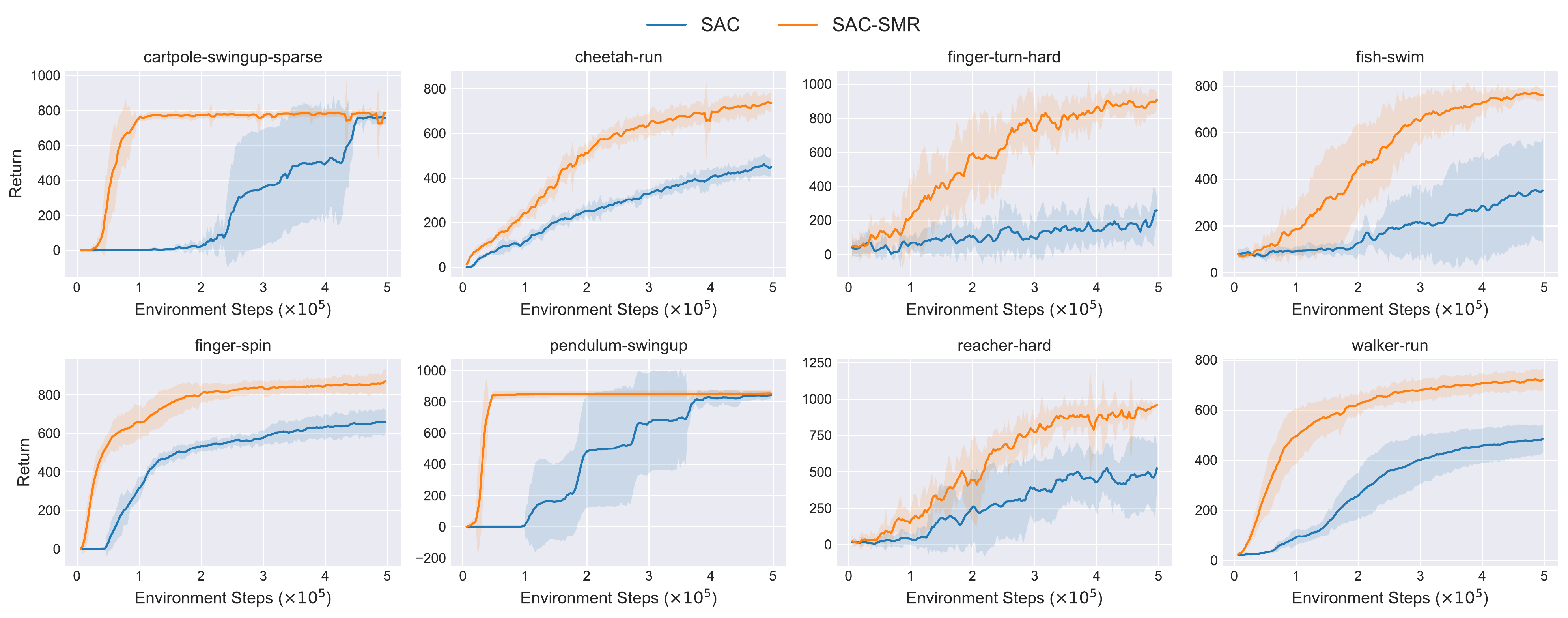}
    \includegraphics[width=0.95\linewidth]{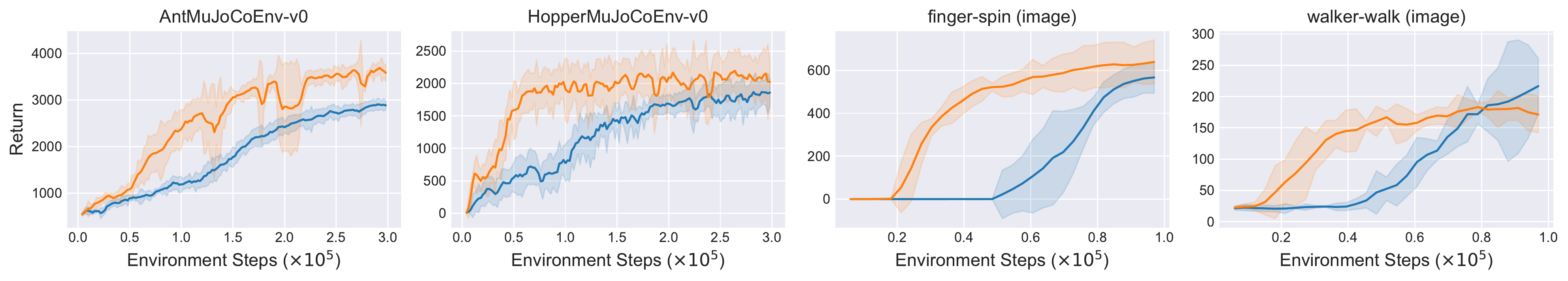}
    \caption{Experimental results of SAC-SMR against vanilla SAC on 8 state-based, 2 image-based DMC suite \cite{Tassa2018DeepMindCS} tasks and 2 PyBullet-Gym \cite{benelot2018} tasks. The results are averaged over 6 seeds. The shaded region captures the standard deviation.}
    \label{fig:sacdmc8}
\end{figure*}

\begin{table}[!htb]
  \vspace{-0.7cm}
  \caption{Sample efficiency comparison. We choose SAC, TQC and DARC as examples. The numbers indicate the number of online interactions when the specified performance level is reached.}
  \renewcommand\arraystretch{1.05}
  \label{tab:sampleefficiency}
  \setlength{\tabcolsep}{4pt}
  \centering
  \small
  \begin{tabular}{l|ll|ll|ll}
    \toprule
    Score  & SAC & SAC-SMR & TQC & TQC-SMR & DARC & DARC-SMR \\
    \midrule
    Hopper@3000 & 373K & \textbf{93K} & 160K & \textbf{34K} & 205K & \textbf{67K} \\
    Ant@4000 & 982K & \textbf{211K} & 469K & \textbf{135K} & 324K & \textbf{305K} \\
    HalfCheetah@10000 & 860K & \textbf{282K} & 576K & \textbf{185K} & 407K & \textbf{324K} \\  
    Walker2d@4000 & 656K & \textbf{164K} & 281K & \textbf{133K} & 292K & \textbf{264K} \\
    \bottomrule
  \end{tabular}
\end{table}

We further combine SMR with SAC and run SAC-SMR extensively on two additional continuous control benchmarks, DMC suite \cite{Tassa2018DeepMindCS} and PyBullet-Gym \cite{benelot2018}. We conduct experiments on 20 DMC suite tasks, 4 PyBullet-Gym tasks, and 6 image-based tasks from DMC suite, yielding a total of \textbf{30} tasks. For state-based tasks, we use $M=10$ and run for 500K interactions. For image-based tasks, as it is very time-consuming with $M=10$, we use $M=5$, which we find is sufficient to show the advantage of SMR. Both SAC and SAC-SMR are evaluated over 10 trials every 1000 timesteps. It can be seen in Figure \ref{fig:sacdmc8} that SMR significantly boosts the sample efficiency of SAC on the evaluated tasks. This can also be validated from Table \ref{tab:performance} where SAC-SMR achieves 2.5x the performance of SAC at 250K and 2.0x the performance of SAC at 500K when averaging the numbers.

Due to the space limit, we defer some results to Appendix \ref{sec:missingexperiments} and only report a small proportion of tasks here. These experimental results show that {\color{red} the advantage of SMR is task-agnostic.} In summary, we believe the above evidence is enough to verify the generality and effectiveness of SMR.

\begin{table}
  \caption{Performance comparison of SAC, SAC-UTD (UTD ratio $G$=10) and SAC-SMR. We choose \texttt{cheetah-run} and \texttt{fish-swim} as examples. The numbers indicate the performance achieved when the specific number of data is collected. $\pm$ captures the standard deviation.}
  \vspace{0.2cm}
  \renewcommand\arraystretch{1.05}
  \label{tab:performance}
  \centering
  \small
  \begin{tabular}{l|lll}
    \toprule
    Amount of data  & SAC & SAC-UTD & SAC-SMR \\
    \midrule
    cheetah-run@250K & 284.6$\pm$20.5 & 434.1$\pm$72.6 & \textbf{600.1}$\pm$49.2 \\
    fish-swim@250K & 178.5$\pm$113.9 & 382.5$\pm$70.7 & \textbf{544.3}$\pm$184.4 \\  
    cheetah-run@500K & 452.1$\pm$47.7 & 633.9$\pm$99.1 & \textbf{725.4}$\pm$48.7 \\
    fish-swim@500K & 324.8$\pm$213.9 & 712.0$\pm$41.9 & \textbf{756.3}$\pm$38.7 \\
    \bottomrule
  \end{tabular}
\end{table}

\noindent\textbf{Parameter Study.} The most critical hyperparameter in our method is the SMR ratio. It controls the frequency we reuse a fixed batch. Intuitively, we ought not to use too large $M$ to prevent potential overfitting in neural networks.  For state-based tasks, we find that setting $M=10$ can incur very satisfying performance. In order to see the influence of the SMR ratio $M$, we conduct experiments on Ant-v2 and HalfCheetah-v2 from OpenAI Gym \cite{Brockman2016OpenAIG}. We sweep $M$ across $\{1,2,5,10,20\}$ and demonstrate in Figure \ref{fig:smr-ratio} that SMR can improve the sample efficiency of the base algorithm even with a small $M=2$, and the sample efficiency generally increases with larger $M$. We do not bother tuning $M$ and keep it fixed across our experiments.

\noindent\textbf{Computation Budget.} SMR consumes more computation budget than its base algorithm due to multiple updates on the fixed batch. Intuitively, our method will require more training time with a larger SMR ratio $M$. Typically, SMR ($M=10$) will take about 3-5 times of more training time, e.g., SAC-SMR takes around 6 hours for 300K interactions on Walker2d-v2, while SAC takes around 1.5 hours. Such cost is tolerable for \textit{state-based} tasks considering the superior sample efficiency improvement with SMR.

\noindent\textbf{Clarification on the Asymptotic Performance.} As we focus on improving the sample efficiency, the asymptotic performance of SMR upon different base methods lies out of the scope of this work. Nevertheless, readers of interest can find that the asymptotic performance of SMR is quite good (please refer to Appendix \ref{sec:smrlonger} where we run SMR upon different algorithms for longer interactions).


\begin{figure}[!htb]
    \centering
    \includegraphics[width=0.85\linewidth]{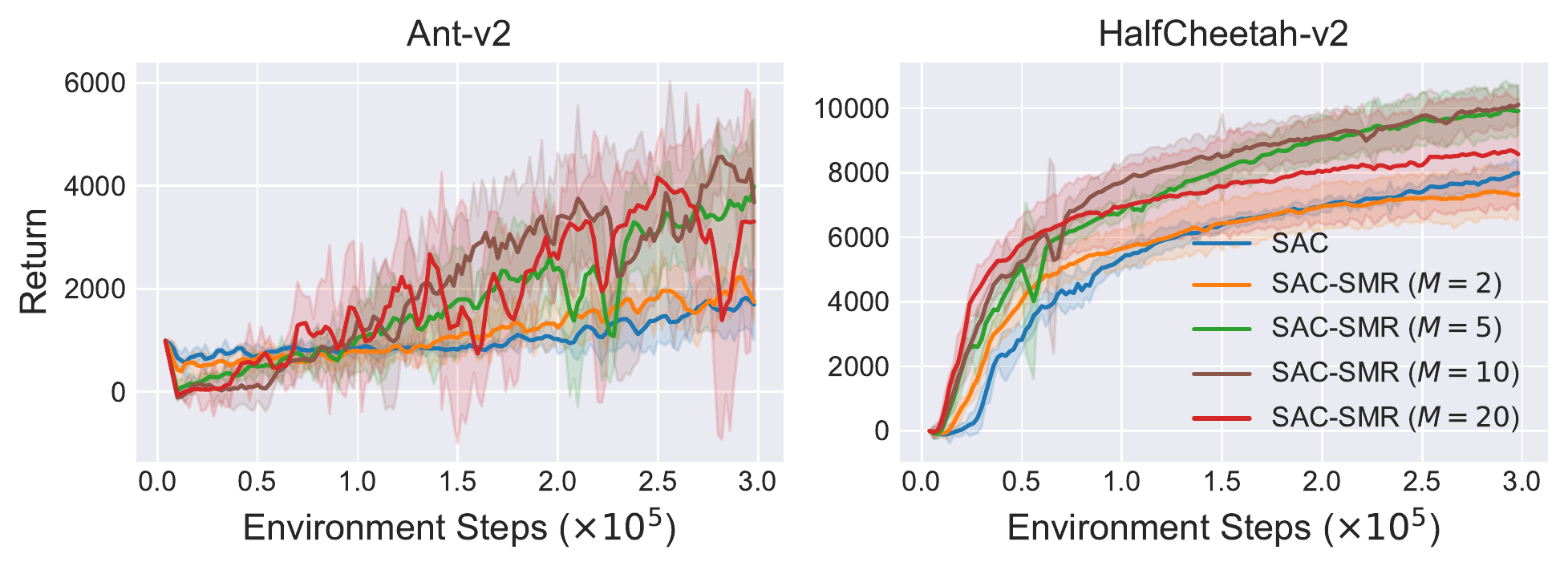}
    \caption{The performance of SAC-SMR under different SMR ratios on two selected environments. The results are averaged over 6 runs and the shaded area captures the standard deviation.}
    \label{fig:smr-ratio}
    \vspace{-0.5cm}
\end{figure}

\section{Discussions}

\subsection{Is SMR equivalent to enlarging learning rate?}
One may think that SMR is equivalent to amplifying learning rate $M$ times at first sight, i.e., $\alpha_t \rightarrow M\alpha_t$. Whereas, we argue that they are \textit{quite different}. In the tabular case, we show in Theorem \ref{theo:updaterule} the update rule for Q-SMR, which is obviously not the rule that enlarges the original learning rate sequence $M$ times. In deep RL, suppose the (single) critic and actor are parameterized by $\theta$ and $\phi$, respectively. The objective function of the critic gives:
\begin{equation}
    \label{eq:criticobjective}
    \mathcal{L}_\theta = \mathbb{E}_{s,a,s^\prime\sim\rho}\left[ (Q_\theta(s,a) - r - \gamma Q_{\theta^\prime}(s^\prime, a^\prime))^2 \right],
\end{equation}
where $a^\prime\sim\pi_\phi$, $\rho$ is the sample distribution in the replay buffer, $\theta^\prime$ is the parameter of the target network. Deep neural networks are typically trained with stochastic gradient descent (SGD) \cite{LeCun2015DeepL, loshchilov2017sgdr, Bottou2010LargeScaleML}. The critic is optimized using the gradient information $\nabla \mathcal{L}_{\theta_t}$ obtained on the $t$-th batch, i.e., $\theta_{t+1} = \theta_t - \alpha_t \nabla \mathcal{L}_{\theta_t}$. We then show that, in deep RL, SMR is also not equivalent to enlarging learning rate.
\begin{theorem}
\label{theo:difference}
    Denote $\theta_t^{(i)}$ as the intermediate parameter in the SMR loop at timestep $t$ and iteration $i$, then in deep RL, the parameter update using SMR satisfies:
    \begin{equation}
    \label{eq:smrlearningrate}
    \theta_{t+1} = \theta_t - {\color{red} \alpha_t \sum_{i=0}^{M-1} \nabla\mathcal{L}_{\theta_{t+1}^{(i)}}} \neq \theta_t - {\color{red} M\alpha_t \nabla \mathcal{L}_{\theta_t}}.
    \end{equation}
\end{theorem}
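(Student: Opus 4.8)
The plan is to unroll the SMR inner loop explicitly and track how the parameter $\theta_t$ evolves across the $M$ gradient steps on the fixed batch. First I would set up notation: let $\theta_{t+1}^{(0)} = \theta_t$ denote the parameter entering the loop, and for $i = 0, \dots, M-1$ let the $(i{+}1)$-th SGD update be $\theta_{t+1}^{(i+1)} = \theta_{t+1}^{(i)} - \alpha_t \nabla \mathcal{L}_{\theta_{t+1}^{(i)}}$, where $\nabla\mathcal{L}_{\theta_{t+1}^{(i)}}$ is the gradient of the critic objective in Equation~\eqref{eq:criticobjective} evaluated at the intermediate parameter $\theta_{t+1}^{(i)}$ but on the \emph{same} sampled batch. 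Then I would telescope this recursion from $i = 0$ to $i = M-1$: summing the increments gives $\theta_{t+1} = \theta_{t+1}^{(M)} = \theta_t - \alpha_t \sum_{i=0}^{M-1} \nabla\mathcal{L}_{\theta_{t+1}^{(i)}}$, which is exactly the left-hand expression in Equation~\eqref{eq:smrlearningrate}. This part is a routine telescoping argument and requires no assumptions beyond the definition of the SMR loop.

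The substantive part is establishing the inequality, i.e., that $\alpha_t \sum_{i=0}^{M-1} \nabla\mathcal{L}_{\theta_{t+1}^{(i)}} \neq M\alpha_t \nabla\mathcal{L}_{\theta_t}$ in general. The key observation is that the two expressions coincide only if $\nabla\mathcal{L}_{\theta_{t+1}^{(i)}} = \nabla\mathcal{L}_{\theta_t}$ for every $i$, which would require the gradient map $\theta \mapsto \nabla\mathcal{L}_\theta$ to be constant along the trajectory of iterates $\theta_t = \theta_{t+1}^{(0)}, \theta_{t+1}^{(1)}, \dots, \theta_{t+1}^{(M-1)}$. Since these iterates are genuinely distinct (as long as $\alpha_t > 0$ and $\nabla\mathcal{L}_{\theta_t} \neq 0$), and since $\mathcal{L}_\theta$ is a nonlinear function of $\theta$ for a neural-network parameterization (the quadratic Bellman error composed with a nonlinear $Q_\theta$ yields a non-affine loss, hence a non-constant gradient), the intermediate gradients differ from $\nabla\mathcal{L}_{\theta_t}$ in general. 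I would make this concrete either by a first-order expansion, $\nabla\mathcal{L}_{\theta_{t+1}^{(i)}} = \nabla\mathcal{L}_{\theta_t} - \alpha_t \nabla^2\mathcal{L}_{\theta_t} \big(\sum_{j<i}\nabla\mathcal{L}_{\theta_{t+1}^{(j)}}\big) + O(\alpha_t^2)$, showing the Hessian-dependent correction term is generically nonzero, or—more cleanly—by exhibiting a minimal counterexample (e.g., a scalar nonlinear $Q_\theta$) where the two sides are numerically unequal.

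The main obstacle, and the place where the statement needs care, is that the inequality is not universal: if $\mathcal{L}_\theta$ were affine in $\theta$ (e.g., a linear function approximator with a fixed target), then $\nabla\mathcal{L}_\theta$ would be \emph{constant} and the two sides would in fact be equal. So the claim must be read as ``not equal in general,'' and the honest proof is a counterexample rather than a universal identity; I would state this caveat explicitly and pick the counterexample to be as simple as possible while still being a legitimate instance of Equation~\eqref{eq:criticobjective}. A secondary subtlety is the target-network parameter $\theta^\prime$: whether it is held fixed or also updated inside the loop (line 11 of Algorithm~\ref{alg:ac-smr}) changes the precise form of $\nabla\mathcal{L}_{\theta_{t+1}^{(i)}}$, but in either case the gradient remains non-constant along the iterates, so the conclusion is unaffected; I would note this in a remark rather than belabor it.
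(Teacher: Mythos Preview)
Your telescoping argument for the identity $\theta_{t+1} = \theta_t - \alpha_t \sum_{i=0}^{M-1}\nabla\mathcal{L}_{\theta_{t+1}^{(i)}}$ is exactly the paper's proof: it unrolls the recursion $\theta_{t+1}^{(i+1)} = \theta_{t+1}^{(i)} - \alpha_t\nabla\mathcal{L}_{\theta_{t+1}^{(i)}}$ step by step and sums. For the inequality, the paper is actually \emph{less} careful than you are---its entire justification is the one-line remark that $\theta_{t+1}^{(i+1)}\neq\theta_{t+1}^{(i)}$, with no counterexample, no Hessian expansion, and no discussion of the affine-loss edge case or the target-network subtlety; so your proposal subsumes the paper's argument and adds rigor the paper does not supply.
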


The inequality in the above theorem is due to the fact that $\theta_{t+1}^{(i+1)}\neq \theta_{t+1}^{(i)}$. A natural question is then raised: how does SMR compete against magnifying the learning rate? 

\begin{minipage}{0.5\textwidth}
    \includegraphics[width=0.95\linewidth]{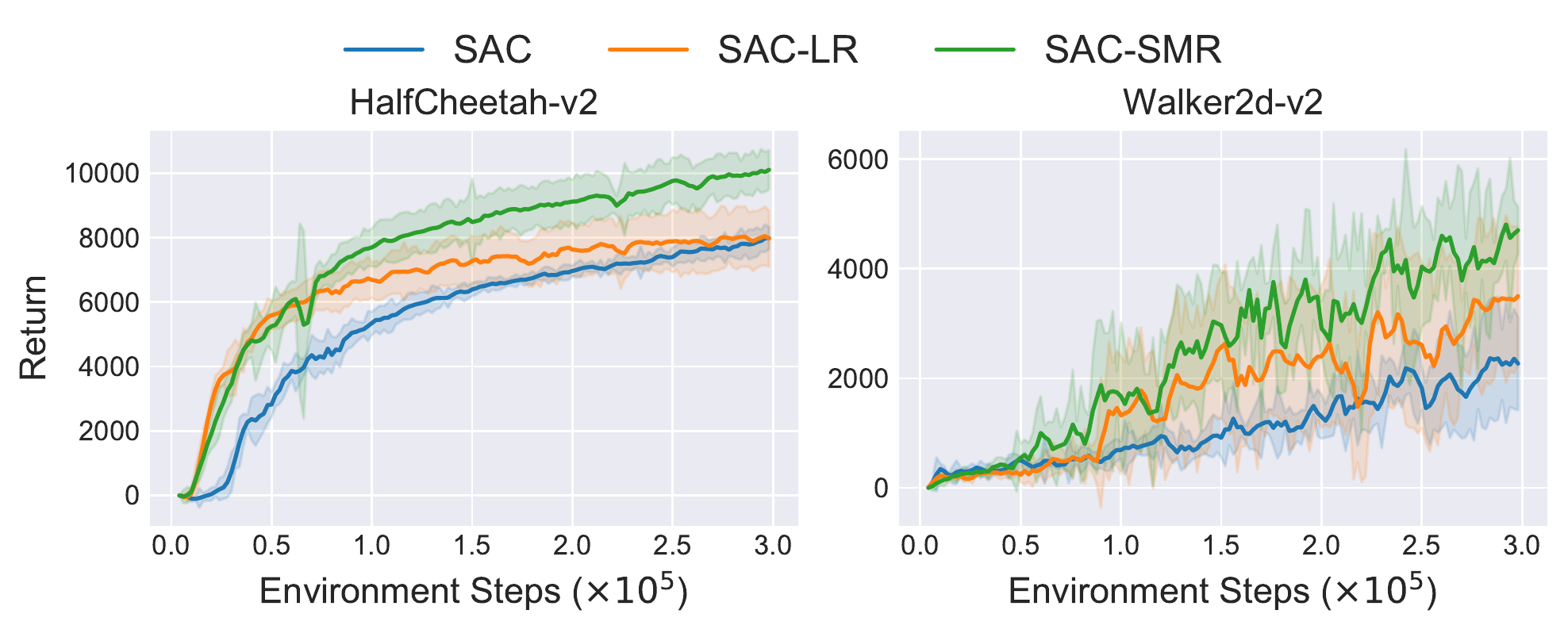}
    \captionof{figure}{Comparison of SAC-SMR ($M=10$) against SAC-LR (i.e., amplify the learning rate 10 times). Each algorithm is repeated with 6 seeds and evaluated over 10 trials every 1000 timesteps. We report the mean performance and the standard deviation.}
    \label{fig:enlargelr}
\end{minipage}
\hspace{0.03\linewidth}
\begin{minipage}{0.5\textwidth}
    \includegraphics[width=0.95\linewidth]{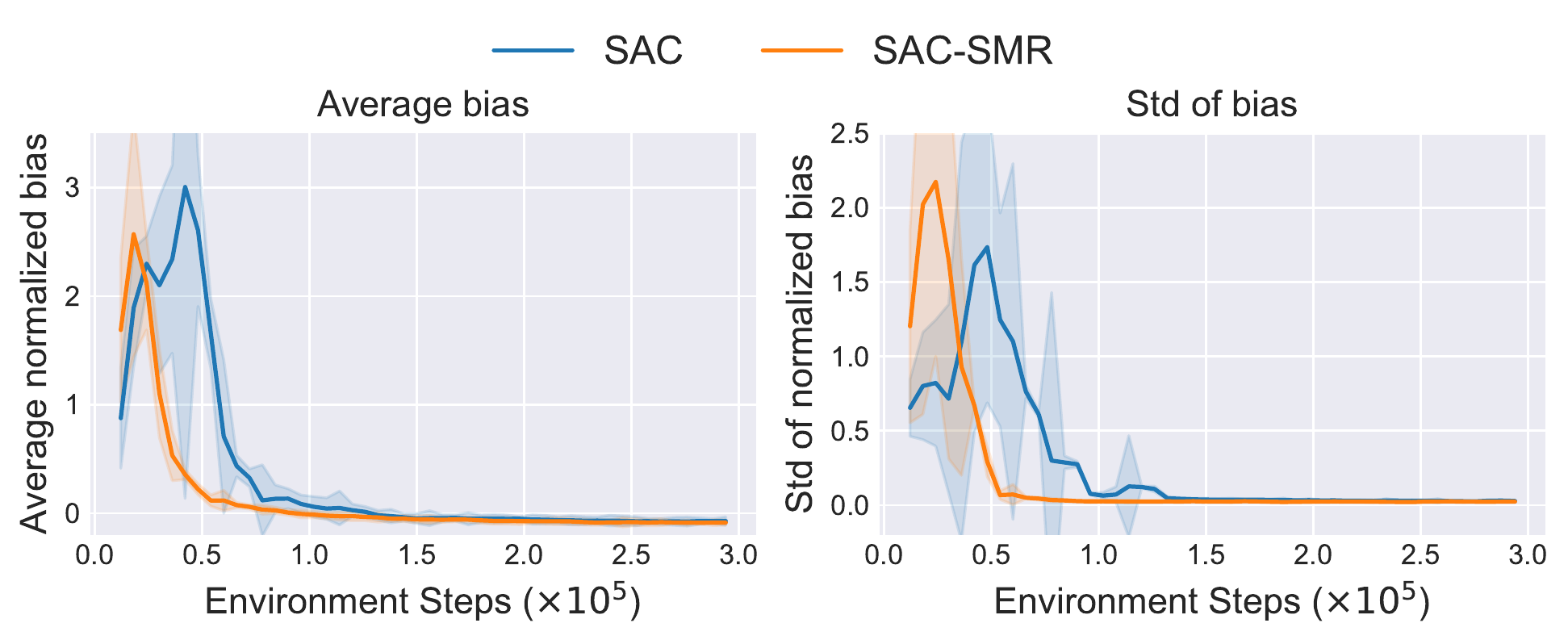}
    \captionof{figure}{Normalized bias comparison of SAC and SAC-SMR on HalfCheetah-v2. SAC-SMR exhibits overfitting at first (with both larger average bias and std of bias) while can incur smaller estimation bias very quickly.}
    \label{fig:sac-bias}
\end{minipage}

We answer this by conducting experiments on two selected environments from OpenAI Gym \cite{Brockman2016OpenAIG}. Empirical results in Figure \ref{fig:enlargelr} show that enlarging the learning rate does aid performance gain, yet it still underperforms SMR in sample efficiency. It is trivial to find the best learning rate. SMR, instead, can benefit the base algorithm with the default parameter (see more evidence in Appendix \ref{sec:missingexperiments}).

\subsection{Concerns on overfitting}
\label{sec:overfitting}
One may wonder whether the phenomenon of overfitting \cite{Dietterich1995OverfittingAU, Srivastava2014DropoutAS} will occur in SMR since we optimize the networks on fixed samples for multiple times. The networks may overfit the collected samples at first, but they can get rid of this dilemma and end up with better data exploitation later on with \textit{reasonable} $M$. We verify this by measuring the accuracy of $Q^\pi(s,a)$ over the state-action distribution of the current policy $\pi$ against its true value $Q(s,a)$ (i.e., discounted Monte Carlo return). Since the Monte Carlo return can change drastically during training, we adopt normalized estimation bias $\frac{Q^\pi(s,a)-Q(s,a)}{|\mathbb{E}_{\hat{s},\hat{a}\sim\pi}[Q(\hat{s},\hat{a})]|}$ for more meaningful comparison. We conduct experiments on HalfCheetah-v2. We run each algorithm with 6 seeds for 300K online interactions and evaluate them over 10 trials every 1000 timesteps. We adopt the same way of calculating the normalized estimation bias as REDQ \cite{Chen2021RandomizedED}. As illustrated in Figure \ref{fig:sac-bias}, SMR incurs slight overfitting at the beginning of training, while it can quickly escape from it and result in a smaller estimation bias afterwards.

This may be because the networks can well-fit new transitions from continual online interactions with multiple updates. Since SMR uses much fewer gradient steps per interaction with the environment compared with REDQ (with UTD $G=20$), we believe the concerns on overfitting can be mitigated to some extent. As a piece of evidence, we do not find any performance degradation with $M=10$ across a wide range of algorithms and tasks. The key for not overfitting is the appropriate choice of SMR ratio $M$. No wonder that it will be hard for the agent to get rid of overfitting with too large $M$ (e.g., $M=10^5$, also referred to as \emph{heavy priming} phenomenon in \cite{Nikishin2022ThePB}). For those who still worry about overfitting, we can remedy this by: (1) using a small $M$, e.g., $M=5$; (2) resetting the agent periodically \cite{Nikishin2022ThePB} such that it forgets past learned policy; (3) leveraging a larger batch size; etc. Note that one does not have to stick to adopting a high SMR ratio throughout the training process, and can use SMR as a \emph{starting point}, or a warm-up phase, e.g., one can use $M=10$ for 300K interactions and then resume vanilla training process (i.e., $M=1$), which can also relieve potential overfitting.


\subsection{Comparison with UTD (update-to-data)}

SMR focuses on boosting the sample efficiency of model-free algorithms by better exploiting collected samples. This is similar in spirit to model-based methods (e.g., MBPO \cite{Janner2019WhenTT}) and REDQ \cite{Chen2021RandomizedED} as they usually employ a large update-to-data (UTD) ratio, i.e., update the critic multiple times by sampling with bootstrapping (the sampled batch is different each time). However, SMR updates both actor and critic on the \textit{fixed} sampled batch multiple times to better fit the data (as Figure \ref{fig:smriteration} shows). 

It is interesting to examine which way of reusing data can benefit the agent more. To answer this question, we compare SAC-SMR ($M=10$) against SAC-UTD (UTD $G=10$) and vanilla SAC on four DMC suite tasks. We run each algorithm for 1M online interactions. One can see that with the identical gradient steps per interaction with the environment, SMR achieves much better final performance and sample efficiency than UTD, as shown in Figure \ref{fig:replaycomp} and Table \ref{tab:performance}, indicating that SMR may be a better choice in practice. We remark here that the success of UTD in REDQ is attributed to a much higher UTD ratio and randomized critic ensemble. However, SMR does not rely on any specific component and can consistently improve the performance of the base algorithm. Meanwhile, we do not view SMR and UTD as contradictory methods, but rather orthogonal methods, e.g., one can find in Figure \ref{fig:gymresult} that SMR improves the sample efficiency of REDQ.

\begin{figure}
    \centering
    \includegraphics[width=\linewidth]{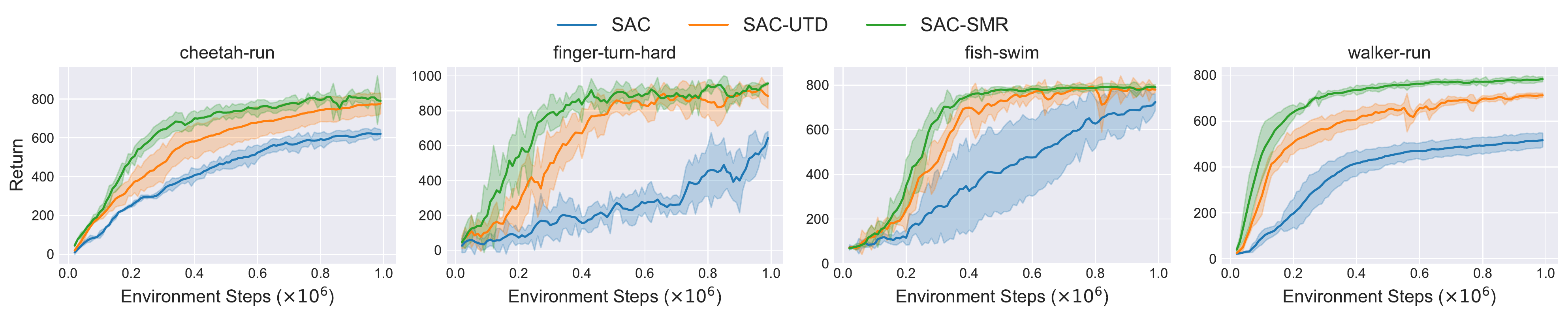}
    \caption{Comparison of SAC-SMR ($M=10$) against SAC-UTD ($G=10$) under identical update frequency. The results are averaged over 6 runs, and the standard deviation is also reported.}
    \label{fig:replaycomp}
    \vspace{-0.6cm}
\end{figure}

\section{Related Work}
\label{sec:relatedwork}
\noindent\textbf{Off-policy RL algorithms.} Recently, we have witnessed the great success of off-policy algorithms in discrete settings since DQN \cite{Mnih2015HumanlevelCT}. There are many improvements upon it, including double Q-learning \cite{Hasselt2010DoubleQ, Hasselt2015DeepRL}, dueling structure \cite{Wang2015DuelingNA}, distributional perspective \cite{Bellemare2017ADP, Nam2021GMACAD, Dabney2017DistributionalRL}, experience replay techniques \cite{Schaul2015PrioritizedER, Hessel2017RainbowCI}, self-supervised learning \cite{Laskin2020ReinforcementLW, schwarzer2021dataefficient, Srinivas2020CURLCU}, model-based methods \cite{Kaiser2019ModelBasedRL, Ye2021MasteringAG, Schrittwieser2019MasteringAG, hamrick2021on, Hessel2021MuesliCI}, etc.

In the continuous control domain, off-policy RL algorithms are widely adopted, such as DDPG \cite{lillicrap2015continuous} and TD3 \cite{Fujimoto2018AddressingFA}. These methods are usually built upon the actor-critic framework \cite{prokhorov1997adaptive, konda2000actor}, accompanied with a replay buffer for storing past experiences. There are also many efforts in exploring off-policy training with image input \cite{Finn2015LearningVF, Dwibedi2018LearningAR, Sermanet2017TimeContrastiveNS, Nair2018VisualRL, Lee2019StochasticLA, yarats2021image, Hafner2020Dream, Hafner2018LearningLD, Yuan2022PreTrainedIE}.

\noindent \textbf{Sample-efficient continuous control algorithms.} How to improve the sample efficiency is one of the most critical issues to deploying the RL algorithms widely in the real world. Existing work realizes it via adding exploration noise \cite{lillicrap2015continuous, Plappert2017ParameterSN}, extra bonus reward \cite{Tang2016ExplorationAS, Fu2017EX2EW, Houthooft2016CuriositydrivenEI, Achiam2017SurpriseBasedIM}, multiple actors \cite{Zhang2018ACEAA, Efficient2022Lyu}, value estimate correction \cite{Pan2020SoftmaxDD, Wu2020ReducingEB, Kuznetsov2020ControllingOB, Kuznetsov2021AutomatingCO, Lyu2021ValueAF}, or by leveraging maximum entropy framework \cite{ziebart2010modeling, Haarnoja2018SoftAO, haarnoja2018softactorcritic}, incorporating uncertainty measurement \cite{Lee2020SUNRISEAS}, etc. \textit{SMR is orthogonal to all these advances} and can be easily combined with them.

Another line of research aiming at improving the sample efficiency in continuous control tasks sets their focus on learning a dynamics model of the environment \cite{Sutton1991dyna, Buckman2018SampleEfficientRL, Chua2018DeepRL, Janner2019WhenTT, DOro2020HowTL, li2022gradient, Hansen2022TemporalDL, voelcker2022value}. However, training an accurate model can be difficult \cite{Asadi2018TowardsAS, Asadi2018LipschitzCI, Lai2020BidirectionalMP} due to compounding errors \cite{Deisenroth2011PILCOAM, Venkatraman2015ImprovingMP, Talvitie2016SelfCorrectingMF}, and it is very time-consuming to run model-based RL codebase. 

\noindent\textbf{Data replay methods.} There are many ways of utilizing data in deep RL scenario, e.g., replaying good transitions \cite{Schaul2015PrioritizedER,Liu2021RegretME,Zhang2017ADL,Kapturowski2018RecurrentER}, balancing synthetic data and real data in model-based RL \cite{Hasselt2019WhenTU,Pan2020TrustTM}, etc. Some studies \cite{Fedus2020RevisitingFO,oro2022sampleefficient,li2023efficient} explore and uncover the advantages of update frequency on the collected transitions in a bootstrapping way for sample efficiency. SMR, instead, reuses the fixed batch data for multiple times to aid sample efficiency, and is orthogonal to previous methods.

\section{Conclusion}
In this paper, we propose sample multiple reuse (SMR), a novel method for enhancing the exploitation ability of off-policy continuous control RL algorithms by optimizing the agent on the fixed sampled batch multiple times. We show the convergence property of Q-learning with SMR in the tabular case. SMR can be incorporated with \textit{any} off-policy RL algorithms to boost their sample efficiency. We empirically show that the benefits of SMR are both algorithm-agnostic and task-agnostic. We further show that SMR is different from amplifying learning rate and discuss the potential overfitting phenomenon when using SMR. We hope this work can provide some insights to the community and aid the design of more advanced off-policy RL algorithms.

The main limitation of our work lies in the fact that injecting a sample reuse loop for training neural networks takes extra time. Such cost is negligible for state-based tasks but not for image-based tasks (check Appendix \ref{sec:missingexperiments}). A promising solution may be dropout \cite{Srivastava2014DropoutAS}, which has been previously adopted to reduce the computation cost of REDQ in \cite{Hiraoka2021DropoutQF}. We leave it as future work.


\small
\bibliographystyle{abbrv}
\bibliography{neurips_2023.bib}

\newpage
\appendix
\onecolumn

\section{Additional Theoretical Results}
\label{sec:additionaltheory}
In this section, we present additional theoretical results concerning on the sample complexity and finite time error bound for Q-SMR in nonreturnable MDPs. We first need to impose the following assumptions, which extends the assumption we made in the main text. All of the missing proofs can be found in Appendix \ref{sec:missingproof}.

\begin{assumption}[MDP Regularity]
\label{ass:mdpreg}
(1) $\forall t$, the reward signal is bounded, $|r_t|\le r_{\rm max}$; (2) The Markov chain induced by the stationary behavior policy $\pi_b$ is uniformly ergodic, and has a mixing time $t_{\rm mix}$,
\begin{equation*}
    t_{\rm mix} := \min\left\{ t\bigg| \max_{(s_0,a_0)\in\mathcal{S}\times\mathcal{A}}D_{\rm TV}\left( P^t(\cdot|s_0,a_0)\| \mu_{\pi_b} \right) \le \dfrac{1}{4} \right\}.
\end{equation*}
\end{assumption}

$P^t(\cdot|s_0,a_0)$ is the distribution of $(s_t,a_t)$ conditioned on the initial state-action pair $(s_0,a_0)$ and $D_{\rm TV}(p\|q)$ denotes the total variation distance between two distributions $p,q$. Denote $\mu_{\pi_b}$ as the stationary distribution of the aforementioned Markov chain. We define $\mu_{\rm min}:=\inf_{(s,a)\in\mathcal{S}\times\mathcal{A}}\mu_{\pi_b}$.

We are now interested in the sample complexity of the Q-SMR algorithm, which is built upon Corollary \ref{coro:simpleupdaterule}. Note that a general analysis on the sample complexity of Q-SMR is very hard as the target value keeps changing during SMR iteration. We thus resort to nonreturnable MDP and present the sample complexity results in the appendix. We introduce an important lemma on the learning rate sequence in Lemma \ref{lemma:learningrate}, which plays a critical role in proving Theorem \ref{theo:convergencerate}.

\begin{lemma}
\label{lemma:learningrate}
Denote $\hat{\alpha}_t = 1 - (1-\alpha_t)^M$, then we have
\begin{equation}
    \alpha_t \le \hat{\alpha}_t \le \min\{1, M\alpha_t\}.
\end{equation}
\end{lemma}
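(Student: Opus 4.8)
The plan is to prove both inequalities by exploiting the elementary factorization of $1-x^M$. Throughout I will use that the learning rate satisfies $\alpha_t\in[0,1]$, which is the standing requirement for the Q-SMR update (it is exactly what makes each per-iteration update a convex combination, and it is also implied by the step-size conditions invoked in Theorem \ref{theo:convergence}). Consequently $1-\alpha_t\in[0,1]$ as well, and $M\ge 1$ is a positive integer.

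First I would write
\begin{equation*}
\hat\alpha_t = 1-(1-\alpha_t)^M = \alpha_t\sum_{i=0}^{M-1}(1-\alpha_t)^i,
\end{equation*}
using the identity $1-x^M=(1-x)\sum_{i=0}^{M-1}x^i$ with $x=1-\alpha_t$. Everything then reduces to bounding the finite sum $S:=\sum_{i=0}^{M-1}(1-\alpha_t)^i$. Since $0\le 1-\alpha_t\le 1$, each term obeys $0\le (1-\alpha_t)^i\le 1$; the $i=0$ term equals $1$, so $1\le S\le M$. Multiplying through by $\alpha_t\ge 0$ yields $\alpha_t\le \hat\alpha_t\le M\alpha_t$, which is the lower bound and one half of the upper bound.

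For the remaining piece $\hat\alpha_t\le 1$, I would simply note that $(1-\alpha_t)^M\ge 0$, hence $\hat\alpha_t=1-(1-\alpha_t)^M\le 1$. Combining the two gives $\hat\alpha_t\le\min\{1,M\alpha_t\}$, completing the proof. (Alternatively, $\hat\alpha_t\le M\alpha_t$ is just Bernoulli's inequality $(1-\alpha_t)^M\ge 1-M\alpha_t$, and $\alpha_t\le\hat\alpha_t$ follows from the monotonicity $(1-\alpha_t)^M\le(1-\alpha_t)^1$; the factorization route is preferable because it delivers both bounds simultaneously and also matches the expansion already appearing in Theorem \ref{theo:updaterule}.)

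There is no substantive obstacle here — the statement is a purely arithmetic fact. The only point I would take care to state explicitly is the premise $\alpha_t\in[0,1]$: if one assumed merely $\alpha_t\ge 0$, then $1-\alpha_t$ could be negative, the term-by-term bound on $S$ would break, and indeed $\hat\alpha_t\le 1$ can fail for $\alpha_t>1$. So I would record $\alpha_t\in[0,1]$ as part of the hypotheses, consistent with the step-size assumptions used elsewhere in the paper.
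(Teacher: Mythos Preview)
Your proof is correct. It differs from the paper's in a small but pleasant way: the paper handles the two nontrivial inequalities separately, obtaining $\alpha_t\le\hat\alpha_t$ via the factorization $(1-\alpha_t)-(1-\alpha_t)^M=(1-\alpha_t)\bigl[1-(1-\alpha_t)^{M-1}\bigr]\ge 0$ and obtaining $\hat\alpha_t\le M\alpha_t$ by a calculus argument (showing $g(x)=1-(1-x)^M-Mx$ has $g'(x)\le 0$, hence $g(x)\le g(0)=0$). Your single geometric-series identity $\hat\alpha_t=\alpha_t\sum_{i=0}^{M-1}(1-\alpha_t)^i$ together with the termwise bounds $1\le S\le M$ delivers both inequalities at once, avoids any differentiation, and ties in nicely with the expansion already used in Theorem~\ref{theo:updaterule}. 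Both arguments rest on the same hypothesis $\alpha_t\in[0,1]$, which you rightly flag as essential.
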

We then formally present the sample complexity of Q-SMR.

\begin{theorem}[Finite time error bound]
\label{theo:convergencerate}
Assume that Assumption \ref{ass:mdp} holds and the SMR ratio is set to be $M$. Suppose the learning rate is taken to be $\alpha_t = \frac{h}{M(t+t_0)}$ with $t_0\ge\max(4h, \lceil \log_2 \frac{2}{\mu_{\rm min}} \rceil t_{\rm mix})$ and $h\ge \frac{4}{\mu_{\rm min}(1-\gamma)}$, then with probability at least $1-\delta$,
\begin{equation}
    \begin{aligned}
    \|\hat{Q}_T - Q^*\|_\infty \le \tilde{\mathcal{O}}\left( \dfrac{r_{\rm max}\sqrt{t_{\rm mix}}}{(1-\gamma)^{2.5}\mu_{\rm min}}\dfrac{1}{\sqrt{T}} + \dfrac{ r_{\rm max}t_{\rm mix}}{(1-\gamma)^3\mu_{\rm min}^2}\dfrac{1}{T} \right).
    \end{aligned}
\end{equation}
\end{theorem}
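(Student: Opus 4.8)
The plan is to reduce the claim to the finite-time analysis of ordinary asynchronous $Q$-learning and then show that the SMR-modified step size changes only the constants, not the rate. By Corollary \ref{coro:simpleupdaterule}, in a nonreturnable MDP the iterate produced by Q-SMR satisfies, for the visited pair $(s_t,a_t)$,
\begin{equation*}
\hat{Q}_{t+1}(s_t,a_t)=(1-\hat{\alpha}_t)\hat{Q}_t(s_t,a_t)+\hat{\alpha}_t\,\mathcal{T}_{t+1}\hat{Q}_t(s_t,a_t),\qquad \hat{\alpha}_t:=1-(1-\alpha_t)^M,
\end{equation*}
and leaves all other entries unchanged. Hence it suffices to prove the stated bound for asynchronous $Q$-learning run with the deterministic step-size schedule $\{\hat{\alpha}_t\}$, under Assumption \ref{ass:mdpreg} (which subsumes Assumption \ref{ass:mdp}).

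First I would pin down the schedule $\{\hat{\alpha}_t\}$. Because $t_0\ge 4h$ we have $M\alpha_t=h/(t+t_0)\le 1/4$, so $\alpha_t\in[0,1/4]$; Lemma \ref{lemma:learningrate} then gives $\hat{\alpha}_t\le M\alpha_t=h/(t+t_0)$, while $(1-\alpha_t)^M\le e^{-M\alpha_t}$ together with $1-e^{-z}\ge z/2$ for $z\in[0,1]$ gives $\hat{\alpha}_t\ge M\alpha_t/2=h/\big(2(t+t_0)\big)$. Thus $h/\big(2(t+t_0)\big)\le\hat{\alpha}_t\le h/(t+t_0)$ with $h\ge 4/\big(\mu_{\rm min}(1-\gamma)\big)$: \emph{independently of $M$}, $\hat{\alpha}_t$ is a rescaled-linear schedule of exactly the type for which sharp guarantees are available. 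The requirement $t_0\ge\lceil\log_2(2/\mu_{\rm min})\rceil t_{\rm mix}$ is the usual burn-in ensuring the behavior chain has mixed, so that with high probability each $(s,a)$ has been visited at least $\gtrsim\mu_{\rm min}\,t$ times by step $t$.

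Given this, I would run the standard error decomposition as in \cite{Li2020SampleCO,Qu2020FiniteTimeAO}. Unrolling the recursion expresses $\hat{Q}_T-Q^*$ as (i) a term damped by the telescoping product $\prod_k(1-\hat{\alpha}_k\mathbf{1}[(s_k,a_k)=(s,a)])$, bounded crudely via the a priori estimate $\|\hat{Q}_t-Q^*\|_\infty\le 2r_{\rm max}/(1-\gamma)$ from Theorem \ref{theo:stability}; (ii) a weighted sum of Bellman-noise terms $\mathcal{T}_{t+1}\hat{Q}_t-\mathcal{T}\hat{Q}_t$; and (iii) a bias term from the drift of the target $\mathcal{T}\hat{Q}_t$. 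For (ii) I would partition the trajectory into frames of length $\Theta(t_{\rm mix}\log T)$ to decorrelate the Markovian samples and apply a Freedman/Bernstein-type bound; using the two-sided control on $\hat{\alpha}_k$ to bound $\sum_k\hat{\alpha}_k^2$ and the telescoping weights, and the per-pair visitation count $\gtrsim\mu_{\rm min}T$, this yields the leading $1/\sqrt{T}$ term with the $\sqrt{t_{\rm mix}}$, $1/\mu_{\rm min}$, and $(1-\gamma)^{-2.5}$ factors (the last arising from $h\asymp(1-\gamma)^{-1}\mu_{\rm min}^{-1}$, noise variance $\lesssim r_{\rm max}^2/(1-\gamma)^2$, and an extra $(1-\gamma)^{-1/2}$ from summing squared weights). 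Combining (i), (iii), and the transient contribution of the first $\Theta(t_{\rm mix}\log T)$ steps produces the higher-order $1/T$ term with prefactor $\asymp r_{\rm max}t_{\rm mix}/\big((1-\gamma)^3\mu_{\rm min}^2\big)$; logarithmic factors are absorbed into $\tilde{\mathcal{O}}$.

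The main obstacle is step (ii): one cannot quote an off-the-shelf theorem verbatim, both because the effective step size is only \emph{sandwiched} between two rescaled-linear schedules (so the telescoping products $\prod_k(1-\hat{\alpha}_k)$ and the sums $\sum_k\hat{\alpha}_k^2$ must be bounded from both sides by hand rather than evaluated in closed form) and because the blocking argument that turns the dependent Bellman-noise sequence into an approximate martingale is precisely what injects the $t_{\rm mix}$ dependence and the second, $O(1/T)$, error term. These are careful but standard adaptations; the genuinely new ingredient is the first step — recognizing that the SMR exponent $M$ collapses into universal constants, so Q-SMR inherits the same finite-time rate as vanilla asynchronous $Q$-learning.
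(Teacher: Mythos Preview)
Your proposal is correct and follows essentially the same strategy as the paper: both reduce Q-SMR via Corollary~\ref{coro:simpleupdaterule} to ordinary asynchronous $Q$-learning with effective step size $\hat{\alpha}_t=1-(1-\alpha_t)^M$, observe that $\hat{\alpha}_t$ is sandwiched between two rescaled-linear schedules whose constants do not depend on $M$, and then carry through the finite-time machinery of \cite{Qu2020FiniteTimeAO}. The only minor differences are that the paper uses the looser lower bound $\hat{\alpha}_t\ge\alpha_t$ from Lemma~\ref{lemma:learningrate} (rather than your sharper $\hat{\alpha}_t\ge M\alpha_t/2$ via $1-e^{-z}\ge z/2$) and handles the Markovian dependence through the $\tau$-shifted conditioning plus Azuma--Hoeffding argument of \cite{Qu2020FiniteTimeAO} rather than the blocking-plus-Freedman route you sketch; both choices lead to the same $\tilde{\mathcal{O}}$ rate.
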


As an immediate corollary, we have:
\begin{corollary}[Sample complexity]
\label{coro:samplecomplexity}
For any $0<\delta<1$ and $0<\epsilon<1$, with the Q-SMR algorithm we have:
\begin{equation}
    \forall (s,a)\in\mathcal{S}\times\mathcal{A}: \|\hat{Q}_T - Q^*\|_\infty \le \epsilon,
\end{equation}
holds with probability at least $1-\delta$, provided the iteration number $T$ obeys:
\begin{equation}
    T\stackrel{>}{\sim}\dfrac{r_{\rm max}^2 t_{\rm mix}}{(1-\gamma)^5\mu_{\rm min}^2}\dfrac{1}{\epsilon^2}.
\end{equation}
\end{corollary}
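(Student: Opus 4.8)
The plan is to read Corollary \ref{coro:samplecomplexity} off of Theorem \ref{theo:convergencerate} by inverting the finite-time error bound. Theorem \ref{theo:convergencerate} already supplies, with probability at least $1-\delta$, the estimate
$\|\hat{Q}_T - Q^*\|_\infty \le \tilde{\mathcal{O}}\!\left( \frac{r_{\rm max}\sqrt{t_{\rm mix}}}{(1-\gamma)^{2.5}\mu_{\rm min}}\frac{1}{\sqrt{T}} + \frac{ r_{\rm max}t_{\rm mix}}{(1-\gamma)^3\mu_{\rm min}^2}\frac{1}{T} \right)$,
which is a sum of a $\Theta(1/\sqrt{T})$ term and a $\Theta(1/T)$ term. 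To guarantee the left-hand side is at most $\epsilon$, I would force each of the two summands to be at most $\epsilon/2$ and then take $T$ to be the larger of the two resulting thresholds; the $1-\delta$ confidence level carries over unchanged.

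For the leading term, requiring $\frac{r_{\rm max}\sqrt{t_{\rm mix}}}{(1-\gamma)^{2.5}\mu_{\rm min}}\frac{1}{\sqrt{T}} \le \frac{\epsilon}{2}$ and solving for $T$ yields $T \gtrsim \frac{r_{\rm max}^2 t_{\rm mix}}{(1-\gamma)^5 \mu_{\rm min}^2}\frac{1}{\epsilon^2}$, absorbing the constant $4$ and the polylogarithmic factors hidden in $\tilde{\mathcal{O}}$ into $\gtrsim$. For the higher-order term, requiring $\frac{r_{\rm max}t_{\rm mix}}{(1-\gamma)^3\mu_{\rm min}^2}\frac{1}{T} \le \frac{\epsilon}{2}$ gives $T \gtrsim \frac{r_{\rm max}t_{\rm mix}}{(1-\gamma)^3\mu_{\rm min}^2}\frac{1}{\epsilon}$. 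Comparing the two, their ratio is $\frac{r_{\rm max}}{(1-\gamma)^2\epsilon}$, which is at least $1$ in the regime of interest (e.g. under the usual normalization $r_{\rm max}\ge 1$ and $0<\epsilon<1$, or more generally whenever $\epsilon \le r_{\rm max}/(1-\gamma)^2$), so the first threshold dominates. Hence taking $T \gtrsim \frac{r_{\rm max}^2 t_{\rm mix}}{(1-\gamma)^5\mu_{\rm min}^2}\frac{1}{\epsilon^2}$ makes both summands at most $\epsilon/2$, so their sum is at most $\epsilon$, which is exactly the claimed bound.

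The only real care needed — and the reason I frame this as a plan rather than a one-liner — is the bookkeeping around the $\tilde{\mathcal{O}}(\cdot)$ and the learning-rate prerequisites: one must check that the polylogarithmic factors in $T$, $1/\delta$, $1/(1-\gamma)$, and $\mu_{\rm min}$ can be absorbed into $\gtrsim$ without changing the stated polynomial rate, and that the conditions $t_0 \ge \max(4h, \lceil \log_2(2/\mu_{\rm min}) \rceil t_{\rm mix})$ and $h \ge \frac{4}{\mu_{\rm min}(1-\gamma)}$ imposed in Theorem \ref{theo:convergencerate} are consistent with — and do not inflate beyond this order — the chosen $T$. I expect this step to be routine and to require no probabilistic or algebraic input beyond Theorem \ref{theo:convergencerate} itself; it is essentially an algebraic inversion plus a comparison of the two error terms.
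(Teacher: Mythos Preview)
Your proposal is correct and follows essentially the same approach as the paper: invert the finite-time bound of Theorem \ref{theo:convergencerate} and observe that the $1/\sqrt{T}$ term dominates the $1/T$ term, so solving $\frac{r_{\rm max}\sqrt{t_{\rm mix}}}{(1-\gamma)^{2.5}\mu_{\rm min}}\frac{1}{\sqrt{T}}\lesssim\epsilon$ already suffices. If anything, you are more careful than the paper, which simply asserts the dominance ``with the scale of $1/\sqrt{T}$ and $1/T$'' without your explicit $\epsilon/2$ split or threshold comparison.
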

\noindent\textbf{Remark:} The above conclusion says that the sample complexity of Q-SMR gives $\tilde{\mathcal{O}}\left(\dfrac{ t_{\rm mix}}{(1-\gamma)^5\mu_{\rm min}^2}\dfrac{1}{\epsilon^2}\right)$. This result matches the recent theoretical analysis on the sample complexity of asynchronous Q-learning \cite{Qu2020FiniteTimeAO}, which improves over the previous bound \cite{Szepesvari1997TheAC, EvenDar2004LearningRF}. For a detailed comparison, we notice that the above sample complexity becomes $\tilde{\mathcal{O}}\left(\dfrac{ t_{\rm mix} (|\mathcal{S}||\mathcal{A}|)^2}{(1-\gamma)^5}\dfrac{1}{\epsilon^2}\right)$ by using that $\dfrac{1}{\mu_{\rm min}}$ scales with $(|\mathcal{S}||\mathcal{A}|)$. The prior bound in \cite{Szepesvari1997TheAC} gives a sample complexity of $\tilde{\mathcal{O}}\left(\dfrac{(|\mathcal{S}||\mathcal{A}|)^5}{(1-\gamma)^5\epsilon^{2.5}}\right)$ ($\omega=0.8$) and $\tilde{\mathcal{O}}\left(\dfrac{(|\mathcal{S}||\mathcal{A}|)^{3.3}}{(1-\gamma)^{5.2}\epsilon^{2.6}}\right)$ ($\omega=0.77$), where $\omega$ is the step size. Our results are sharper in terms of the dependence of $\dfrac{1}{1-\gamma}, \dfrac{1}{\epsilon},(|\mathcal{S}||\mathcal{A}|)$. The result can be extended to a constant learning rate (i.e., $\alpha_t \equiv \alpha,\forall\, t$) by following a similar analysis as \cite{Li2020SampleCO, Li2021IsQM}.

\clearpage

\section{Missing Proofs}
\label{sec:missingproof}

\subsection{Proof of Theorem \ref{theo:updaterule}}
\begin{theorem}
\label{apptheo:updaterule}
The update rule of Q-SMR is equivalent to:
\begin{equation}
    \begin{aligned}
    &Q_{t+1}(s_t,a_t) = (1-\alpha_t)^M Q_{t}(s_t,a_t) + \sum_{i=0}^{M-1} \alpha_t (1-\alpha_t)^i \mathcal{T}_{t+1}Q_{t+1}^{(M-1-i)}(s_t,a_t), \\
    &Q_{t+1}(s,a) = Q_t(s,a) \quad \forall\, (s,a)\neq (s_t,a_t),
    \end{aligned}
\end{equation}
where $\mathcal{T}_{t+1}Q_{t+1}(s_t,a_t) = r_t + \gamma\max_{a^\prime\in\mathcal{A}}Q_{t+1}(s_{t+1},a^\prime)$ denotes the empirical Bellman operator w.r.t. timestep $t+1$.
\end{theorem}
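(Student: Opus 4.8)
The plan is to unroll the $M$-fold inner loop of Algorithm \ref{alg:q-smr} by a direct induction on the iteration index, exploiting the fact that each pass over the fixed transition is just a convex combination of the current value and a (possibly changing) Bellman target, and that only the single entry $(s_t,a_t)$ is ever touched.

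First I would fix a timestep $t$ and write the inner loop explicitly in the excerpt's notation: with $Q_{t+1}^{(0)} := Q_t$, the $i$-th pass ($i=1,\dots,M$) over the fixed sample $(s_t,a_t,r_t,s_{t+1})$ performs
\begin{equation}
Q_{t+1}^{(i)}(s_t,a_t) = (1-\alpha_t)\,Q_{t+1}^{(i-1)}(s_t,a_t) + \alpha_t\,\mathcal{T}_{t+1}Q_{t+1}^{(i-1)}(s_t,a_t),
\end{equation}
and leaves all other entries fixed, $Q_{t+1}^{(i)}(s,a) = Q_{t+1}^{(i-1)}(s,a)$ for $(s,a)\neq(s_t,a_t)$. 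The case $(s,a)\neq(s_t,a_t)$ of the theorem is then immediate by transitivity through $i=1,\dots,M$, so all the work reduces to the entry $(s_t,a_t)$.

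Next I would prove, by induction on $i$, the intermediate identity
\begin{equation}
Q_{t+1}^{(i)}(s_t,a_t) = (1-\alpha_t)^i Q_t(s_t,a_t) + \sum_{k=0}^{i-1}\alpha_t(1-\alpha_t)^k\,\mathcal{T}_{t+1}Q_{t+1}^{(i-1-k)}(s_t,a_t).
\end{equation}
The base case $i=1$ is exactly one Q-learning update. For the inductive step I substitute the hypothesis for $Q_{t+1}^{(i-1)}(s_t,a_t)$ into the per-pass update, distribute the factor $(1-\alpha_t)$ over the sum (turning $(1-\alpha_t)^k$ into $(1-\alpha_t)^{k+1}$), shift the summation index $k\mapsto k+1$, and recognize the freshly produced $\alpha_t\,\mathcal{T}_{t+1}Q_{t+1}^{(i-1)}$ term as precisely the missing $k=0$ summand; collecting terms yields the identity for $i$. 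Setting $i=M$ and recalling $Q_{t+1}=Q_{t+1}^{(M)}$ reproduces the claimed closed form verbatim.

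There is no real obstacle here — the statement is essentially a bookkeeping exercise — but two points warrant care. One is the index shift/relabelling in the geometric sum so that the Bellman targets line up as $Q_{t+1}^{(M-1-k)}$ rather than in the reversed order one gets from a naive forward expansion. The other, more conceptual, is that the superscripts on the targets genuinely cannot be dropped in general: since $\mathcal{T}_{t+1}$ evaluates $Q$ at $(s_{t+1},\cdot)$, whenever the MDP is returnable ($s_{t+1}=s_t$) the target at iteration $i-1-k$ really does depend on the intermediate iterate $Q_{t+1}^{(M-1-k)}$ and differs from $\mathcal{T}_{t+1}Q_t$ — which is exactly why Corollary \ref{coro:simpleupdaterule} must invoke the nonreturnable assumption to collapse the sum into the simple modified-learning-rate form. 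Everything else is routine algebra.
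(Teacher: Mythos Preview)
Your proposal is correct and follows essentially the same route as the paper: both arguments unroll the inner loop by induction on the number of passes (the paper phrases this as induction on $M$, you as induction on $i$, but the intermediate identity and the algebra---distribute $(1-\alpha_t)$, shift the summation index, absorb the fresh $k=0$ term---are identical). Your added remarks on the off-entry case and on why the intermediate superscripts cannot in general be dropped are accurate and consistent with the paper's own Remark and its later use of nonreturnability in Corollary~\ref{coro:simpleupdaterule}.
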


\begin{proof}
    Note that we omit the superscript $^{(M)}$ for both the right $Q_t(s_t,a_t)$ and the left $Q_{t+1}(s_t,a_t)$ for clarity. We do $M$ iterations in SMR with intermediate $Q$ value labeled as $Q_t^{(i)}$ at timestep $t$ and iteration $i, i\in\{1,2,\ldots,M\}$. Set the current $Q$-function at timestep $t$ as $Q_t^{(0)}$, then with the SMR iteration, we have $Q_{t}^{(M)}(s_t,a_t)$ which is set to be the new $Q$-function at timestep $t+1$, $Q_{t+1}^{(0)}(s_t,a_t)=Q_{t}^{(M)}(s_t,a_t)$. Note that in SMR iteration, the timestep is fixed, only the superscript changes with iteration, using the rule that $Q_{t+1}^{(i)}(s_t,a_t) = (1-\alpha_t)Q_{t+1}^{(i-1)}(s_t,a_t) + \alpha_t \mathcal{T}_{t+1}Q_{t+1}^{(i-1)}(s_t,a_t),i\in\{1,2,\ldots,M\}$. Then run the loop till convergence. We will use induction to show the above conclusion.

    If $M=1$, then the update rule becomes the vanilla Q-learning style (notice that $Q_t^{(0)}(\cdot,\cdot) = Q_t(\cdot,\cdot)$).

    Now for $\forall\, M\ge 1$, let us assume the update rule holds, if $(s,a) = (s_t,a_t)$, then,
    \begin{equation}
        Q_{t+1}^{(M)}(s_t,a_t) = (1-\alpha_t)^M Q^{(0)}_{t+1}(s_t,a_t) + \sum_{i=0}^{M-1} \alpha_t (1-\alpha_t)^i \mathcal{T}_{t+1}Q_{t+1}^{(M-1-i)}(s_t,a_t).
    \end{equation}
    Thus, 
    \begin{equation*}
    \begin{aligned}
        Q_{t+1}^{(M+1)}(s_t,a_t) &= (1-\alpha_t) Q^{(M)}_{t+1}(s_t,a_t) + \alpha_t \mathcal{T}_{t+1}Q_{t+1}^{(M)}(s_t,a_t). \quad \rm{(By}\,\rm{doing}\,\rm{one}\,\rm{iteration.)} \\
        &= (1-\alpha_t)\left[(1-\alpha_t)^M Q^{(0)}_{t+1}(s_t,a_t) + \sum_{i=0}^{M-1} \alpha_t (1-\alpha_t)^i \mathcal{T}_{t+1}Q_{t+1}^{(M-1-i)}(s_t,a_t)\right] + \alpha_t \mathcal{T}_{t+1}Q_{t+1}^{(M)}(s_t,a_t). \\
        &= (1-\alpha_t)^{M+1} Q^{(0)}_{t+1}(s_t,a_t) + \sum_{i=1}^{M} \alpha_t (1-\alpha_t)^i \mathcal{T}_{t+1}Q_{t+1}^{(M-i)}(s_t,a_t) + \alpha_t \mathcal{T}_{t+1}Q_{t+1}^{(M)}(s_t,a_t). \\
        &= (1-\alpha_t)^{M+1} Q^{(0)}_{t+1}(s_t,a_t) + \sum_{i=0}^{M} \alpha_t (1-\alpha_t)^i \mathcal{T}_{t+1}Q_{t+1}^{(M-i)}(s_t,a_t) \\
        &= (1-\alpha_t)^{M+1} Q^{(M+1)}_{t}(s_t,a_t) + \sum_{i=0}^{M} \alpha_t (1-\alpha_t)^i \mathcal{T}_{t+1}Q_{t+1}^{(M-i)}(s_t,a_t) \\
    \end{aligned}
    \end{equation*}
    Then by induction and omitting the superscript $^{(M+1)}$, we deduce that the update rule holds for $\forall\, M\ge 1$.
\end{proof}
\noindent\textbf{Remark:} It is quite hard to trace back the intermediate Bellman backup $\mathcal{T}_{t+1}Q_{t+1}^{(i)}(s_t,a_t)$ since it is taken over $r_t + \gamma\max_{a^\prime\in\mathcal{A}}Q_{t+1}^{(i)}(s_{t+1},a^\prime)$. Though $s_{t+1}$ is known, the maximal $Q$ value may change position with the iteration.

\subsection{Proof of Theorem \ref{theo:stability}}

\begin{theorem}[Stability]
Let Assumption \ref{ass:mdpreg} holds and assume that the initial $Q$-function is set to be 0, then for any iteration $t$, the value estimate induced by the Q-SMR, $\hat{Q}_t$, is bounded, i.e., $|\hat{Q}_t|\le \dfrac{r_{\rm max}}{1-\gamma},\forall t$.
\end{theorem}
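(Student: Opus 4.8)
The plan is to prove the bound $|\hat{Q}_t| \le r_{\rm max}/(1-\gamma)$ by induction on the timestep $t$, showing that if the bound holds for $Q_t$ then it is preserved through the entire SMR inner loop, i.e.\ for every intermediate iterate $Q_{t+1}^{(i)}$, $i \in \{0,1,\dots,M\}$. The base case is immediate: the initial $Q$-function is $0$, so $|\hat Q_0| = 0 \le r_{\rm max}/(1-\gamma)$. For the inductive step, I assume $\|Q_t\|_\infty \le r_{\rm max}/(1-\gamma)$ and do a nested (inner) induction on the SMR iteration index $i$. Since $Q_{t+1}^{(0)} = Q_t^{(M)} = Q_t$, the inner base case $i=0$ holds by the outer hypothesis. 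For the inner step, the single-iteration update on the active entry reads
\begin{equation*}
Q_{t+1}^{(i)}(s_t,a_t) = (1-\alpha_t)Q_{t+1}^{(i-1)}(s_t,a_t) + \alpha_t\bigl(r_t + \gamma \max_{a'} Q_{t+1}^{(i-1)}(s_{t+1},a')\bigr),
\end{equation*}
and all other entries are unchanged.

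The key step is the contraction-under-convex-combination argument. Assuming $\|Q_{t+1}^{(i-1)}\|_\infty \le r_{\rm max}/(1-\gamma)$, I bound the Bellman target:
\begin{equation*}
\bigl| r_t + \gamma \max_{a'} Q_{t+1}^{(i-1)}(s_{t+1},a') \bigr| \le r_{\rm max} + \gamma \|Q_{t+1}^{(i-1)}\|_\infty \le r_{\rm max} + \gamma \cdot \frac{r_{\rm max}}{1-\gamma} = \frac{r_{\rm max}}{1-\gamma},
\end{equation*}
using Assumption~\ref{ass:mdp} (bounded reward) for the first term. Then, since $\alpha_t \in [0,1]$, the updated entry is a convex combination of two quantities each bounded in absolute value by $r_{\rm max}/(1-\gamma)$, hence
\begin{equation*}
|Q_{t+1}^{(i)}(s_t,a_t)| \le (1-\alpha_t)\cdot \frac{r_{\rm max}}{1-\gamma} + \alpha_t \cdot \frac{r_{\rm max}}{1-\gamma} = \frac{r_{\rm max}}{1-\gamma},
\end{equation*}
while every untouched entry retains its bound by the inner hypothesis. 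This closes the inner induction, so $\|Q_{t+1}^{(M)}\|_\infty = \|Q_{t+1}\|_\infty \le r_{\rm max}/(1-\gamma)$, which closes the outer induction.

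I do not expect a serious obstacle here; the statement is a standard boundedness/stability fact and the real content is just organizing the double induction cleanly and noting that $\alpha_t \in [0,1]$ guarantees each update is a genuine convex combination. The only point requiring a little care is making sure the argument is applied entrywise and that the "unchanged entries" are correctly handled at every iteration — in particular that the max in the Bellman target is over entries of the \emph{previous} intermediate iterate, which is covered by the inner inductive hypothesis. One could alternatively phrase this using the explicit unrolled update rule from Theorem~\ref{theo:updaterule}, bounding $(1-\alpha_t)^M \|Q_t\|_\infty + \sum_{i=0}^{M-1}\alpha_t(1-\alpha_t)^i \cdot \frac{r_{\rm max}}{1-\gamma}$ and using $\sum_{i=0}^{M-1}\alpha_t(1-\alpha_t)^i = 1-(1-\alpha_t)^M$ so the coefficients sum to $1$; but this still needs the bound on each $Q_{t+1}^{(M-1-i)}$ inside the Bellman targets, so the inductive route above is the most self-contained.
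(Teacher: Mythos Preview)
Your proposal is correct. The paper's own proof takes the alternative route you sketch at the end: it invokes the unrolled update rule from Theorem~\ref{theo:updaterule}, bounds $(1-\alpha_t)^M|\hat Q_t|$ and each Bellman target $|r_t+\gamma\max_{a'}\hat Q_{t+1}^{(M-1-i)}(s_{t+1},a')|$ by $r_{\max}/(1-\gamma)$, and then collapses the geometric sum $\sum_{i=0}^{M-1}\alpha_t(1-\alpha_t)^i = 1-(1-\alpha_t)^M$ so the coefficients add to one. As you correctly observe, that route still requires the bound on every intermediate iterate $\hat Q_{t+1}^{(j)}$ appearing inside the targets, which the paper uses without a separate inner induction; your nested-induction argument makes that step explicit and is therefore more self-contained, at the cost of not exhibiting the closed-form coefficient identity. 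Substantively the two proofs are the same convex-combination argument organized in two equivalent ways.
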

\begin{proof}
    We also show this by induction. Obviously, $|\hat{Q}_0|=0\le\dfrac{r_{\rm max}}{1-\gamma}$. Now let us suppose for $\forall \, t\ge0, |\hat{Q}_t|\le \dfrac{r_{\rm max}}{1-\gamma}$, then by using the update rule from Theorem \ref{theo:updaterule}, we have
    \begin{align*}
        |\hat{Q}_{t+1}(s_t,a_t)| &= \left|(1-\alpha_t)^M \hat{Q}_{t}(s_t,a_t) + \sum_{i=0}^{M-1} \alpha_t (1-\alpha_t)^i \mathcal{T}_{t+1}\hat{Q}_{t+1}^{(M-1-i)}(s_t,a_t)\right| \\
        &= \left|(1-\alpha_t)^M \hat{Q}_{t}(s_t,a_t) + \sum_{i=0}^{M-1} \alpha_t (1-\alpha_t)^i \left[r_t + \gamma\max_{a^\prime\in\mathcal{A}}\hat{Q}_{t+1}^{(M-1-i)}(s_{t+1},a^\prime)\right]\right| \\
        &\le (1-\alpha_t)^M \left| \hat{Q}_{t}(s_t,a_t) \right| + \sum_{i=0}^{M-1} \alpha_t (1-\alpha_t)^i \left|r_t + \gamma\max_{a^\prime\in\mathcal{A}}\hat{Q}_{t+1}^{(M-1-i)}(s_{t+1},a^\prime)\right| \\
        &\le (1-\alpha_t)^M \dfrac{r_{\rm max}}{1-\gamma} + \sum_{i=0}^{M-1} \alpha_t (1-\alpha_t)^i \left[|r_t| + \gamma\left|\max_{a^\prime\in\mathcal{A}}\hat{Q}_{t+1}^{(M-1-i)}(s_{t+1},a^\prime)\right|\right] \\
        &\le (1-\alpha_t)^M \dfrac{r_{\rm max}}{1-\gamma} + \sum_{i=0}^{M-1} \alpha_t (1-\alpha_t)^i \left[r_{\rm max} + \gamma\dfrac{r_{\rm max}}{1-\gamma}\right] \\
        &= (1-\alpha_t)^M \dfrac{r_{\rm max}}{1-\gamma} + \left[\sum_{i=0}^{M-1} \alpha_t (1-\alpha_t)^i\right] \dfrac{r_{\rm max}}{1-\gamma} \\
        &= (1-\alpha_t)^M \dfrac{r_{\rm max}}{1-\gamma} + \left[1-(1-\alpha_t)^M\right] \dfrac{r_{\rm max}}{1-\gamma} \\
        &= \dfrac{r_{\rm max}}{1-\gamma}.
    \end{align*}
    By using induction, we deduce that $\forall\, t\ge0$ the Q-SMR outputs stable $Q$ value, which satisfies $|\hat{Q}_t|\le\dfrac{r_{\rm max}}{1-\gamma}$.
\end{proof}

\subsection{Proof of Corollary \ref{coro:simpleupdaterule}}
\begin{corollary}
\label{appcoro:simpleupdaterule}
If the MDP is nonreturnable, i.e., $s_{t+1}\neq s_t$, the update rule of Q-SMR gives:
\begin{equation}
    \begin{aligned}
    &Q_{t+1}(s_t,a_t) = (1-\alpha_t)^M Q_{t}(s_t,a_t) + \left[ 1-(1-\alpha_t)^M \right] \mathcal{T}_{t+1}Q_t(s_t,a_t), \\
    &Q_{t+1}(s,a) = Q_t(s,a) \quad \forall\, (s,a)\neq (s_t,a_t),
    \end{aligned}
\end{equation}
\end{corollary}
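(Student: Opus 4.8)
The plan is to invoke Theorem~\ref{theo:updaterule} directly and then exploit nonreturnability to collapse the sum of intermediate Bellman backups into a single term. The key observation is that, within the SMR inner loop at a fixed timestep $t$, the only entry of the $Q$-table that is ever modified is $(s_t,a_t)$; every other entry stays frozen at its value $Q_t(\cdot,\cdot)$ throughout the iterations $i=1,\dots,M$. Since the MDP is nonreturnable we have $s_{t+1}\neq s_t$, so for every action $a'\in\mathcal{A}$ the pair $(s_{t+1},a')\neq(s_t,a_t)$, and hence $Q_{t+1}^{(i)}(s_{t+1},a') = Q_t(s_{t+1},a')$ for all $i\in\{0,1,\dots,M\}$. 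This yields $\mathcal{T}_{t+1}Q_{t+1}^{(i)}(s_t,a_t) = r_t + \gamma\max_{a'\in\mathcal{A}}Q_{t+1}^{(i)}(s_{t+1},a') = r_t + \gamma\max_{a'\in\mathcal{A}}Q_t(s_{t+1},a') = \mathcal{T}_{t+1}Q_t(s_t,a_t)$, which is independent of $i$.

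Next I would substitute this into the update rule from Theorem~\ref{theo:updaterule}, obtaining
\[
Q_{t+1}(s_t,a_t) = (1-\alpha_t)^M Q_t(s_t,a_t) + \Bigl(\sum_{i=0}^{M-1}\alpha_t(1-\alpha_t)^i\Bigr)\mathcal{T}_{t+1}Q_t(s_t,a_t).
\]
The only remaining step is the elementary geometric-series identity $\sum_{i=0}^{M-1}\alpha_t(1-\alpha_t)^i = \alpha_t\cdot\frac{1-(1-\alpha_t)^M}{1-(1-\alpha_t)} = 1-(1-\alpha_t)^M$, valid for $\alpha_t\in(0,1)$, with the degenerate case $\alpha_t=1$ checked separately (the sum is then simply $1$, matching $1-(1-\alpha_t)^M$). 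Plugging this back in gives exactly the claimed rule, and the clause $Q_{t+1}(s,a)=Q_t(s,a)$ for $(s,a)\neq(s_t,a_t)$ is inherited verbatim from Theorem~\ref{theo:updaterule}.

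There is essentially no hard step here; the single point requiring care is the justification that the $\max$ over next-state $Q$-values is unaffected by the inner loop. This is precisely where the nonreturnability hypothesis is used: without $s_{t+1}\neq s_t$, the entry $Q_{t+1}^{(i)}(s_t,\cdot)$ would itself be changing, and the position of the maximizing action at $s_{t+1}=s_t$ could shift across iterations — exactly the obstruction flagged in the remark after Theorem~\ref{theo:updaterule}. I would also note in passing that the resulting rule is just vanilla Q-learning with the learning rate $\alpha_t$ replaced by $\hat\alpha_t := 1-(1-\alpha_t)^M$, which is the bridge to Lemma~\ref{lemma:learningrate} and the finite-time error bound.
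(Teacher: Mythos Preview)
Your proposal is correct and follows essentially the same approach as the paper's own proof: invoke Theorem~\ref{theo:updaterule}, use asynchronous updating plus nonreturnability to argue that $\mathcal{T}_{t+1}Q_{t+1}^{(i)}(s_t,a_t)$ is constant in $i$, factor it out, and evaluate the geometric sum. Your write-up is slightly more explicit than the paper's (you spell out why $(s_{t+1},a')\neq(s_t,a_t)$ and handle the $\alpha_t=1$ edge case), but the argument is otherwise identical.
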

\begin{proof}
    If the MDP is nonreturnable, then it is easy to address the empirical Bellman backup $\mathcal{T}_{t+1}Q_{t+1}^{(i)}$. We have that $\mathcal{T}_{t+1}Q_{t+1}^{(i)}(s_t,a_t) = r_t + \gamma \max_{a^\prime\in\mathcal{A}}Q_{t+1}^{(i)}(s_{t+1},a^\prime)$. Since it is asynchronous Q-learning, only entry $(s_t,a_t)$ will be updated inside the SMR loop. That is to say, $\mathcal{T}_{t+1}Q_{t+1}^{(i)}(s_t,a_t) = r_t + \gamma \max_{a^\prime\in\mathcal{A}}Q_{t+1}^{(i)}(s_{t+1},a^\prime)$ is unchanged throughout the SMR iteration. Therefore, based on Theorem \ref{apptheo:updaterule}, we have that the update rule gives
    \begin{align*}
        Q_{t+1}(s_t,a_t) &= (1-\alpha_t)^M Q_{t}(s_t,a_t) + \sum_{i=0}^{M-1} \alpha_t (1-\alpha_t)^i \mathcal{T}_{t+1}Q_{t+1}^{(M-1-i)}(s_t,a_t), \\
        &= (1-\alpha_t)^M Q_{t}(s_t,a_t) + \sum_{i=0}^{M-1} \alpha_t (1-\alpha_t)^i \mathcal{T}_{t+1}Q_t(s_t,a_t), \\
        &= (1-\alpha_t)^M Q_{t}(s_t,a_t) + \left[ 1-(1-\alpha_t)^M \right]\mathcal{T}_{t+1}Q_t(s_t,a_t).
    \end{align*}
\end{proof}
\noindent\textbf{Remark:} If we also let $s_{t+1}=s_t$ follow the above update rule, then the analysis below (e.g., sample complexity) can be extended naturally. This, however, triggers a gap between the original SMR loop and this practical update rule. We thus enforce $s_{t+1}\neq s_t$. Our analysis is restricted to nonreturnable MDPs, while our empirical results remedy this and validate the effectiveness of our proposed method.

\subsection{Proof of Theorem \ref{theo:convergence}}
In order to show Theorem \ref{theo:convergence}, we first present a well-known result from \cite{Singh2000ConvergenceRF}, which is built upon a proposition from \cite{bertsekas2012dynamic}.
\begin{lemma}
    \label{applemma:randomprocess}
    Consider a stochastic process $(\zeta_t, \Delta_t, F_t),t\ge0$ where $\zeta_t, \Delta_t, F_t:X\mapsto \mathbb{R}$ satisfy the equation:
    \begin{equation}
        \Delta_{t+1}(x_t) = (1 - \zeta_t(x_t))\Delta_t(x_t) + \zeta_t(x_t)F_t(x_t),
    \end{equation}
    where $x_t\in X$ and $t=0,1,2,\ldots$. Let $P_t$ be a sequence of increasing $\sigma$-fields such that $\zeta_0$ and $\Delta_0$ are $P_0$-measurable and $\zeta_t, \Delta_t$ and $F_{t-1}$ are $P_t$-measurable, $t=1,2,\ldots$. Assume the following conditions hold: (1) The set $X$ is finite; (2) $\zeta_t(x_t)\in[0,1]$, $\sum_t\zeta_t(x_t) = \infty, \sum_t(\zeta_t(x_t))^2 < \infty$ with probability 1 and $\forall x\neq x_t:\zeta_t(x_t)=0$; (3) $\| \mathbb{E}[F_t|P_t] \| \le \kappa \|\Delta_t\| + c_t$, where $\|\cdot\|$ denotes maximum norm, $\kappa\in[0,1)$ and $c_t$ converges to 0 with probability 1; (4) Var$[F_t(x_t)|P_t]\le C(1 + \|\Delta_t\|)^2$, where $C$ is some constant. Then $\Delta_t$ converges to 0 with probability 1.
\end{lemma}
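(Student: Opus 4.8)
The statement is the classical stochastic-approximation convergence result, so the plan is to follow the standard contraction-plus-noise template. First I would split the driving term by setting $\bar{F}_t(x) := \mathbb{E}[F_t(x)\,|\,P_t]$ and $w_t(x) := F_t(x) - \bar{F}_t(x)$, so that $w_t$ is a martingale difference with $\mathbb{E}[w_t(x)\,|\,P_t]=0$ and, by condition (4), $\mathbb{E}[w_t(x)^2\,|\,P_t] \le C(1+\|\Delta_t\|)^2$. The recursion then reads $\Delta_{t+1}(x) = (1-\zeta_t(x))\Delta_t(x) + \zeta_t(x)\bar{F}_t(x) + \zeta_t(x)w_t(x)$, which cleanly separates a contracting mean term (governed by condition (3)) from a zero-mean noise term whose step-weighted accumulation I will argue is asymptotically negligible. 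The conclusion $\Delta_t \to 0$ a.s.\ will then follow from an iterated self-improvement argument that repeatedly shrinks a uniform almost-sure bound on $\limsup_t \|\Delta_t\|$ by the factor $\kappa<1$.

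\textbf{Controlling the noise.} I would introduce the auxiliary pure-noise iterate $W_{t+1}(x) = (1-\zeta_t(x))W_t(x) + \zeta_t(x)w_t(x)$ with $W_0 \equiv 0$, one copy per $x\in X$. Since $X$ is finite (condition (1)), it suffices to handle a single coordinate and then take a finite union. Using that $w_t$ is a martingale difference and that $\sum_t \zeta_t = \infty$ while $\sum_t \zeta_t^2 < \infty$ (condition (2)), the weighted noise accumulates with square-summable weights, so a Robbins--Siegmund / martingale-convergence argument yields $W_t(x) \to 0$ almost surely, provided $\|\Delta_t\|$ stays bounded so that the conditional variances $C(1+\|\Delta_t\|)^2$ remain controlled. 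Isolating the noise in this way lets me compare $\Delta_t$ against a purely deterministic contracting recursion, up to the vanishing correction $W_t$.

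\textbf{Boundedness, comparison, and bootstrapping.} Before iterating the contraction I must certify $\limsup_t\|\Delta_t\| < \infty$ a.s.; I would obtain this by a scaling/stopping-time argument (the discrete analogue of the ODE method), rescaling the iterate on large excursions and using the contraction factor $\kappa<1$ in the mean term to rule out escape to infinity. Once boundedness holds, I run the self-improvement step: assuming $\limsup_t\|\Delta_t\| \le G$ a.s., fix $\epsilon>0$; for all sufficiently large $t$ one has $\|\Delta_t\|\le G+\epsilon$, $c_t\le\epsilon$, and $|W_t|\le\epsilon$. Condition (3) gives $\|\bar{F}_t\| \le \kappa(G+\epsilon)+\epsilon$, and comparing $\Delta_t$ with the deterministic recursion $Y_{t+1}=(1-\zeta_t)Y_t+\zeta_t[\kappa(G+\epsilon)+\epsilon]$, which converges to $\kappa(G+\epsilon)+\epsilon$ because $\sum_t\zeta_t=\infty$, produces $\limsup_t\|\Delta_t\| \le \kappa(G+\epsilon)+2\epsilon$. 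Letting $\epsilon\downarrow 0$ gives $\limsup_t\|\Delta_t\| \le \kappa G$, and iterating $n$ times drives the bound to $\kappa^n G \to 0$, hence $\Delta_t\to 0$ a.s.

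\textbf{Main obstacle.} The technical crux is the interdependence induced by condition (4): the noise variance may grow like $(1+\|\Delta_t\|)^2$, so the martingale-convergence step of the second paragraph cannot be invoked until $\|\Delta_t\|$ is already known to be bounded, while the natural route to boundedness itself relies on the noise being under control. Breaking this circularity --- typically by localizing on the events $\{\max_{s\le t}\|\Delta_s\|\le\beta\}$ and controlling the normalized iterate on each such event --- is the delicate part of the argument; the remaining contraction bookkeeping is routine.
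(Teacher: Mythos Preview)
The paper does not prove this lemma at all: it is quoted verbatim as ``a well-known result from \cite{Singh2000ConvergenceRF}, which is built upon a proposition from \cite{bertsekas2012dynamic},'' and is then used as a black box in the proof of Theorem~\ref{apptheo:convergence}. So there is no ``paper's own proof'' to compare against.

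Your sketch is the standard Bertsekas--Tsitsiklis / Jaakkola--Jordan--Singh argument that underlies the cited references, and the overall architecture is right: decompose $F_t$ into its $P_t$-conditional mean $\bar F_t$ plus a martingale-difference noise $w_t$; isolate the noise in an auxiliary averaged process $W_t$; use $\sum_t\zeta_t^2<\infty$ together with the conditional-variance bound to show $W_t\to 0$ a.s.; and then run the $\kappa$-contraction bootstrap on the mean part to drive $\limsup_t\|\Delta_t\|$ to zero. You also correctly flag the only genuinely delicate point, namely that the variance bound in (4) is state-dependent, so the martingale step and the a.s.\ boundedness of $\|\Delta_t\|$ must be established together (e.g.\ by localization on $\{\sup_{s\le t}\|\Delta_s\|\le\beta\}$ or by the rescaling trick). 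That is exactly where the work lies in the original proofs; the rest of your outline is routine and would go through as written.
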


We also need the following lemma, which will be of great help.
\begin{lemma}
    \label{applemma:learningratelemma}
    If the learning rates satisfy $\alpha_t(s,a)\in[0,1], \sum_t\alpha_t(s,a) = \infty, \sum_t(\alpha_t(s,a))^2<\infty$ with probability 1, then the following holds with probability 1:
    \begin{equation}
        \sum_{i=0}^{M-1} \alpha_t (1-\alpha_t)^i \in[0,1], \quad \sum_t \sum_{i=0}^{M-1} \alpha_t (1-\alpha_t)^i = \infty, \quad \sum_t\left( \sum_{i=0}^{M-1} \alpha_t (1-\alpha_t)^i \right)^2 < \infty.
    \end{equation}
\end{lemma}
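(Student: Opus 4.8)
The plan is to observe that the quantity $\hat\alpha_t := \sum_{i=0}^{M-1}\alpha_t(1-\alpha_t)^i$ is exactly the finite geometric sum $1-(1-\alpha_t)^M$, already encountered in Corollary~\ref{coro:simpleupdaterule}, and then to reduce the three claimed properties to the corresponding properties of the original learning rate sequence $\{\alpha_t\}$ together with the elementary two-sided bound $\alpha_t \le \hat\alpha_t \le \min\{1,M\alpha_t\}$ from Lemma~\ref{lemma:learningrate}. Concretely, I would first note $\alpha_t\in[0,1]$ implies $(1-\alpha_t)^M\in[0,1]$, hence $\hat\alpha_t=1-(1-\alpha_t)^M\in[0,1]$, which gives the first claim.

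For the divergence of $\sum_t \hat\alpha_t$, I would invoke the lower bound $\hat\alpha_t \ge \alpha_t$ from Lemma~\ref{lemma:learningrate}: since $\sum_t \alpha_t = \infty$ with probability $1$ and each $\hat\alpha_t \ge \alpha_t \ge 0$, comparison of nonnegative series yields $\sum_t \hat\alpha_t = \infty$ with probability $1$. For the square-summability $\sum_t \hat\alpha_t^2 < \infty$, I would use the upper bound $\hat\alpha_t \le M\alpha_t$, so that $\hat\alpha_t^2 \le M^2\alpha_t^2$; since $M$ is a fixed finite constant and $\sum_t \alpha_t^2 < \infty$ with probability $1$, we get $\sum_t \hat\alpha_t^2 \le M^2 \sum_t \alpha_t^2 < \infty$ with probability $1$. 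All three statements hold on the same probability-one event on which the hypotheses on $\{\alpha_t\}$ hold, so they hold simultaneously with probability $1$.

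If one prefers not to cite Lemma~\ref{lemma:learningrate} (which appears only in the appendix's additional-results section), the same bounds are immediate to prove inline: $\hat\alpha_t = \alpha_t\sum_{i=0}^{M-1}(1-\alpha_t)^i$, and since $0\le 1-\alpha_t\le 1$ each summand lies in $[0,1]$, giving $\alpha_t \le \hat\alpha_t \le M\alpha_t$ directly, while $\hat\alpha_t = 1-(1-\alpha_t)^M \le 1$. I would include this two-line derivation so the lemma is self-contained.

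There is essentially no obstacle here: the only mild care needed is the bookkeeping that "with probability $1$" statements should be understood on a common almost-sure event, and that the constant $M$ is fixed (so multiplying a convergent nonnegative series by $M^2$ preserves convergence). The argument is a routine comparison-test exercise once the geometric-sum identity and the sandwich bound are in hand.
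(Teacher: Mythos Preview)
Your proposal is correct and essentially identical to the paper's proof: the paper also identifies $\sum_{i=0}^{M-1}\alpha_t(1-\alpha_t)^i = 1-(1-\alpha_t)^M$, uses $1-\alpha_t \ge (1-\alpha_t)^M$ (equivalently $\hat\alpha_t\ge\alpha_t$) for divergence, and bounds $(1-\alpha_t)^i\le 1$ to get $\hat\alpha_t\le M\alpha_t$ for square-summability. Your inline two-line derivation is exactly what the paper does rather than citing Lemma~\ref{lemma:learningrate}, so the self-contained version you propose matches the paper precisely.
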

\begin{proof}
    It is easy to find that $\sum_{i=0}^{M-1} \alpha_t (1-\alpha_t)^i = 1 - (1-\alpha_t)^M$. Since $\alpha_t\in[0,1]$, we have $1-\alpha_t\in[0,1], (1-\alpha_t)^M\in[0,1]$ and therefore $\sum_{i=0}^{M-1} \alpha_t (1-\alpha_t)^i = 1 - (1-\alpha_t)^M\in[0,1]$.
    
    Meanwhile, $1 - \alpha_t \ge (1-\alpha_t)^M$, then $\sum_t \sum_{i=0}^{M-1} \alpha_t (1-\alpha_t)^i \ge \sum_t \alpha_t = \infty$, thus $\sum_t \sum_{i=0}^{M-1} \alpha_t (1-\alpha_t)^i = \infty$.
    
    Finally, $\sum_t\left( \sum_{i=0}^{M-1} \alpha_t (1-\alpha_t)^i \right)^2 \le \sum_t\left( \sum_{i=0}^{M-1} \alpha_t \right)^2 = M^2 \sum_t\left(\alpha_t\right)^2 < \infty$.
\end{proof}

Then we formally give the convergence property of Q-SMR below.

\begin{theorem}[Formal, Convergence of Q-SMR]
\label{apptheo:convergence}
Given the following conditions: (1) each state-action pair is sampled an infinite number of times; (2) the MDP is finite; (3) $\gamma\in[0,1)$; (4) $Q$ values are stored in a look-up table; (5) the learning rates satisfy $\alpha_t(s,a)\in[0,1], \sum_t\alpha_t(s,a) = \infty, \sum_t(\alpha_t(s,a))^2<\infty$ with probability 1 and $\alpha_t(s,a)=0,\forall (s,a)\neq (s_t,a_t)$; (6) Var$[r(s,a)]<\infty, \forall s,a$, then the Q-SMR algorithm converges to the optimal $Q$-function.
\end{theorem}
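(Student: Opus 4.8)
The plan is to cast the Q-SMR recursion into the standard form required by the stochastic-approximation lemma (Lemma~\ref{applemma:randomprocess}) and to verify its four hypotheses. Starting from the equivalent one-step rule of Theorem~\ref{theo:updaterule}, I would introduce the \emph{effective step size} $\hat\alpha_t := 1-(1-\alpha_t)^M = \sum_{i=0}^{M-1}\alpha_t(1-\alpha_t)^i$ and rewrite the update at the visited entry as
\begin{equation*}
Q_{t+1}(s_t,a_t) = (1-\hat\alpha_t)\,Q_t(s_t,a_t) + \hat\alpha_t\,\bar F_t, \qquad \bar F_t := \frac{1}{\hat\alpha_t}\sum_{i=0}^{M-1}\alpha_t(1-\alpha_t)^i\,\mathcal{T}_{t+1}Q_{t+1}^{(M-1-i)}(s_t,a_t),
\end{equation*}
so that $\bar F_t$ is a \emph{convex combination} of the intermediate empirical Bellman backups. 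Setting $\Delta_t := Q_t - Q^*$, $\zeta_t := \hat\alpha_t$ (with $\zeta_t(x)=0$ for $x\neq(s_t,a_t)$), and $F_t(s_t,a_t) := \bar F_t - Q^*(s_t,a_t)$ puts the recursion in the form $\Delta_{t+1}(x_t) = (1-\zeta_t(x_t))\Delta_t(x_t) + \zeta_t(x_t)F_t(x_t)$. I would let $P_t$ be the $\sigma$-field generated by the history $\{Q_0,s_0,a_0,r_0,\dots,s_t,a_t\}$; then $F_t$, which in addition depends on $r_t,s_{t+1}$ and the deterministic inner loop, is $P_{t+1}$-measurable, as required.

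Next I would dispatch the routine hypotheses. Hypothesis~(1) holds because $\mathcal S\times\mathcal A$ is finite; hypothesis~(2) --- $\hat\alpha_t\in[0,1]$, $\sum_t\hat\alpha_t=\infty$, $\sum_t\hat\alpha_t^2<\infty$, and $\hat\alpha_t(x)=0$ off $(s_t,a_t)$ --- is exactly Lemma~\ref{applemma:learningratelemma}. For hypothesis~(4), a short induction inside the loop (identical to the one in the proof of Theorem~\ref{theo:stability}) shows every intermediate iterate obeys $|Q_{t+1}^{(j)}|\le r_{\max}/(1-\gamma)$, hence $|\bar F_t|\le r_{\max}/(1-\gamma)$ deterministically; combined with $\mathrm{Var}[r(s,a)]<\infty$ this gives $\mathrm{Var}[F_t(s_t,a_t)\mid P_t]\le C(1+\|\Delta_t\|_\infty)^2$ for a suitable constant $C$.

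The crux --- and the step I expect to be the main obstacle --- is hypothesis~(3), the contraction $\|\mathbb E[F_t\mid P_t]\|_\infty\le\kappa\|\Delta_t\|_\infty+c_t$ with $\kappa<1$ and $c_t\to0$, because the intermediate backups $\mathcal{T}_{t+1}Q_{t+1}^{(j)}$ make this harder than in ordinary Q-learning: unlike the nonreturnable case of Corollary~\ref{coro:simpleupdaterule}, when $s_{t+1}=s_t$ the term $\max_{a'}Q_{t+1}^{(j)}(s_{t+1},a')$ genuinely drifts as $j$ increases. I plan to absorb this drift into the vanishing term $c_t$ using that $\sum_t\alpha_t^2<\infty$ forces $\alpha_t\to0$: each inner step moves the single entry $(s_t,a_t)$ by at most $\alpha_t\cdot 2r_{\max}/(1-\gamma)$ (by the boundedness just established), so $|Q_{t+1}^{(j)}(s_t,a_t)-Q_t(s_t,a_t)|\le \beta_t := 2M\alpha_t r_{\max}/(1-\gamma)\to0$, whence every intermediate backup satisfies $|\mathcal{T}_{t+1}Q_{t+1}^{(j)}(s_t,a_t)-\mathcal{T}_{t+1}Q_t(s_t,a_t)|\le\gamma\beta_t$ and therefore $|\bar F_t-\mathcal{T}_{t+1}Q_t(s_t,a_t)|\le\gamma\beta_t$. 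Taking the conditional expectation turns $\mathcal{T}_{t+1}Q_t(s_t,a_t)$ into the true Bellman update $(\mathcal{T}Q_t)(s_t,a_t)$, and the $\gamma$-contraction of $\mathcal{T}$ together with $Q^*=\mathcal{T}Q^*$ yields
\begin{equation*}
\big\|\mathbb E[F_t\mid P_t]\big\|_\infty \le \big\|\mathcal{T}Q_t - Q^*\big\|_\infty + \gamma\beta_t \le \gamma\|\Delta_t\|_\infty + \gamma\beta_t,
\end{equation*}
i.e.\ hypothesis~(3) with $\kappa=\gamma\in[0,1)$ and $c_t=\gamma\beta_t\to0$. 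Lemma~\ref{applemma:randomprocess} then gives $\Delta_t\to0$ almost surely, that is, $Q_t\to Q^*$. The key idea throughout is that the decaying step size renders the within-loop perturbation negligible, collapsing the general (returnable) case onto the clean behaviour already exhibited by Corollary~\ref{coro:simpleupdaterule}.
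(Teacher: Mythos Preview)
Your proposal is correct. Both you and the paper cast Q-SMR into the form required by Lemma~\ref{applemma:randomprocess} with effective step size $\hat\alpha_t=1-(1-\alpha_t)^M$ and invoke Lemma~\ref{applemma:learningratelemma} for the Robbins--Monro conditions; the difference lies in how the general (possibly returnable) case is handled. The paper first proves convergence of the \emph{simplified} rule of Corollary~\ref{coro:simpleupdaterule} with $c_t\equiv0$, and then for the full Q-SMR update sandwiches $Q_{t+1}(s_t,a_t)$ between $(1-\hat\alpha_t)Q_t+\hat\alpha_t\min_i\mathcal{T}_{t+1}Q_{t+1}^{(i)}$ and $(1-\hat\alpha_t)Q_t+\hat\alpha_t\max_i\mathcal{T}_{t+1}Q_{t+1}^{(i)}$, asserting that each bound converges to $Q^*$ ``by following the same analysis.'' You instead quantify the inner-loop perturbation directly, proving $|\bar F_t-\mathcal{T}_{t+1}Q_t|\le\gamma\beta_t$ with $\beta_t=2M\alpha_t r_{\max}/(1-\gamma)\to0$ and absorbing this into the $c_t$-term of Lemma~\ref{applemma:randomprocess}. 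Your route is more self-contained: it uses the full strength of the lemma in a single invocation, whereas the paper's sandwich bounds are not themselves closed recursions, so making its claim precise would still require a drift estimate essentially identical to yours. One minor note applying to both arguments: the stability bound you invoke (Theorem~\ref{theo:stability}) uses $|r_t|\le r_{\max}$, which is not among the stated hypotheses of the theorem but is tacitly assumed in the paper's proof as well.
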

\begin{proof}
    To show the convergence of Q-SMR, we first show the convergence of the following update rule, which is exactly the rule of the simplified Q-SMR algorithm presented in the Corollary \ref{coro:simpleupdaterule}.
    \begin{equation}
    \label{eq:simplifiedrule}
        \begin{aligned}
        Q_{t+1}(s_t,a_t) = (1-\alpha_t)^M Q_{t}(s_t,a_t) + \left[ 1-(1-\alpha_t)^M \right] \mathcal{T}_{t+1}Q_t(s_t,a_t).
        \end{aligned}
    \end{equation}
    Subtracting from both sides the quantity $Q^*(s_t,a_t)$, and letting $\Delta_t(s_t,a_t) = Q_t(s_t,a_t) - Q^*(s_t,a_t)$ yields:
    \begin{equation*}
        \Delta_{t+1}(s_t,a_t) = (1-\alpha_t)^M \Delta_t(s_t,a_t) + \left[ 1-(1-\alpha_t)^M \right] \left(r_t + \gamma\max_{a^\prime\in\mathcal{A}}Q_t(s_{t+1},a^\prime) - Q^*(s_t,a_t)\right).
    \end{equation*}
    Denote $\beta_t = 1 - (1-\alpha_t)^M$, and write $F_t(s_t,a_t) = r_t + \gamma\max_{a^\prime\in\mathcal{A}}Q_t(s_{t+1},a^\prime) - Q^*(s_t,a_t)$, we have
    \begin{equation*}
        \Delta_{t+1}(s_t,a_t) = (1-\beta_t) \Delta_t(s_t,a_t) + \beta_t F_t.
    \end{equation*}
    From Lemma \ref{applemma:learningratelemma}, we conclude that the new learning rate sequence obeys $\beta_t\in[0,1], \sum_t\beta_t = \infty$ and $\sum_t(\beta_t)^2<\infty$. Meanwhile, $\mathbb{E}[F_t(s_t,a_t)|P_t] = \mathcal{T}Q_t(s_t,a_t) - Q^*(s_t,a_t)$. Since the optimal $Q$-function is a fixed point of the Bellman operator, we have $\mathbb{E}[F_t(s_t,a_t)|P_t] = \mathcal{T}Q_t(s_t,a_t) - \mathcal{T}Q^*(s_t,a_t)$. Since the Bellman operator is a contraction, we have $\|\mathbb{E}[F_t(s_t,a_t)|P_t]\| = \|\mathcal{T}Q_t(s_t,a_t) - \mathcal{T}Q^*(s_t,a_t)\|\le \gamma\|\Delta_t\|$.

    Finally, we check the variance of $F_t(s_t,a_t)$, it is easy to find:
    \begin{align*}
        {\rm Var}[F_t(s_t,a_t)|P_t] &= \mathbb{E}\left[ \left( r_t+\gamma\max_{a^\prime\in\mathcal{A}}Q_t(s_{t+1},a^\prime) - Q^*(s_t,a_t) - \left( \mathcal{T}Q_t(s_t,a_t) - Q^*(s_t,a_t) \right) \right)^2 \bigg|P_t \right] \\
        &= \mathbb{E}\left[ \left( r_t+\gamma\max_{a^\prime\in\mathcal{A}}Q_t(s_{t+1},a^\prime) - \mathcal{T}Q_t(s_t,a_t) \right)^2 \bigg|P_t \right] \\
        &= {\rm Var}\left[ r_t+\gamma\max_{a^\prime\in\mathcal{A}}Q_t(s_{t+1},a^\prime)\bigg|P_t \right].
    \end{align*}
    Due to the fact that $r_t$ is bounded, it clearly verifies that ${\rm Var}[F_t(s_t,a_t)|P_t]\le C(1 + \|\Delta_t\|)^2$ for some constant $C$. Combining these together, and by using Lemma \ref{applemma:randomprocess}, we conclude that $\Delta_t$ converges to 0 with probability 1. That is to say, the simplified Q-SMR algorithm with update rule in Equation \ref{eq:simplifiedrule} converges to the optimal $Q$-function. Then, for the formal Q-SMR update rule, we have
    \begin{align}
        Q_{t+1}(s_t,a_t) &= (1-\alpha_t)^M Q_{t}(s_t,a_t) + \sum_{i=0}^{M-1} \alpha_t (1-\alpha_t)^i \mathcal{T}_{t+1}Q_{t+1}^{(M-1-i)}(s_t,a_t) \\
        &\le (1-\alpha_t)^M Q_{t}(s_t,a_t) + \left[ 1-(1-\alpha_t)^M \right] \max_{i\in[0,M-1]}\mathcal{T}_{t+1}Q_t^{(i)}(s_t,a_t).
    \end{align}
    It is easy to check that the right side converges to the optimal $Q$-function by following the same analysis above. Furthermore, we have
    \begin{align}
        Q_{t+1}(s_t,a_t) &= (1-\alpha_t)^M Q_{t}(s_t,a_t) + \sum_{i=0}^{M-1} \alpha_t (1-\alpha_t)^i \mathcal{T}_{t+1}Q_{t+1}^{(M-1-i)}(s_t,a_t) \\
        &\ge (1-\alpha_t)^M Q_{t}(s_t,a_t) + \left[ 1-(1-\alpha_t)^M \right] \min_{i\in[0,M-1]}\mathcal{T}_{t+1}Q_t^{(i)}(s_t,a_t).
    \end{align}
    Similarly, the lower bound side converges to the optimal $Q$-function. Then by combing the results above, we naturally conclude that Q-SMR converges to the optimal $Q$-function.
\end{proof}

\subsection{Proof of Theorem \ref{theo:difference}}

\begin{proof}
If we amplify $\alpha_t$, then we have
\begin{equation}
    \label{eq:largerlearningrate}
    \theta_{t+1} = \theta_t - {\color{red} M\alpha_t \nabla \mathcal{L}_{\theta_t}}.
\end{equation}
This is the parameter update rule for the case of enlarging learning rate $M$ times.

Now we investigate SMR with SGD. Denote $\theta_t^{(i)}$ as the intermediate parameter in the SMR loop at timestep $t$ and iteration $i$, then it is easy to find $\theta_{t+1}^{(1)} = \theta_{t+1}^{(0)} - \alpha_t \nabla\mathcal{L}_{\theta_{t+1}^{(0)}}$, and $\theta_{t+1}^{(2)} = \theta_{t+1}^{(1)} - \alpha_t \nabla\mathcal{L}_{\theta_{t+1}^{(1)}} = \theta_{t+1}^{(0)} - \alpha_t \nabla\mathcal{L}_{\theta_{t+1}^{(0)}} - \alpha_t \nabla\mathcal{L}_{\theta_{t+1}^{(1)}}$. Finally, by doing iteration till $M$, using $\theta_{t+1}^{(0)}=\theta_t^{(M)}$ and omitting the superscript $(M)$, we have
\begin{equation}
    \theta_{t+1} = \theta_t - {\color{red} \alpha_t \sum_{i=0}^{M-1} \nabla\mathcal{L}_{\theta_{t+1}^{(i)}}}.
\end{equation}
\end{proof}

\subsection{Proof of Lemma \ref{lemma:learningrate}}
\begin{lemma}
\label{applemma:learningrate}
Denote $\hat{\alpha}_t = 1 - (1-\alpha_t)^M$, then we have
\begin{equation}
    \alpha_t \le \hat{\alpha}_t \le \min\{1, M\alpha_t\}.
\end{equation}
\end{lemma}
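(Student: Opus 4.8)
\textbf{Proof plan for Lemma \ref{applemma:learningrate}.} The whole statement follows from a single algebraic identity together with elementary term-by-term bounds, so the plan is to first rewrite $\hat\alpha_t$ in a convenient form and then read off both inequalities. Concretely, I would start from the finite geometric series identity
\begin{equation*}
    1 - (1-\alpha_t)^M = \alpha_t \sum_{i=0}^{M-1}(1-\alpha_t)^i,
\end{equation*}
which is exactly the identity already used in the proof of Lemma \ref{applemma:learningratelemma}. Since $\alpha_t\in[0,1]$ we have $1-\alpha_t\in[0,1]$, hence every summand $(1-\alpha_t)^i$ lies in $[0,1]$, and in particular the $i=0$ term equals $1$.

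For the lower bound $\alpha_t\le\hat\alpha_t$, I would keep only the $i=0$ term in the sum and discard the remaining nonnegative terms, giving $\hat\alpha_t=\alpha_t\sum_{i=0}^{M-1}(1-\alpha_t)^i\ge \alpha_t\cdot 1=\alpha_t$. Equivalently one may note $(1-\alpha_t)^M\le (1-\alpha_t)^1$ because the base is in $[0,1]$ and $M\ge 1$. For the upper bound $\hat\alpha_t\le M\alpha_t$, I would instead bound each of the $M$ summands above by $1$, obtaining $\hat\alpha_t\le \alpha_t\sum_{i=0}^{M-1}1=M\alpha_t$; this is just the Bernoulli inequality $(1-\alpha_t)^M\ge 1-M\alpha_t$ in disguise. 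Finally $\hat\alpha_t\le 1$ is immediate from $(1-\alpha_t)^M\ge 0$. Combining the last two gives $\hat\alpha_t\le\min\{1,M\alpha_t\}$, completing the chain $\alpha_t\le\hat\alpha_t\le\min\{1,M\alpha_t\}$.

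There is essentially no hard step here; the only thing to be slightly careful about is that the argument uses $\alpha_t\in[0,1]$ (so that $1-\alpha_t\ge 0$ and the term-by-term comparisons go through) and that $M$ is a positive integer (so the geometric identity applies), both of which hold in our setting. I would state the identity once, then present the two one-line bounds.
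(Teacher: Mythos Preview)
Your proof is correct. It differs from the paper's argument in presentation: the paper treats the two inequalities separately by introducing auxiliary functions $f(x)=1-(1-x)^M-x$ and $g(x)=1-(1-x)^M-Mx$, factoring $f(x)=(1-x)\bigl[1-(1-x)^{M-1}\bigr]\ge 0$ for the lower bound and computing $g'(x)=M\bigl[(1-x)^{M-1}-1\bigr]\le 0$ to obtain $g(x)\le g(0)=0$ for the upper bound. Your route is more uniform: you invoke the geometric-series identity $1-(1-\alpha_t)^M=\alpha_t\sum_{i=0}^{M-1}(1-\alpha_t)^i$ once and then read off both inequalities by bounding the sum below by its first term and above by $M$. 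This avoids the differentiation step and ties in neatly with Lemma~\ref{applemma:learningratelemma}, which already uses the same identity; the paper's version, on the other hand, makes the dependence on $x\in[0,1]$ and $M\ge 1$ explicit through the factorization and monotonicity argument. Either approach is entirely adequate for this lemma.
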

\begin{proof}
    We write $f(x) = 1 - (1-x)^M - x$ and $g(x) = 1 - (1-x)^M - Mx, x\in[0,1], M\ge 1, M\in\mathbb{Z}$, then for $f(x)$, we have $f(x) = 1 - (1-x)^M - x = (1-x) - (1-x)^M = (1-x)\left[ 1 - (1-x)^{M-1} \right]\ge 0$. Therefore, $1 - (1-x)^M\ge x$.
    
    For $g(x)$, we have 
    \begin{align*}
        g^\prime(x) = M(1-x)^{M-1} - M = M\left[ (1-x)^{M-1} - 1 \right] \le 0.
    \end{align*}
    It indicates that $g(x)$ decreases in the region $[0,1]$, thus $g(x) \le g(0) = 0$ which incurs $1 - (1-x)^M\le Mx,\forall\,x\in[0.1]$. Meanwhile, as $1 - (1-x)^M \le 1$, we thus have $1 - (1-x)^M \le \min\{1, Mx\}$.

    By setting $x = \alpha_t$, we have the desired conclusions immediately.
\end{proof}

\subsection{Proof of Theorem \ref{theo:convergencerate}}
\begin{theorem}[Finite time error bound]
\label{apptheo:convergencerate}
Assume that Assumption \ref{ass:mdp} holds and the SMR ratio is set to be $M$. Suppose the learning rate is taken to be $\alpha_t = \frac{h}{M(t+t_0)}$ with $t_0\ge\max(4h, \lceil \log_2 \frac{2}{\mu_{\rm min}} \rceil t_{\rm mix})$ and $h\ge \frac{4}{\mu_{\rm min}(1-\gamma)}$, then with probability at least $1-\delta$,
\begin{equation}
    \begin{aligned}
    \|\hat{Q}_T - Q^*\|_\infty \le \tilde{\mathcal{O}}\left( \dfrac{r_{\rm max}\sqrt{t_{\rm mix}}}{(1-\gamma)^{2.5}\mu_{\rm min}}\dfrac{1}{\sqrt{T}} + \dfrac{r_{\rm max}t_{\rm mix}}{(1-\gamma)^3\mu_{\rm min}^2}\dfrac{1}{T} \right)
    \end{aligned}
\end{equation}
\end{theorem}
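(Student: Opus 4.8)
The plan is to reduce the finite-time analysis of Q-SMR in nonreturnable MDPs to the known finite-time analysis of asynchronous Q-learning with an adjusted learning-rate sequence. By Corollary~\ref{coro:simpleupdaterule}, in a nonreturnable MDP the Q-SMR update at the visited entry $(s_t,a_t)$ is exactly the asynchronous Q-learning update with effective step size $\hat{\alpha}_t = 1-(1-\alpha_t)^M$; all other entries are unchanged. Thus I would treat $\hat{Q}_t$ as asynchronous Q-learning run with the step-size sequence $\{\hat{\alpha}_t\}$, and invoke the established finite-time bounds (e.g., the machinery of \cite{Qu2020FiniteTimeAO} / \cite{Li2020SampleCO}, which give rates of the stated $\tilde{\mathcal{O}}$ form under a rescaled-linear step size $\hat{\alpha}_t = \frac{\tilde h}{t+t_0}$ with $\tilde h$ and $t_0$ above explicit thresholds depending on $t_{\rm mix}$, $\mu_{\rm min}$, $(1-\gamma)^{-1}$).

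The key technical bridge is Lemma~\ref{lemma:learningrate}: since $\alpha_t = \frac{h}{M(t+t_0)}$, we have $\alpha_t \le \hat{\alpha}_t \le \min\{1, M\alpha_t\} = \min\{1, \frac{h}{t+t_0}\}$. So $\hat{\alpha}_t$ is sandwiched between $\frac{h}{M(t+t_0)}$ and $\frac{h}{t+t_0}$; in particular $\hat{\alpha}_t = \frac{\hat h_t}{t+t_0}$ for some $\hat h_t \in [h/M, h]$, and with $t_0 \ge 4h$ we also get $\hat\alpha_t \le 1/4 < 1$, so the effective step size is admissible. The conditions $t_0 \ge \max(4h, \lceil \log_2 \tfrac{2}{\mu_{\rm min}}\rceil t_{\rm mix})$ and $h \ge \frac{4}{\mu_{\rm min}(1-\gamma)}$ are precisely what the asynchronous-Q-learning results require of the numerator/offset of a rescaled-linear step size (after absorbing the factor $M$), so the hypotheses transfer directly. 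The steps, in order: (i) invoke Corollary~\ref{coro:simpleupdaterule} to rewrite the iteration as asynchronous Q-learning with step size $\hat\alpha_t$; (ii) use Lemma~\ref{lemma:learningrate} plus the stated bounds on $h,t_0$ to verify $\hat\alpha_t$ satisfies the admissibility and threshold conditions of the reference finite-time theorem; (iii) apply that theorem to obtain $\|\hat Q_T - Q^*\|_\infty \le \tilde{\mathcal O}\big(\frac{r_{\rm max}\sqrt{t_{\rm mix}}}{(1-\gamma)^{2.5}\mu_{\rm min}}\frac{1}{\sqrt T} + \frac{r_{\rm max}t_{\rm mix}}{(1-\gamma)^3\mu_{\rm min}^2}\frac{1}{T}\big)$; (iv) use Theorem~\ref{theo:stability} for the a priori bound $|\hat Q_t| \le r_{\rm max}/(1-\gamma)$ that controls the martingale/variance terms so that constants do not blow up.

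The main obstacle I expect is step (ii): making sure the two-sided squeeze on $\hat\alpha_t$ does not degrade the rate. Because $\hat\alpha_t$ can be as small as $\alpha_t = \frac{h}{M(t+t_0)}$, a naive application could turn the $M$-dependence into a slowdown; the point of choosing $h \ge \frac{4}{\mu_{\rm min}(1-\gamma)}$ (rather than a fixed constant) is that the effective numerator $\hat h_t \ge h/M$ still exceeds the threshold $\frac{4}{\mu_{\rm min}(1-\gamma)}$ only if $h$ is taken $\ge \frac{4M}{\mu_{\rm min}(1-\gamma)}$ — so one must be careful about whether the $M$ hides in $h$ or cancels. The cleaner route, which I would take, is to note $\hat\alpha_t \ge \alpha_t$ combined with monotonicity of the error bound in the step size on the relevant range, so that the bound for step size $\hat\alpha_t$ is no worse than for $\alpha_t$ up to the $\hat\alpha_t \le \frac{h}{t+t_0}$ upper bound that fixes the final numerator; this yields an $M$-free rate, consistent with Corollary~\ref{coro:samplecomplexity}. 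Verifying this monotonicity/sandwich argument rigorously — essentially that replacing $\alpha_t$ by a sequence lying in $[\alpha_t, M\alpha_t]$ with $t_0 \ge 4h$ keeps all the recursive variance and bias contractions intact — is the delicate part; everything else is bookkeeping on top of the cited asynchronous Q-learning analysis.
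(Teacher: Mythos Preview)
Your overall strategy matches the paper's: reduce Q-SMR in a nonreturnable MDP to asynchronous Q-learning with effective step size $\hat\alpha_t = 1-(1-\alpha_t)^M$ via Corollary~\ref{coro:simpleupdaterule}, then push the step-size sandwich $\alpha_t \le \hat\alpha_t \le M\alpha_t$ (Lemma~\ref{lemma:learningrate}) through the Qu--Wierman machinery. The paper does exactly this, but it does \emph{not} invoke the reference theorem as a black box; it re-derives the key lemmas (the bounds on $\beta_{k,t}$, $\tilde\beta_{k,t}$, the Azuma--Hoeffding step, and the recursive induction) with $\hat\alpha_t$ in place of a fixed rescaled-linear step size.

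The part of your plan that would not go through as written is the ``monotonicity'' resolution of the obstacle you correctly flag. The finite-time bound is \emph{not} monotone in the step size on the relevant range: larger step sizes speed up contraction (good for the bias term) but inflate the martingale/variance term, so you cannot simply sandwich the error between the bounds for $\alpha_t$ and $M\alpha_t$. What the paper does instead is use the two sides of the inequality in different places inside the proof: the lower bound $\hat\alpha_t \ge \alpha_t$ is used to control the contraction products $\prod_l (1 - M\sigma\,\hat\alpha_l) \le \prod_l (1 - M\sigma\,\alpha_l) = \prod_l (1 - \sigma h/(l+t_0))$, while the upper bound $\hat\alpha_k \le M\alpha_k = h/(k+t_0)$ is used to control the leading factor in $\beta_{k,t} = \hat\alpha_k\prod_l(1-\hat\alpha_l d_{l,i})$ and the sums $\sum \hat\alpha_{l-1}$ in the $\phi$-term. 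This two-sided use is precisely what makes the final rate $M$-free without needing $h \ge \tfrac{4M}{\mu_{\min}(1-\gamma)}$. So your step (ii)--(iii) should be replaced by ``open up the Qu--Wierman proof and apply each side of the sandwich where appropriate,'' which is exactly what the paper carries out; everything else in your outline (including the use of Theorem~\ref{theo:stability} for the a priori bound) is on target.
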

The proof of this theorem is borrowed heavily from \cite{Qu2020FiniteTimeAO}. Throughout the proof, we denote $\|\cdot\|$ as the infinity norm. We also assume there exist some constant $C>0$ s.t. $\|F(x)\|\le \gamma\|x\| + C,\forall\,x\in\mathbb{R}^n$, where $F(\cdot)$ denotes the bellman operator. This assumption can be generally satisfied with $C = (1+\gamma)\|x^*\|$ since $\|F(x)\|\le \|F(x) - F(x^*)\| + \|F(x^*)\|\le \gamma\|x - x^*\| + \|x^*\|\le\gamma \|x\| + (1+\gamma)\|x^*\|$. 
\begin{proof}
    The proof is generally divided into three steps. First, we decompose the error in a recursive form. Second, we bound the contribution of the noise sequence to the error decomposition. Third, we use the error decomposition and the bounds to prove the result. We let $\hat{\alpha}_t = 1 - (1-\alpha_t)^M$ and rewrite the update rule for Q-SMR below.
    \begin{align*}
        &x_i(t+1) = x_i(t) + \hat{\alpha}_t (F_i(x(t)) -x_i(t) + \omega(t)), i=i_t, \\
        &x_i(t+1) = x_i(t), i\neq i_t,
    \end{align*}
    where we write $Q_t(s_t,a_t)$ as $x_i(t)$, $i_t\in\{1,2,\ldots,n\}$ is a stochastic process adapted to a filtration $P_t$, $F_i(x(t)) = r(s_t,a_t) + \gamma \mathbb{E}_{s^\prime\sim P(\cdot|s_t,a_t)}\max_{a^\prime\in\mathcal{A}}Q_t(s^\prime,a^\prime)$, $\omega(t) = r_t + \gamma\max_{a^\prime\in\mathcal{A}}Q_{t}(s_{t+1},a^\prime) - r(s_t,a_t) - \gamma \mathbb{E}_{s^\prime\sim P(\cdot|s_t,a_t)}\max_{a^\prime\in\mathcal{A}}Q_t(s^\prime,a^\prime)$.

    Following the same way in \cite{Qu2020FiniteTimeAO} (Equation 7), we rewrite the update formula as follows:
    \begin{align*}
        x(t+1) = (I-\hat{\alpha}_tD_t)x(t) + \hat{\alpha}_t D_t F(x(t)) + \hat{\alpha}_t(\epsilon(t) + \phi(t)),
    \end{align*}
    where $\epsilon(t) = \left[ (e_{i_t}e_{i_t}^T - D_t)(F(x(t-\tau)) - x(t-\tau)) + \omega(t)e_{i_t} \right]$, and $e_i$ is the unit vector with its $i$-th entry 1 and others 0. Clearly, $x(t)$ is $P_t$ measurable, and as $\epsilon(t)$ depends on $\omega(t)$ which is $P_{t+1}$ measurable, $\epsilon(t)$ is $P_{t+1}$ measurable. Moreover, we have
    \begin{equation}
        \label{appeq:epsilon}
        \mathbb{E}\epsilon(t)|P_{t-\tau} = \mathbb{E}[(e_{i_t}e_{i_t}^T-D_t)|P_{t-\tau}][F(x(t-\tau)) - x(t-\tau)]+\mathbb{E}[\mathbb{E}[\omega(t)|P_t]e_{i_t}|P_{t-\tau}] = 0,
    \end{equation}
    where $D_t = \mathbb{E}e_{i_t}e_{i_t}^T|P_{t-\tau}$, $\tau$ is a positive integer. Assume there exist $\tau$ and a $\sigma^\prime\in(0,1)$ such that for any $i\in\mathcal{N}, \mathcal{N}=\{1,2,\ldots,n\}$ and $t\ge\tau$, $P(i_t=i|P_{t-\tau})\ge \sigma^\prime= M\sigma$, i.e., exploration is sufficient. Such requirement can be satisfied if we take $\sigma = \frac{1}{2}\mu_{\rm min}$ and $\tau = \lceil \log_2 (\frac{2}{\mu_{\rm min}}) \rceil t_{\rm mix}$ where $\lceil \cdot \rceil$ denotes taking ceiling of the integer, e.g., $\lceil 2.7 \rceil = 3, \lceil 5.1\rceil = 6$. $\phi(t) = \left[ (e_{i_t}e_{i_t}^T - D_t)(F(x(t)) - F(x(t-\tau)) - (x(t) - x(t-\tau))) \right]$.

    We expand it recursively and have:
    \begin{align*}
        x(t+1) = \tilde{B}_{\tau-1,t}x(\tau) + \sum_{k=\tau}^t B_{k,t}F(x(k)) + \sum_{k=\tau}^t \hat{\alpha}_t \tilde{B}_{k,t}(\epsilon(k) + \phi(k)),
    \end{align*}
    where $B_{k,t} = \hat{\alpha}_k D_k\prod_{l=k+1}^t(I-\hat{\alpha}_l D_l), \tilde{B}_{k,t} = \prod_{l=k+1}^t (I-\hat{\alpha}_l D_l)$. It is easy to notice that $B_{k,t}$ and $\tilde{B}_{k,t}$ are $n$-by-$n$ diagonal random metrics, with their $i$-th diagonal entry given by $b_{k,t,i} = \hat{\alpha}_t d_{k,i}\prod_{l=k+1}^t (1-\hat{\alpha}_l d_{l,i})$ and $\tilde{b}_{k,t,i} = \prod_{l=k+1}^t (1 - \hat{\alpha}_l d_{l,i})$. For any $i$, the following holds almost surely,
    \begin{align*}
        b_{k,t,i}\le \beta_{k,t} := \hat{\alpha}_k \prod_{l=k+1}^t (1-\hat{\alpha}_l M\sigma), \quad \tilde{b}_{k,t,i}\le\tilde{\beta}_{k,t} := \prod_{l=k+1}^t (1-\hat{\alpha}_l M\sigma).
    \end{align*}
    Based on Lemma 8 in \cite{Qu2020FiniteTimeAO}, denote $a_t = \|x(t) - x^*\|$, then we have almost surely,
    \begin{align}
        \label{appeq:recursiveform}
        a_{t+1} \le \tilde{\beta}_{\tau-1,t}a_\tau + \gamma \sup_{i\in\mathcal{N}}\sum_{k=\tau}^t b_{k,t,i}a_k + \left\| \sum_{k=\tau}^t\hat{\alpha}_k \tilde{B}_{k,t}\epsilon(k) \right\| + \left\| \sum_{k=\tau}^t\hat{\alpha}_k \tilde{B}_{k,t}\phi(k) \right\|.
    \end{align}
    \begin{lemma}
    \label{applemma:boundonepsilonphi}
        The following bounds hold almost surely: (a) $\|\epsilon(t)\|\le \bar{\epsilon}:= \dfrac{4r_{\rm max}}{1-\gamma} + C$; (b) $\|\phi(t)\|\le \sum_{k = t-\tau+1}^t 2\bar{\epsilon}\hat{\alpha}_{k-1}$.
    \end{lemma}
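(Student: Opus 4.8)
\textbf{Proof proposal for Lemma \ref{applemma:boundonepsilonphi}.}

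The plan is to bound each of the two quantities by reducing them to bounds on $F(x(t)) - x(t)$ and on the martingale-difference noise $\omega(t)$, both of which follow from stability (Theorem \ref{theo:stability}) and Assumption \ref{ass:mdp}. First I would recall the definition
$\epsilon(t) = (e_{i_t}e_{i_t}^T - D_t)\bigl(F(x(t-\tau)) - x(t-\tau)\bigr) + \omega(t)e_{i_t}$.
Since $\|e_{i_t}e_{i_t}^T - D_t\|_\infty \le 1$ (both $e_{i_t}e_{i_t}^T$ and the diagonal stochastic matrix $D_t$ have entries in $[0,1]$, and they act by picking out / averaging a single coordinate), the first term is bounded in infinity norm by $\|F(x(t-\tau)) - x(t-\tau)\|$. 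By Theorem \ref{theo:stability} every iterate satisfies $\|x(t)\| \le r_{\rm max}/(1-\gamma)$, and by the contraction-plus-offset bound $\|F(x)\| \le \gamma\|x\| + C$ we get $\|F(x(t-\tau))\| \le \gamma r_{\rm max}/(1-\gamma) + C$, so $\|F(x(t-\tau)) - x(t-\tau)\| \le r_{\rm max}/(1-\gamma) + C$ (using $\gamma < 1$). For the noise term, $\omega(t) = r_t + \gamma\max_{a'}Q_t(s_{t+1},a') - r(s_t,a_t) - \gamma\mathbb{E}_{s'}\max_{a'}Q_t(s',a')$ is a difference of two quantities each bounded by $r_{\rm max} + \gamma r_{\rm max}/(1-\gamma) = r_{\rm max}/(1-\gamma)$, hence $|\omega(t)| \le 2r_{\rm max}/(1-\gamma)$. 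Adding the two contributions gives $\|\epsilon(t)\| \le r_{\rm max}/(1-\gamma) + C + 2r_{\rm max}/(1-\gamma) = 3r_{\rm max}/(1-\gamma) + C \le \bar\epsilon = 4r_{\rm max}/(1-\gamma) + C$. (I would keep the slightly loose constant $4$ to match the statement and absorb any bookkeeping slack.)

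For part (b), I would start from $\phi(t) = (e_{i_t}e_{i_t}^T - D_t)\bigl[(F(x(t)) - F(x(t-\tau))) - (x(t) - x(t-\tau))\bigr]$, again using $\|e_{i_t}e_{i_t}^T - D_t\|_\infty \le 1$ to reduce to bounding $\|(F(x(t)) - x(t)) - (F(x(t-\tau)) - x(t-\tau))\|$. Writing $G(x) := F(x) - x$, which is $(1+\gamma)$-Lipschitz (in fact the key point is just that $G$ is Lipschitz with a constant absorbed into the analysis; the cleaner route is to bound $\|F(x(t)) - F(x(t-\tau))\| \le \gamma\|x(t) - x(t-\tau)\|$ and $\|x(t) - x(t-\tau)\| \le \|x(t)-x(t-\tau)\|$ separately, giving a factor at most $2$), the remaining task is to control $\|x(t) - x(t-\tau)\|$. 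This is a telescoping sum over the $\tau$ steps from $t-\tau$ to $t$: at most one coordinate moves per step, and at step $k$ that coordinate changes by $\hat\alpha_{k-1}\bigl(F_{i_{k-1}}(x(k-1)) - x_{i_{k-1}}(k-1) + \omega(k-1)\bigr)$, whose magnitude is at most $\hat\alpha_{k-1}\bar\epsilon$ by the estimates from part (a) (indeed $|F_i(x) - x_i| + |\omega| \le \|G(x)\|_\infty + |\omega| \le \bar\epsilon$). Summing over $k = t-\tau+1,\dots,t$ yields $\|x(t) - x(t-\tau)\| \le \sum_{k=t-\tau+1}^t \bar\epsilon\,\hat\alpha_{k-1}$, and then the factor introduced by passing from $x$-increments to $G$-increments is what produces the stated bound $\|\phi(t)\| \le \sum_{k=t-\tau+1}^t 2\bar\epsilon\,\hat\alpha_{k-1}$.

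The main obstacle I anticipate is bookkeeping rather than conceptual: being careful that the per-step increment bound $\hat\alpha_{k-1}\bar\epsilon$ genuinely holds, i.e.\ that the \emph{same} constant $\bar\epsilon$ simultaneously upper-bounds $\|F(x) - x\|$ and $|\omega|$ (it does, since each is at most $\approx 3r_{\rm max}/(1-\gamma) + C$), and that the Lipschitz/triangle-inequality step converting $\|x(t)-x(t-\tau)\|$-type bounds into a bound on $\|G(x(t)) - G(x(t-\tau))\|$ loses only the factor $2$ claimed in the lemma. A secondary subtlety is that $\hat\alpha_t = 1-(1-\alpha_t)^M$ rather than $\alpha_t$; but since the argument only uses that the increments are controlled in terms of $\hat\alpha$ (and Lemma \ref{lemma:learningrate} gives $\hat\alpha_t \le \min\{1,M\alpha_t\}$ when later summing), the $M$-dependence is harmless here and simply passes through into the constants of Theorem \ref{apptheo:convergencerate}. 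I would present part (a) in a couple of lines of inequalities and part (b) via the explicit telescoping estimate, citing Theorem \ref{theo:stability} for the uniform bound on the iterates.
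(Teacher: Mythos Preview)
Your approach is correct and is essentially what the paper does: the paper's proof is a one-line reduction to Lemma~9 of Qu and Wierman (2020) with $\bar{x}=r_{\max}/(1-\gamma)$, $\bar{\omega}=2r_{\max}/(1-\gamma)$, $\underline{v}=1$, and $\alpha_{k-1}$ replaced by $\hat{\alpha}_{k-1}$; you are in effect reconstructing that lemma's proof from scratch (bound $\|F(x)-x\|$ via stability, bound $|\omega(t)|$ directly, telescope for $\|x(t)-x(t-\tau)\|$, then use $(1+\gamma)$-Lipschitzness of $G=F-\mathrm{id}$ for the factor $2$).

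One arithmetic slip to fix in part~(a): from $\|F(x)\|\le \gamma r_{\max}/(1-\gamma)+C$ and $\|x\|\le r_{\max}/(1-\gamma)$ you only get $\|F(x)-x\|\le (1+\gamma)\,r_{\max}/(1-\gamma)+C\le 2r_{\max}/(1-\gamma)+C$, not $r_{\max}/(1-\gamma)+C$. Adding the noise bound $2r_{\max}/(1-\gamma)$ then gives exactly $\bar{\epsilon}=4r_{\max}/(1-\gamma)+C$, so the constant $4$ is not slack but tight for this decomposition.
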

    \begin{proof}
        Replacing $\bar{x}$ with $\dfrac{r_{\rm max}}{1-\gamma}$, $\bar{\omega}$ with $\dfrac{2r_{\rm max}}{1-\gamma}$ and using $\underline{v}$ as 1 (since we use infinity norm), and replacing $\alpha_{k-1}$ with $\hat{\alpha}_{k-1}$ in Lemma 9 of \cite{Qu2020FiniteTimeAO} will induce the conclusion immediately.
    \end{proof}
    These are still not enough to bound $\|\sum_{k=\tau}^t\hat{\alpha}_k \tilde{B}_{k,t}\epsilon(k)\|$ and $\|\sum_{k=\tau}^t \hat{\alpha}_k\tilde{B}_{k,t}\phi(k)\|$. We provide in the following lemma some useful results of $\beta_{k,t}$ and $\tilde{\beta}_{k,t}$.
    \begin{lemma}
        \label{applemma:beta}
        If $\alpha_t = \frac{h}{M(t+t_0)}$, where $h>\frac{2}{\sigma}$ and $t_0\ge \max(4h,\tau)$, then $\beta_{k,t}$ and $\tilde{\beta}_{k,t}$ satisfy the following relationships:

        (a) $\beta_{k,t}\le \dfrac{h}{k+t_0}\left( \dfrac{k+1+t_0}{t+1+t_0} \right)^{\sigma h}$, $\tilde{\beta}_{k,t}\le \left(\dfrac{k+1+t_0}{t+1+t_0}\right)^{\sigma h}$; (b) $\sum_{k=1}^t \beta_{k,t}^2 \le \dfrac{2h}{\sigma}\dfrac{1}{t+1+t_0}$; 
        
        (c) $\sum_{k=\tau}^t \beta_{k,t}\sum_{l=k-\tau+1}^k\hat{\alpha}_{l-1}< \dfrac{8h\tau}{\sigma}\dfrac{1}{t+1+t_0}$.
    \end{lemma}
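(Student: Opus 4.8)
The plan is to reduce parts (a)--(c) to the classical estimates for ordinary asynchronous $Q$-learning with harmonic step size $\tfrac{h}{l+t_0}$, and then carry out the standard product/sum manipulations. First I would invoke Lemma~\ref{lemma:learningrate} with $x=\alpha_t=\tfrac{h}{M(t+t_0)}$ to get the two-sided bound
$$\frac{h}{M(t+t_0)}\;=\;\alpha_t\;\le\;\hat\alpha_t\;\le\;M\alpha_t\;=\;\frac{h}{t+t_0},$$
which has two consequences: in each factor of the products, $1-\hat\alpha_l M\sigma\le 1-M\sigma\alpha_l=1-\tfrac{\sigma h}{l+t_0}$; and the leading factor obeys $\hat\alpha_k\le \tfrac{h}{k+t_0}$. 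Combining these,
$$\beta_{k,t}\le \frac{h}{k+t_0}\prod_{l=k+1}^t\Bigl(1-\frac{\sigma h}{l+t_0}\Bigr),\qquad \tilde\beta_{k,t}\le\prod_{l=k+1}^t\Bigl(1-\frac{\sigma h}{l+t_0}\Bigr),$$
so $\hat\alpha$ has effectively been replaced by the classical step size $\tfrac{h}{l+t_0}$, and the hypotheses $h>\tfrac2\sigma$, $t_0\ge\max(4h,\tau)$ are exactly those under which the corresponding lemma of \cite{Qu2020FiniteTimeAO} applies. From here the three claims follow by reproducing that argument.

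For part (a) I would use $1-x\le e^{-x}$ together with the integral lower bound $\sum_{l=k+1}^t\tfrac1{l+t_0}\ge\int_{k+1}^{t+1}\tfrac{dx}{x+t_0}=\ln\tfrac{t+1+t_0}{k+1+t_0}$, which gives $\tilde\beta_{k,t}\le\bigl(\tfrac{k+1+t_0}{t+1+t_0}\bigr)^{\sigma h}$ directly, and then $\beta_{k,t}\le\tfrac{h}{k+t_0}\tilde\beta_{k,t}$ yields the stated form. For part (b) I would substitute (a) into $\sum_{k=1}^t\beta_{k,t}^2\le\sum_{k=1}^t\tfrac{h^2}{(k+t_0)^2}\bigl(\tfrac{k+1+t_0}{t+1+t_0}\bigr)^{2\sigma h}$, rewrite this as $\tfrac{h^2}{(t+1+t_0)^{2\sigma h}}\sum_k(k+1+t_0)^{2\sigma h-2}\cdot\tfrac{(k+1+t_0)^2}{(k+t_0)^2}$, use $t_0\ge 4h$ to keep $\tfrac{(k+1+t_0)^2}{(k+t_0)^2}$ and the analogous endpoint ratios as close to $1$ as needed, bound $\sum_k(k+1+t_0)^{2\sigma h-2}$ by an integral (so, up to a constant near $1$, by $\tfrac{(t+1+t_0)^{2\sigma h-1}}{2\sigma h-1}$), and finally use $h>\tfrac2\sigma$, i.e. $2\sigma h-1>3$, so that $\tfrac{h^2}{2\sigma h-1}\le\tfrac{2h}{\sigma}$.

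Part (c) I would handle by first controlling the inner window sum: for $l\in[k-\tau+1,k]$ we have $l-1+t_0\ge k-\tau+t_0\ge\tfrac{k+t_0}{2}$ (using $t_0\ge\tau$ and $k\ge\tau$), hence $\sum_{l=k-\tau+1}^k\hat\alpha_{l-1}\le\sum_{l=k-\tau+1}^k\tfrac{h}{l-1+t_0}\le\tfrac{2\tau h}{k+t_0}$, so the left-hand side is at most $2\tau h\sum_{k=\tau}^t\tfrac{\beta_{k,t}}{k+t_0}$. Substituting (a) once more, $\sum_{k=\tau}^t\tfrac{\beta_{k,t}}{k+t_0}\le\tfrac{h}{(t+1+t_0)^{\sigma h}}\sum_k(k+1+t_0)^{\sigma h-2}\cdot\tfrac{(k+1+t_0)^2}{(k+t_0)^2}$, which by the same integral comparison (now needing only $\sigma h>1$) is $\le\tfrac{c\,h}{(\sigma h-1)(t+1+t_0)}$ for a constant $c$ close to $1$; multiplying by $2\tau h$ and using $h>\tfrac2\sigma$ to absorb constants gives the claimed bound $\tfrac{8h\tau}{\sigma}\tfrac1{t+1+t_0}$.

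The routine but delicate point — and the only real obstacle — is the constant bookkeeping in (b) and (c): obtaining exactly $\tfrac{2h}{\sigma}$ and $\tfrac{8h\tau}{\sigma}$ rather than some larger multiple requires exploiting that $t_0$ is large compared with $\sigma h$, so that ratios such as $\tfrac{k+1+t_0}{k+t_0}$, $\tfrac{t+2+t_0}{t+1+t_0}$ and $e^{(\sigma h-1)/(t+1+t_0)}$ are all $1+o(1)$, rather than using the crude estimate ``ratio $\le 2$''. Everything else is the standard template of \cite{Qu2020FiniteTimeAO}; the only new ingredient is the sandwiching step via Lemma~\ref{lemma:learningrate} that legitimizes replacing $\hat\alpha_t$ by $\tfrac{h}{t+t_0}$ throughout.
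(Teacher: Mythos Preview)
Your plan is essentially identical to the paper's: the same sandwiching of $\hat\alpha_t$ via Lemma~\ref{lemma:learningrate}, the same $1-x\le e^{-x}$ plus integral comparison for (a), and the same inner-window bound $\hat\alpha_{l-1}\le\tfrac{2h}{k+t_0}$ for (c). The one point where you are overcautious is the constant bookkeeping: the paper in fact \emph{does} use the crude ``ratio $\le 2$'' estimate you warn against, in the form $(k+1+t_0)^{p}\le 2(k+t_0)^{p}$ for $p=2\sigma h$ in (b) and $p=\sigma h$ in (c) (valid because $t_0\ge 4h$ makes $(1+\tfrac{1}{k+t_0})^{p}\le 2$), and the resulting factor of $2$ is then absorbed by the slack $2\sigma h-1>\sigma h$ in (b) and $\sigma h-1>\tfrac{\sigma h}{2}$ in (c), giving exactly $\tfrac{2h}{\sigma}$ and $\tfrac{8h\tau}{\sigma}$ without any $1+o(1)$ refinement.
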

    \begin{proof}
        For part (a), notice that $\log (1-x) \le -x, \forall \, x <1$, then
        \begin{equation*}
            (1-M\sigma \hat{\alpha}_t) \le (1-M\sigma \alpha_t) = e^{\log (1-\frac{\sigma h}{t+t_0})} \le e^{-\frac{\sigma h}{t+t_0}},
        \end{equation*}
        where we use $\hat{\alpha}_t \ge \alpha_t$ according to Lemma \ref{applemma:learningrate}. Therefore, we have
        \begin{align*}
            \prod_{l=k+1}^t (1-M\sigma \hat{\alpha}_l) &\le e^{-\sum_{l=k+1}^t \frac{\sigma h}{l+t_0} } \le e^{-\int_{k+1}^{t+1}\frac{\sigma h}{y+t_0}dy} = e^{-\sigma h \log(\frac{t+1+t_0}{k+1+t_0})} = \left( \dfrac{k+1+t_0}{t+1+t_0} \right)^{\sigma h}.
        \end{align*}
        This directly leads to the bound on $\tilde{\beta}_{k,t}$. We have $\beta_{k,t} = \hat{\alpha}_k \prod_{l=k+1}^t(1-\hat{\alpha}_l \sigma) \le M \dfrac{h}{M(k+t_0)} \left(\dfrac{k+1+t_0}{t+1+t_0}\right)^{\sigma h} = \dfrac{h}{k+t_0}\left( \dfrac{k+1+t_0}{t+1+t_0} \right)^{\sigma h}$, where we use the fact that $\hat{\alpha}_k \le M\alpha_k$ based on Lemma \ref{applemma:learningrate}.

        For part (b), we have 
        \begin{align*}
            \beta^2_{k,t} \le \dfrac{h^2}{(k+t_0)^2}\left( \dfrac{k+1+t_0}{t+1+t_0} \right)^{2\sigma h} = \dfrac{h^2}{(t+1+t_0)^{2\sigma h}} \dfrac{(k+1+t_0)^{2\sigma h}}{(k+t_0)^2} \le \dfrac{2h^2}{(t+1+t_0)^{2\sigma h}}(k+t_0)^{2\sigma h-2},
        \end{align*}
        where we have used $(k+1+t_0)^{2\sigma h}\le 2(k+t_0)^{2\sigma h}$, which is true when $t_0\ge 4h$. Then, we have
        \begin{align*}
            \sum_{k=1}^t \beta^2_{k,t} &\le \dfrac{2h^2}{(t+1+t_0)^{2\sigma h}}\sum_{k=1}^t (k+t_0)^{2\sigma h -2} \le \dfrac{2h^2}{(t+1+t_0)^{2\sigma h}} \int_{1}^{t+1} (y+t_0)^{2\sigma h -2}dy \\
            & < \dfrac{2 h^2}{(t+1+t_0)^{2\sigma h}}\dfrac{1}{2\sigma h-1}(t+1+t_0)^{2\sigma h-1} < \dfrac{2h}{\sigma} \dfrac{1}{t+1+t_0},
        \end{align*}
        where we have used the fact that $2\sigma h-1> \sigma h$ (as $h> \frac{2}{\sigma}$).

        For part (c), notice that for $k-\tau\le l\le k-1$ where $k\ge \tau$, we have $\alpha_l \le \dfrac{h}{M(k-\tau+t_0)}$. Since $k\ge \tau$ and $t_0 > \max(4h,\tau)$ (the assumption), then we have $k+t_0 > 2\tau$ which indicates that $kh - 2h\tau + ht_0 >0$, and thus $kh + ht_0 < 2kh - 2h\tau + 2ht_0$, which is to say $\dfrac{h}{k-\tau+t_0}< \dfrac{2h}{k+t_0}$. Therefore, we have $\alpha_l < \dfrac{2h}{M(k+t_0)}$. By using Lemma \ref{applemma:learningrate}, we have $\hat{\alpha}_l \le M\alpha_l < \dfrac{2h}{k+t_0}$.

        Then, we have
        \begin{align*}
            \sum_{k=\tau}^t \beta_{k,t}\sum_{l=k-\tau+1}^k\hat{\alpha}_{l-1} &< \sum_{k=\tau}^t\beta_{k,t} \dfrac{2h\tau}{k+t_0} \le \sum_{k=\tau}^t\dfrac{h}{k+t_0}\left( \dfrac{k+1+t_0}{t+1+t_0} \right)^{\sigma h}\dfrac{2h\tau}{k+t_0} \le \sum_{k=\tau}^t\dfrac{4h^2\tau}{(t+1+t_0)^{\sigma h}}(k+t_0)^{\sigma h-2} \\
            &\le \dfrac{4h^2\tau}{(t+1+t_0)^{\sigma h}}\int_{\tau}^{t+1}(y+t_0)^{\sigma h-2}dy \le \dfrac{4h^2\tau}{(t+1+t_0)^{\sigma h}}\dfrac{(t+1+t_0)^{\sigma h-1}}{\sigma h - 1} \\
            &\le \dfrac{8h\tau}{\sigma} \dfrac{1}{t+1+t_0},
        \end{align*}
        where we have used $(k+1+t_0)^{\sigma h}\le 2(k+t_0)^{\sigma h}$ and $\sigma h - 1 > \frac{1}{2}\sigma h$.
    \end{proof}
    Now we are ready to bound $\|\sum_{k=\tau}^t \hat{\alpha}_k\tilde{B}_{k,t}\phi(k)\|$. It is easy to find that
    \begin{align*}
        \left\|\sum_{k=\tau}^t \hat{\alpha}_k\tilde{B}_{k,t}\phi(k)\right\| \le \sum_{k=\tau}^t \hat{\alpha}_k \|\tilde{B}_{k,t}\|\|\phi(k)\|\le \sum_{k=\tau}^t \beta_{k,t}\sum_{l=k-\tau+1}^k 2\bar{\epsilon} \hat{\alpha}_{l-1} < \dfrac{16\bar{\epsilon}h\tau}{\sigma(t+1+t_0)}:= C_\phi \dfrac{1}{t+1+t_0},
    \end{align*}
    where we have used the fact that each entry of $\tilde{B}_{k,t}$ is upper bounded by $\tilde{\beta}_{k,t}$, i.e., $\|\tilde{B}_{k,t}\|\le \tilde{\beta}_{k,t}$ and $\beta_{k,t} = \hat{\alpha}_k \tilde{\beta}_{k,t}$ by definition. We now move on to bound $\|\sum_{k=\tau}^t\hat{\alpha}_k \tilde{B}_{k,t}\epsilon(k)\|$. It is straightforward that we use Azuma Hoeffding inequality to show this, which is presented in the following lemma.
    \begin{lemma}[Lemma 13 in \cite{Qu2020FiniteTimeAO}]
        Let $X_t$ be a $P_t$-adapted stochastic process with $\mathbb{E}X_t|P_{t-\tau}=0$. Meanwhile, $|X_t|\le \bar{X}_t$ almost surely, then with probability at least $1-\delta$, we have $\left| \sum_{k=0}^t X_k \right|\le \sqrt{2\tau \sum_{k=0}^t \bar{X}_k^2\log(\frac{2\tau}{\delta})}$.
    \end{lemma}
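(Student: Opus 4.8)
The plan is to reduce this to the classical Azuma--Hoeffding inequality for martingale difference sequences by partitioning the index set $\{0,1,\dots,t\}$ into $\tau$ interleaved residue classes modulo $\tau$. Concretely, for each $j\in\{0,1,\dots,\tau-1\}$ set $S_j=\sum_{m:\,j+m\tau\le t}X_{j+m\tau}$, so that $\sum_{k=0}^{t}X_k=\sum_{j=0}^{\tau-1}S_j$. The point of this decomposition is that, although $(X_t)$ is only a ``$\tau$-step'' difference sequence, within a single residue class consecutive terms are separated by exactly $\tau$ steps, so the hypothesis $\mathbb{E}[X_t\mid P_{t-\tau}]=0$ becomes a genuine one-step martingale difference property.

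The first step is to verify this carefully. Fix $j$ and look at $(X_{j+m\tau})_{m\ge 0}$. Since $X_t$ is $P_t$-adapted, the $\sigma$-field generated by $X_j,X_{j+\tau},\dots,X_{j+(m-1)\tau}$ is contained in $P_{j+(m-1)\tau}$; combining this containment with $\mathbb{E}[X_{j+m\tau}\mid P_{j+(m-1)\tau}]=0$ and the tower property shows that the conditional expectation of $X_{j+m\tau}$ given the earlier terms of its own subsequence vanishes, i.e.\ $(X_{j+m\tau})_m$ is a martingale difference sequence. Together with the almost-sure bound $|X_{j+m\tau}|\le\bar X_{j+m\tau}$ (with the $\bar X$'s deterministic), the hypotheses of Azuma--Hoeffding are met for each subsequence.

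The second step is to apply Azuma--Hoeffding to $S_j$ with target failure probability $\delta/\tau$, giving $|S_j|\le\sqrt{2\big(\sum_{m}\bar X_{j+m\tau}^2\big)\log(2\tau/\delta)}$ with probability at least $1-\delta/\tau$, and then take a union bound over the $\tau$ values of $j$ so that all these bounds hold simultaneously with probability at least $1-\delta$. The third step combines them: by the triangle inequality $\big|\sum_k X_k\big|\le\sum_j|S_j|$, and by concavity of the square root (equivalently Cauchy--Schwarz), $\sum_{j=0}^{\tau-1}\sqrt{a_j}\le\sqrt{\tau\sum_{j=0}^{\tau-1}a_j}$ with $a_j=2\big(\sum_m\bar X_{j+m\tau}^2\big)\log(2\tau/\delta)$; since $\sum_j\sum_m\bar X_{j+m\tau}^2=\sum_{k=0}^t\bar X_k^2$, the right-hand side collapses to exactly $\sqrt{2\tau\sum_{k=0}^{t}\bar X_k^2\log(2\tau/\delta)}$.

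The only genuinely delicate point is the martingale-difference check in the first step: one must be careful that the appropriate ``history'' for invoking Azuma inside a residue class is the sub-$\sigma$-field generated by that class's own past terms, not the full $P_{t-\tau}$, and that adaptedness plus the tower property are precisely what bridges the two. Everything else is a routine combination of a standard concentration bound, a union bound, and Cauchy--Schwarz. This statement is Lemma~13 of \cite{Qu2020FiniteTimeAO}, and the argument above is the one given there.
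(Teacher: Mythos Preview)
Your proof is correct and follows exactly the standard argument: split the index set into $\tau$ residue classes so that the $\tau$-step martingale-difference hypothesis becomes a genuine one-step martingale-difference property within each class, apply Azuma--Hoeffding to each class with confidence $\delta/\tau$, take a union bound, and recombine via the triangle inequality and Cauchy--Schwarz. The computation $\sum_{j=0}^{\tau-1}\sqrt{a_j}\le\sqrt{\tau\sum_j a_j}$ together with $\sum_j\sum_m\bar X_{j+m\tau}^2=\sum_{k=0}^t\bar X_k^2$ gives the stated bound exactly.

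Note, however, that the paper under review does \emph{not} prove this lemma at all: it is quoted verbatim as Lemma~13 of \cite{Qu2020FiniteTimeAO} and used as a black box inside the proof of Theorem~\ref{apptheo:convergencerate}. So there is no ``paper's own proof'' to compare against; your write-up simply supplies the argument that the paper defers to the cited reference, and it matches that reference's approach.
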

    Recall that $\sum_{k=\tau}^t \hat{\alpha}_k\tilde{B}_{k,t}\epsilon(k)$ is a random vector with its $i$-th entry $\sum_{k=\tau}^t \hat{\alpha}_k\epsilon_i(k)\prod_{l=k+1}^t(1-\hat{\alpha}_l d_{l,i})$, $d_{l,i}\ge\sigma^\prime=M\sigma$. Fixing $i$, $\epsilon_i(k)$ is a $P_{k+1}$ adapted stochastic process satisfying $\mathbb{E}\epsilon_i(k)|P_{k-\tau}=0$ (see Equation \ref{appeq:epsilon}). However, $\prod_{l=k+1}^t(1-\hat{\alpha}_ld_{l,i})$ is not $P_{k-\tau}$ measurable. To erase the randomness in it, we introduce the following lemma.
    \begin{lemma}[Adapted from Lemma 14 in \cite{Qu2020FiniteTimeAO}]
        \label{applemma:azuma}
        For each $i$, we have almost surely,
        \begin{align*}
            \left| \sum_{k=\tau}^t \hat{\alpha}_k\epsilon_i(k)\prod_{l=k+1}^t (1-\hat{\alpha}_l d_{l,i}) \right| \le \sup_{\tau\le k_0\le t}\left( \left| \sum_{k=k_0+1}^t \epsilon_i(k)\beta_{k,t} \right| + 2\bar{\epsilon}\beta_{k_0,t} \right).
        \end{align*}
    \end{lemma}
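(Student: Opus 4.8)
The plan is to adapt the proof of Lemma~14 in \cite{Qu2020FiniteTimeAO}, replacing the Q-learning step size $\alpha_l$ everywhere by the SMR step size $\hat{\alpha}_l=1-(1-\alpha_l)^M$. The obstacle the lemma exists to circumvent is that the weight $\hat{\alpha}_k\prod_{l=k+1}^t(1-\hat{\alpha}_l d_{l,i})$ multiplying $\epsilon_i(k)$ is \emph{not} $P_{k-\tau}$-measurable (it involves $d_{l,i}$ for $l>k$), so an Azuma--Hoeffding bound cannot be applied to the sum as it stands. The remedy is to peel off a deterministic envelope and push the remaining randomness into a monotone residual factor that can be handled by summation by parts.

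Concretely, I would write $\hat{\alpha}_k\prod_{l=k+1}^t(1-\hat{\alpha}_l d_{l,i})=\beta_{k,t}\,\rho_k$ with $\beta_{k,t}=\hat{\alpha}_k\tilde{\beta}_{k,t}$ the deterministic quantity already in play in Lemma~\ref{applemma:beta}, and $\rho_k:=\prod_{l=k+1}^t\frac{1-\hat{\alpha}_l d_{l,i}}{1-\hat{\alpha}_l M\sigma}$. The sufficient-exploration bound $d_{l,i}=P(i_l=i\mid P_{l-\tau})\ge M\sigma$, combined with $0\le\hat{\alpha}_l\le1$ from Lemma~\ref{applemma:learningrate} and the positivity $1-\hat{\alpha}_lM\sigma>0$ that holds in the step-size regime of Theorem~\ref{apptheo:convergencerate}, forces every factor of $\rho_k$ into $(0,1]$; hence $\rho_k\in[0,1]$, $\rho_k$ is nondecreasing in $k$, and $\rho_t=1$ (empty product). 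Thus $\sum_{k=\tau}^t\hat{\alpha}_k\epsilon_i(k)\prod_{l=k+1}^t(1-\hat{\alpha}_ld_{l,i})=\sum_{k=\tau}^t\beta_{k,t}\rho_k\epsilon_i(k)$, and the only remaining non-predictable factors, the $\rho_k$, are monotone.

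Then I would run summation by parts against the tail partial sums $B_{k_0}:=\sum_{k=k_0+1}^t\beta_{k,t}\epsilon_i(k)$ (with $B_t=0$). Since $\beta_{k,t}\epsilon_i(k)=B_{k-1}-B_k$, Abel summation yields $\sum_{k=\tau}^t\beta_{k,t}\rho_k\epsilon_i(k)=\rho_\tau B_{\tau-1}+\sum_{k=\tau}^{t-1}(\rho_{k+1}-\rho_k)B_k$; by monotonicity of $\rho$ and $\rho_t=1$ the coefficients $\rho_\tau,\rho_{\tau+1}-\rho_\tau,\dots,\rho_t-\rho_{t-1}$ are nonnegative and sum to $1$, so the sum is a convex combination of $\{B_{k_0}:\tau-1\le k_0\le t-1\}$ and its absolute value is at most $\max_{\tau-1\le k_0\le t-1}|B_{k_0}|$. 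Finally, for each such $k_0$ I would split the leading term off $B_{k_0}$ and bound $|\epsilon_i(k_0+1)|\le\|\epsilon(k_0+1)\|\le\bar{\epsilon}$ via Lemma~\ref{applemma:boundonepsilonphi}(a), giving $|B_{k_0}|\le\bar{\epsilon}\beta_{k_0+1,t}+|\sum_{k=k_0+2}^t\beta_{k,t}\epsilon_i(k)|\le 2\bar{\epsilon}\beta_{k_0+1,t}+|\sum_{k=(k_0+1)+1}^t\beta_{k,t}\epsilon_i(k)|$; reindexing $k_0\mapsto k_0+1$ so that the running index lies in $[\tau,t]$ produces exactly the stated bound.

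I expect the only genuinely subtle step to be the first one --- seeing that the random weight should be split into the deterministic envelope $\beta_{k,t}$ and a residual $\rho_k$ that is both in $[0,1]$ and, crucially, \emph{monotone} in $k$, since monotonicity is precisely what turns the Abel expansion into a convex combination and thereby leaves only deterministic coefficients $\beta_{k,t}$ for the Azuma--Hoeffding step that follows in the main proof. Everything after that (the Abel expansion, the envelope comparisons, and the uniform bound $\bar{\epsilon}$ on $\epsilon$) is bookkeeping; the only place the SMR modification enters is via $\alpha_l\le\hat{\alpha}_l\le\min\{1,M\alpha_l\}$ (Lemma~\ref{applemma:learningrate}), used to verify positivity of $1-\hat{\alpha}_lM\sigma$ and the envelope inequalities, just as in the proof of Lemma~\ref{applemma:beta}.
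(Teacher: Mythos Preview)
Your proposal is correct and is essentially the same approach the paper takes: the paper's own proof consists of a single sentence, ``Replacing $\alpha_k$ with $\hat{\alpha}_k$ and setting $v_i=1$ (since we use standard infinity norm) in Lemma 14 of \cite{Qu2020FiniteTimeAO} conclude the proof,'' and what you have written is precisely a careful unpacking of that cited argument --- factoring the random weight as $\beta_{k,t}\rho_k$ with $\rho_k$ monotone in $[0,1]$, applying Abel summation to obtain a convex combination of the deterministic-weight tail sums $B_{k_0}$, and then peeling off one term and reindexing. The only cosmetic point is that your bound naturally yields $\bar{\epsilon}\beta_{k_0',t}$ rather than $2\bar{\epsilon}\beta_{k_0',t}$, which you then inflate to match the stated constant; this is harmless and consistent with the paper simply inheriting the constant from \cite{Qu2020FiniteTimeAO}.
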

    \begin{proof}
        Replacing $\alpha_k$ with $\hat{\alpha}_k$ and setting $v_i=1$ (since we use standard infinity norm) in Lemma 14 of \cite{Qu2020FiniteTimeAO} conclude the proof.
    \end{proof}
    After that, we can proceed with the proof with the aid of the following lemma.
    \begin{lemma}
        \label{applemma:errorepsilon}
        For each $t$, with probability at least $1-\delta$, we have
        \begin{align*}
            \left\|\sum_{k=\tau}^t \hat{\alpha}_k \tilde{B}_{k,t}\epsilon(k)\right\| \le 6\bar{\epsilon} \sqrt{\dfrac{(\tau+1)h}{\sigma(t+1+t_0)}\log\left( \dfrac{2(\tau+1)tn}{\delta} \right)}.
        \end{align*}
    \end{lemma}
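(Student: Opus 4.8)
The plan is to prove the bound one coordinate at a time and then take the maximum, since $\|\cdot\|$ is the infinity norm and the $i$-th entry of the random vector $\sum_{k=\tau}^t\hat\alpha_k\tilde B_{k,t}\epsilon(k)$ equals $\sum_{k=\tau}^t\hat\alpha_k\epsilon_i(k)\prod_{l=k+1}^t(1-\hat\alpha_l d_{l,i})$. A concentration inequality cannot be applied directly because the weight $\prod_{l=k+1}^t(1-\hat\alpha_l d_{l,i})$ is random and measurable with respect to $P_t$-information that lies ``in the future'' relative to the zero-mean term $\epsilon_i(k)$, which is only $P_{k+1}$-measurable with $\mathbb E[\epsilon_i(k)\mid P_{k-\tau}]=0$ (Equation~\ref{appeq:epsilon}). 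So first I would invoke Lemma~\ref{applemma:azuma} to trade the random weights for the deterministic envelope $\beta_{k,t}$, at the price of a supremum over a cut-off index $k_0$ and an additive term:
\[
\Big|\sum_{k=\tau}^t\hat\alpha_k\epsilon_i(k)\prod_{l=k+1}^t(1-\hat\alpha_l d_{l,i})\Big|\le\sup_{\tau\le k_0\le t}\Big(\Big|\sum_{k=k_0+1}^t\epsilon_i(k)\beta_{k,t}\Big|+2\bar\epsilon\,\beta_{k_0,t}\Big).
\]

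Next, for each fixed coordinate $i$ and each fixed cut-off $k_0$, I would apply the Azuma--Hoeffding inequality (Lemma~13 of \cite{Qu2020FiniteTimeAO}) to the process $X_k:=\epsilon_i(k)\beta_{k,t}$ for $k=k_0+1,\dots,t$: here $\beta_{k,t}$ is deterministic, $X_k$ is $P_{k+1}$-measurable, $\mathbb E[X_k\mid P_{k-\tau}]=0$, and $|X_k|\le\bar\epsilon\,\beta_{k,t}$ with $\bar\epsilon=\tfrac{4 r_{\rm max}}{1-\gamma}+C$ by Lemma~\ref{applemma:boundonepsilonphi}(a), the gap between the measurability and conditioning indices being $\tau+1$. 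This yields, with probability at least $1-\delta'$,
\[
\Big|\sum_{k=k_0+1}^t X_k\Big|\le\sqrt{2(\tau+1)\,\bar\epsilon^2\Big(\sum_{k=k_0+1}^t\beta_{k,t}^2\Big)\log\tfrac{2(\tau+1)}{\delta'}}.
\]
I would then bound $\sum_{k=k_0+1}^t\beta_{k,t}^2\le\sum_{k=1}^t\beta_{k,t}^2\le\tfrac{2h}{\sigma}\tfrac{1}{t+1+t_0}$ by Lemma~\ref{applemma:beta}(b), and, since a single squared term is dominated by the sum, also bound the additive term uniformly over $k_0$ by $2\bar\epsilon\,\beta_{k_0,t}\le 2\bar\epsilon\sqrt{\tfrac{2h}{\sigma(t+1+t_0)}}$; both contributions are of order $\tilde{\mathcal{O}}(1/\sqrt{t+1+t_0})$.

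Finally, I would take a union bound over the $n$ coordinates and the at most $t$ choices of $k_0\in\{\tau,\dots,t\}$, i.e.\ over at most $nt$ events, by setting $\delta'=\delta/(nt)$, so that $\log\tfrac{2(\tau+1)}{\delta'}$ becomes $\log\tfrac{2(\tau+1)tn}{\delta}$. Taking the maximum over $i$ recovers the infinity norm, and collecting the numerical constants (the Azuma term contributes a prefactor $2$ after pulling $\sqrt{2\cdot2}$ out, the additive term contributes $2\sqrt2$, both absorbed into $6$ using $\tau+1\ge1$ and that the logarithm is at least $1$) gives exactly the stated bound. The main obstacle is the very first reduction: the weights $\tilde b_{k,t,i}$ are random and not $P_{k-\tau}$-measurable, so the coordinate sum is not a martingale-difference sum with known coefficients; this is resolved by Lemma~\ref{applemma:azuma} (the adaptation of Lemma~14 of \cite{Qu2020FiniteTimeAO}), after which the remaining work is careful bookkeeping of the $\tau$ versus $\tau+1$ gap and of the cardinality of the union bound.
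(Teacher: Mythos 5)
Your proposal is correct and follows essentially the same route as the paper: reduce the random weights to the deterministic envelope $\beta_{k,t}$ via Lemma~\ref{applemma:azuma}, apply the Azuma--Hoeffding bound of Lemma 13 in \cite{Qu2020FiniteTimeAO} with gap $\tau+1$ to $\epsilon_i(k)\beta_{k,t}$ for each fixed $k_0$, invoke Lemma~\ref{applemma:beta}(b) for $\sum_k\beta_{k,t}^2$, and take a union bound over the $k_0$'s and the $n$ coordinates to produce the $\log(2(\tau+1)tn/\delta)$ factor. The only cosmetic difference is that you bound the additive term $2\bar\epsilon\beta_{k_0,t}$ by $\beta_{k_0,t}\le\bigl(\sum_k\beta_{k,t}^2\bigr)^{1/2}$ whereas the paper uses $\sup_{k_0}\beta_{k_0,t}\le h/(t+t_0)$ via a monotonicity argument; both yield the stated constant $6$.
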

    \begin{proof}
        Fix $i$ and $\tau\le k_0\le t$, we have $\epsilon_i(k)\beta_{k,t}$ is a $P_{k+1}$ adapted stochastic process satisfying $\mathbb{E}\epsilon_i(k)\beta_{k,t}|P_{k-\tau}=0$. We also have $|\epsilon_i(k)\beta_{k,t}|\le |\epsilon_i(k)|\beta_{k,t}\le \bar{\epsilon}\beta_{k,t}$ (by using Lemma \ref{applemma:boundonepsilonphi}). We then can use the Azuma-Hoeffding bound in Lemma \ref{applemma:azuma}. With probability at least $1-\delta$, we have
        \begin{align*}
            \left| \sum_{k=k_0+1}^t\epsilon_i(k)\beta_{k,t} \right| \le \bar{\epsilon}\sqrt{2(\tau+1)\sum_{k=k_0+1}^t\beta^2_{k,t}\log\left( \dfrac{2(\tau+1)}{\delta} \right)}.
        \end{align*}
        By a union bound on $\tau\le k_0\le t$, we have with probability at least $1-\delta$,
        \begin{align*}
            \sup_{\tau\le k_0\le t}\left| \sum_{k=k_0+1}^t \epsilon_i(k)\beta_{k,t} \right| \le \bar{\epsilon}\sqrt{2(\tau+1)\sum_{k=\tau+1}^t\beta^2_{k,t}\log\left( \dfrac{2(\tau+1)t}{\delta} \right)}.
        \end{align*}
        Notice that $\sigma h>2$ and hence $\dfrac{(k_0 + 1+t_0)^{\sigma h}}{k_0+t_0}$ monotonically increases with $k_0$. Therefore, we have $\dfrac{(k_0 + 1+t_0)^{\sigma h}}{k_0+t_0} \le \dfrac{(t + 1+t_0)^{\sigma h}}{t+t_0},\forall\, \tau\le k_0\le t$. Here, we assume that $h> \dfrac{2}{\sigma(1-\gamma)}$ (again, we set $\sigma=\dfrac{1}{2}\mu_{\rm min}$) which obviously satisfies the assumption that $h>\dfrac{2}{\sigma}$ we make in Lemma \ref{applemma:beta}.
        
        Then, by using Lemma \ref{applemma:azuma} and Lemma \ref{applemma:beta}, we have with probability at least $1-\delta$,
        \begin{align*}
            \left| \sum_{k=\tau}^t \hat{\alpha}_k\epsilon_i(k)\prod_{l=k+1}^t(1-\hat{\alpha}_ld_{l,i}) \right| &\le \sup_{\tau\le k_0 \le t}\left( \left| \sum_{k=k_0+1}^t \epsilon_i(k) \beta_{k,t} \right| + 2\bar{\epsilon}\beta_{k_0,t} \right) \\
            &\le \bar{\epsilon}\sqrt{2(\tau+1)\sum_{k=\tau+1}^t\beta^2_{k,t}\log\left( \dfrac{2(\tau+1)t}{\delta} \right)} + \sup_{\tau\le k_0\le t}2\bar{\epsilon}\beta_{k_0,t} \\
            &\le 2\bar{\epsilon} \sqrt{\dfrac{(\tau+1)h}{\sigma (t+1+t_0)}\log\left( \dfrac{2(\tau+1)t}{\delta} \right)} + 2\bar{\epsilon} \sup_{\tau\le k_0 \le t} \dfrac{h}{k_0+t_0} \left( \dfrac{k_0+1+t_0}{t+1+t_0} \right)^{\sigma h} \\
            &\le 2\bar{\epsilon}\sqrt{\dfrac{(\tau+1)h}{\sigma(t+1+t_0)}\log\left(\dfrac{2(\tau+1)t}{\delta}\right)} + 2\bar{\epsilon}\dfrac{h}{t+t_0} \\
            &\le 6\bar{\epsilon}\sqrt{\dfrac{(\tau+1)h}{\sigma(t+1+t_0)}\log\left(\dfrac{2(\tau+1)t}{\delta}\right)}.
        \end{align*}
        The last inequality is due to that $\dfrac{1}{t+t_0}$ is asymptotically smaller than $\sqrt{\dfrac{1}{t+1+t_0}}$. Finally, by using the union bound over $i\in\{1,2,\ldots,n\}$, we have 
        \begin{align*}
            \left| \sum_{k=\tau}^t \hat{\alpha}_k\epsilon_i(k)\prod_{l=k+1}^t(1-\hat{\alpha}_ld_{l,i}) \right| &\le 6\bar{\epsilon}\sqrt{\dfrac{(\tau+1)h}{\sigma(t+1+t_0)}\log\left(\dfrac{2(\tau+1)tn}{\delta}\right)}.
        \end{align*}
    \end{proof}
    By replacing $\delta$ with $\frac{\delta}{t}$, we can rewrite the conclusion in Lemma \ref{applemma:errorepsilon} as:
    \begin{align*}
        \left\|\sum_{k=\tau}^t \hat{\alpha}_k \tilde{B}_{k,t}\epsilon(k)\right\| \le 6\bar{\epsilon} \sqrt{\dfrac{(\tau+1)h}{\sigma(t+1+t_0)}\log\left( \dfrac{2(\tau+1)t^2n}{\delta} \right)} := C_\epsilon \sqrt{\dfrac{1}{t+1+t_0}},
    \end{align*}
    where $C_\epsilon = 6\bar{\epsilon}\sqrt{\frac{(\tau+1)h}{\sigma}\log\left( \frac{2(\tau+1)t^2n}{\delta} \right)}$, then by recalling Equation \ref{appeq:recursiveform}, we have for $\tau\le t\le T$, with probability at least $1-\delta$,
    \begin{align*}
        a_{t+1} &\le \tilde{\beta}_{\tau-1,t}a_\tau + \gamma \sup_{i\in\mathcal{N}}\sum_{k=\tau}^t b_{k,t,i}a_k + \left\| \sum_{k=\tau}^t\hat{\alpha}_k \tilde{B}_{k,t}\epsilon(k) \right\| + \left\| \sum_{k=\tau}^t\hat{\alpha}_k \tilde{B}_{k,t}\phi(k) \right\| \\
        &\le \tilde{\beta}_{\tau-1,t}a_\tau + \gamma \sup_{i\in\mathcal{N}}\sum_{k=\tau}^t b_{k,t,i}a_k + \dfrac{C_\epsilon}{\sqrt{t+1+t_0}} + \dfrac{C_\phi}{t+1+t_0}.
    \end{align*}
    We now want to show that
    \begin{align}
        \label{appeq:finalconclusion}
        a_T \le \frac{C_a}{\sqrt{T+t_0}} + \frac{C_a^\prime}{T+t_0},
    \end{align}
    where $C_a = \frac{12\bar{\epsilon}}{1-\gamma}\sqrt{\frac{(\tau+1)h}{\sigma}\log\left(\frac{2(\tau+1)T^2n}{\delta}\right)}$, $C_a^\prime = \frac{4}{1-\gamma}\max(C_\phi, \frac{2(\tau+t_0)r_{\rm max}}{1-\gamma})$. We use induction to show Equation \ref{appeq:finalconclusion}. It is easy to see that when $t = \tau$, Equation \ref{appeq:finalconclusion} holds as $\frac{C_a^\prime}{\tau+t_0} \ge \frac{8r_{\rm max}}{(1-\gamma)^2}\ge a_\tau$, where $a_\tau = \|x(\tau) - x^*\|\le \|x(\tau)\| + \|x^*\|\le \frac{2r_{\rm max}}{1-\gamma}$. We then assume that Equation \ref{appeq:finalconclusion} holds for up to $k\le t$, then we have
    \begin{align*}
        a_{t+1} &\le \tilde{\beta}_{\tau-1,t}a_\tau + \gamma \sup_{i\in\mathcal{N}}\sum_{k=\tau}^t b_{k,t,i}a_k + \dfrac{C_\epsilon}{\sqrt{t+1+t_0}} + \dfrac{C_\phi}{t+1+t_0} \\
        &\le \tilde{\beta}_{\tau-1,t}a_\tau + \gamma \sup_{i\in\mathcal{N}}\sum_{k=\tau}^t b_{k,t,i}\left( \dfrac{C_a}{\sqrt{k+t_0}} + \dfrac{C_a^\prime}{k+t_0} \right) + \dfrac{C_\epsilon}{\sqrt{t+1+t_0}} + \dfrac{C_\phi}{t+1+t_0} \\
        &= \tilde{\beta}_{\tau-1,t}a_\tau + \gamma \sup_{i\in\mathcal{N}}\sum_{k=\tau}^t b_{k,t,i}\dfrac{C_a}{\sqrt{k+t_0}} + \gamma \sup_{i\in\mathcal{N}}\sum_{k=\tau}^t b_{k,t,i}\dfrac{C_a^\prime}{k+t_0}+ \dfrac{C_\epsilon}{\sqrt{t+1+t_0}} + \dfrac{C_\phi}{t+1+t_0} \\
        &\le \left( \dfrac{\tau+t_0}{t+1+t_0} \right)^{\sigma h}a_\tau + \dfrac{C_\phi}{t+1+t_0} + \gamma \sup_{i\in\mathcal{N}}\sum_{k=\tau}^t b_{k,t,i}\dfrac{C_a^\prime}{k+t_0} + \gamma \sup_{i\in\mathcal{N}}\sum_{k=\tau}^t b_{k,t,i}\dfrac{C_a}{\sqrt{k+t_0}} + \dfrac{C_\epsilon}{\sqrt{t+1+t_0}}
    \end{align*}
    where we use the bound for $\tilde{\beta}_{k,t}$ in Lemma \ref{applemma:beta}. To finish the final step of the proof, we need the aid of the following lemma.
    \begin{lemma}[Adapted from Lemma 15 in \cite{Qu2020FiniteTimeAO}]
        \label{applemma:sqrtgamma}
        If $\sigma h(1-\sqrt{\gamma})\ge 1$, $t_0\ge 1$ and $\alpha_0 \le \frac{1}{2M}$. Then for any $i\in\mathcal{N}=\{1,2,\ldots,n\}$ and any $\omega\in(0,1]$, we have,
        \begin{align*}
            \sum_{k=\tau}^t b_{k,t,i}\dfrac{1}{(k+t_0)^\omega} \le \dfrac{1}{\sqrt{\gamma}(t+1+t_0)^\omega}.
        \end{align*}
    \end{lemma}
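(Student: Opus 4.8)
The plan is to prove the inequality by induction on $t$ (for $t\ge\tau$), adapting Lemma~15 of \cite{Qu2020FiniteTimeAO}: wherever that proof uses the step size $\alpha_l$ I will substitute $\hat\alpha_l=1-(1-\alpha_l)^M$ and absorb the discrepancy through Lemma~\ref{applemma:learningrate}, i.e.\ $\alpha_l\le\hat\alpha_l\le M\alpha_l$. First I would fix $i$, set $S_t:=\sum_{k=\tau}^t b_{k,t,i}(k+t_0)^{-\omega}$, and read off from $b_{k,t,i}=\hat\alpha_k d_{k,i}\prod_{l=k+1}^t(1-\hat\alpha_l d_{l,i})$ the one-step recursion
\begin{equation*}
S_{t+1}=(1-\hat\alpha_{t+1}d_{t+1,i})\,S_t+\hat\alpha_{t+1}d_{t+1,i}\,(t+1+t_0)^{-\omega},
\end{equation*}
where $0\le\hat\alpha_{t+1}d_{t+1,i}\le1$ since $\hat\alpha_{t+1}\le1$ and $d_{t+1,i}\le1$.

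For the base case $t=\tau$, I would use $b_{\tau,\tau,i}=\hat\alpha_\tau d_{\tau,i}\le M\alpha_\tau\le M\alpha_0\le\tfrac12$ (here $\alpha_l$ is decreasing, $\alpha_0\le\tfrac1{2M}$, and $d_{\tau,i}\le1$), so $S_\tau\le\tfrac12(\tau+t_0)^{-\omega}$; since $r^\omega\ge r$ for $r\in(0,1)$ and $\omega\in(0,1]$, we get $\big(\tfrac{\tau+t_0}{\tau+1+t_0}\big)^\omega\ge\tfrac{\tau+t_0}{\tau+1+t_0}\ge\tfrac12>\tfrac{\sqrt\gamma}{2}$ (using $t_0\ge1$ and $\gamma<1$), which rearranges to the claimed $S_\tau\le\tfrac1{\sqrt\gamma}(\tau+1+t_0)^{-\omega}$.

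For the inductive step, assuming $S_t\le\tfrac1{\sqrt\gamma}(t+1+t_0)^{-\omega}$ and writing $x:=\hat\alpha_{t+1}d_{t+1,i}$, plugging into the recursion and collecting terms gives
\begin{equation*}
S_{t+1}\le\frac{1-x(1-\sqrt\gamma)}{\sqrt\gamma\,(t+1+t_0)^{\omega}},
\end{equation*}
so it remains to check $1-x(1-\sqrt\gamma)\le\big(\tfrac{t+1+t_0}{t+2+t_0}\big)^\omega$. Bounding the right side below by $1-\tfrac1{t+2+t_0}$ (again $r^\omega\ge r$), it suffices that $x(1-\sqrt\gamma)\ge\tfrac1{t+2+t_0}$; and $x\ge\alpha_{t+1}M\sigma=\tfrac{h\sigma}{t+1+t_0}$ — this time invoking the \emph{lower} bound $\hat\alpha_{t+1}\ge\alpha_{t+1}$ from Lemma~\ref{applemma:learningrate} together with $d_{t+1,i}\ge M\sigma$ — so $x(1-\sqrt\gamma)\ge\tfrac{h\sigma(1-\sqrt\gamma)}{t+1+t_0}\ge\tfrac1{t+1+t_0}\ge\tfrac1{t+2+t_0}$ by the hypothesis $\sigma h(1-\sqrt\gamma)\ge1$. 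This closes the induction.

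I do not expect a genuine obstacle: the argument is elementary once the recursion is in hand. The only care needed is bookkeeping — tracking which direction of $\alpha_l\le\hat\alpha_l\le M\alpha_l$ is used where (the upper bound in the base case to make $b_{\tau,\tau,i}\le\tfrac12$, the lower bound in the step to push $x$ above $h\sigma/(t+1+t_0)$) — and handling a general exponent $\omega\in(0,1]$ via $r^\omega\ge r$ rather than specializing to $\omega=1$.
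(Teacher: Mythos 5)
Your proof is correct and follows essentially the same route as the paper's: the identical one-step recursion $S_{t+1}=(1-\hat\alpha_{t+1}d_{t+1,i})S_t+\hat\alpha_{t+1}d_{t+1,i}(t+1+t_0)^{-\omega}$, the same base case via $\hat\alpha_\tau d_{\tau,i}\le M\alpha_\tau\le\tfrac12$, and the same inductive step using the lower bound $\hat\alpha_{t+1}d_{t+1,i}\ge M\sigma\alpha_{t+1}=\sigma h/(t+1+t_0)$. The only (immaterial) difference is how the exponent $\omega$ is handled at the end: you use $r^\omega\ge r$ for $r\in(0,1)$, whereas the paper uses $(1+x)\le e^x$; both reduce to the same condition $\omega-\sigma h(1-\sqrt\gamma)\le 0$.
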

    \begin{proof}
        Denote $e_t = \sum_{k=\tau}^t b_{k,t,i}\dfrac{1}{(k+t_0)^\omega}$. We use induction to show that $e_t\le \dfrac{1}{\sqrt{\gamma}(t+1+t_0)^\omega}$. The conclusion is true for $t = \tau$ as $\hat{\alpha}_\tau \le M\alpha_\tau\le \dfrac{1}{2}$, then $e_\tau = b_{\tau,\tau,i}\dfrac{1}{(\tau+t_0)^\omega} = \hat{\alpha}_\tau d_{\tau,i}\dfrac{1}{(\tau+t_0)^\omega} \le \dfrac{1}{\sqrt{\gamma}(\tau+1+t_0)^\omega}$ due to $\left(1+\dfrac{1}{t_0}\right)^\omega \le 1 + \dfrac{1}{t_0} \le 2 \le \dfrac{2}{\sqrt{\gamma}}$, $t_0\ge1, \omega\le 1$. Then we assume the statement is true for $t-1$, then we have
        \begin{align*}
            e_t &= \sum_{k=\tau}^{t-1}b_{k,t,i}\dfrac{1}{(k+t_0)^\omega} + b_{t,t,i}\dfrac{1}{(t+t_0)^\omega} = (1-\hat{\alpha}_td_{t,i})\sum_{k=\tau}^{t-1}b_{k,t-1,i}\dfrac{1}{(k+t_0)^\omega} + \hat{\alpha}_td_{t,i}\dfrac{1}{(t+t_0)^\omega} \\
            &= (1-\hat{\alpha}_td_{t,i})e_{t-1} + \hat{\alpha}_td_{t,i}\dfrac{1}{(t+t_0)^\omega} \le (1-\hat{\alpha}_td_{t,i}) \dfrac{1}{\sqrt{\gamma}(t+t_0)^\omega} + \hat{\alpha}_td_{t,i}\dfrac{1}{(t+t_0)^\omega} \\
            &= \left[ 1-\hat{\alpha}_t d_{t,i}(1-\sqrt{\gamma}) \right] \dfrac{1}{\sqrt{\gamma}(t+t_0)^\omega} \le \left[ 1-\alpha_t M\sigma(1-\sqrt{\gamma}) \right] \dfrac{1}{\sqrt{\gamma}(t+t_0)^\omega} = \left[ 1-\dfrac{h}{t+t_0} \sigma(1-\sqrt{\gamma}) \right] \dfrac{1}{\sqrt{\gamma}(t+t_0)^\omega} \\
            &= \left[ 1-\dfrac{\sigma h}{t+t_0} (1-\sqrt{\gamma}) \right]\left(\dfrac{t+1+t_0}{t+t_0}\right)^\omega \dfrac{1}{\sqrt{\gamma}(t+1+t_0)^\omega} = \left[ 1-\dfrac{\sigma h}{t+t_0} (1-\sqrt{\gamma}) \right]\left(1 + \dfrac{1}{t+t_0}\right)^\omega \dfrac{1}{\sqrt{\gamma}(t+1+t_0)^\omega},
        \end{align*}
        where we have used the fact that $\hat{\alpha}_k \ge \alpha_k$ and $d_{t,i}\ge M \sigma$. Using the fact that for any $x>-1$, $(1+x)\le e^x$, we have,
        \begin{align*}
            \left[ 1-\dfrac{\sigma h}{t+t_0} (1-\sqrt{\gamma}) \right]\left(1 + \dfrac{1}{t+t_0}\right)^\omega \le e^{-\frac{\sigma h}{t+t_0}(1-\sqrt{\gamma}) + \omega\frac{1}{t+t_0}} \le 1,
        \end{align*}
        where we have used $\omega \le 1$ and $\sigma h(1-\sqrt{\gamma})\ge 1$, therefore $\omega - \sigma h(1-\sqrt{\gamma})\le 0$. Thus, we have
        \begin{align*}
            e_t \le \left[ 1-\dfrac{\sigma h}{t+t_0} (1-\sqrt{\gamma}) \right]\left(1 + \dfrac{1}{t+t_0}\right)^\omega \dfrac{1}{\sqrt{\gamma}(t+1+t_0)^\omega} \le \dfrac{1}{\sqrt{\gamma}(t+1+t_0)^\omega}.
        \end{align*}
        This finishes the induction, and concludes the proof of this lemma.
    \end{proof}
    By using Lemma \ref{applemma:sqrtgamma} and setting $\omega = 1, \frac{1}{2}$, respectively, we have
    \begin{align*}
        a_{t+1} &\le \left( \dfrac{\tau+t_0}{t+1+t_0} \right)^{\sigma h}a_\tau + \dfrac{C_\phi}{t+1+t_0} + \gamma \sup_{i\in\mathcal{N}}\sum_{k=\tau}^t b_{k,t,i}\dfrac{C_a^\prime}{k+t_0} + \gamma \sup_{i\in\mathcal{N}}\sum_{k=\tau}^t b_{k,t,i}\dfrac{C_a}{\sqrt{k+t_0}} + \dfrac{C_\epsilon}{\sqrt{t+1+t_0}} \\
        &\le \left( \dfrac{\tau+t_0}{t+1+t_0} \right)^{\sigma h}a_\tau + \dfrac{C_\phi}{t+1+t_0} + \sqrt{\gamma} \dfrac{C_a^\prime}{t+1+t_0} + \sqrt{\gamma} \dfrac{C_a}{\sqrt{t+1+t_0}} + \dfrac{C_\epsilon}{\sqrt{t+1+t_0}}.
    \end{align*}
    Denote $F_t = \sqrt{\gamma}\dfrac{C_a}{\sqrt{t+1+t_0}} + \dfrac{C_\epsilon}{\sqrt{t+1+t_0}}$ and $F_t^\prime = \sqrt{\gamma}\dfrac{C_a^\prime}{t+1+t_0} + \dfrac{C_\phi}{t+1+t_0}+\left( \dfrac{\tau+t_0}{t+1+t_0} \right)^{\sigma h}a_\tau$, then we have $a_{t+1}\le F_t+F_t^\prime$. It suffices to show $F_t\le \dfrac{C_a}{\sqrt{t+1+t_0}}, F_t^\prime\le\dfrac{C_a^\prime}{t+1+t_0}$.

    Notice that $\dfrac{C_\epsilon}{C_a} = \dfrac{6\bar{\epsilon}\sqrt{\frac{(\tau+1)h}{\sigma}\log\left( \frac{2(\tau+1)t^2n}{\delta} \right)}}{\frac{12\bar{\epsilon}}{1-\gamma}\sqrt{\frac{(\tau+1)h}{\sigma}\log\left(\frac{2(\tau+1)T^2n}{\delta}\right)}}\le \dfrac{1-\gamma}{2}\le 1-\sqrt{\gamma}$. The last inequality is a direct result of the fact that $(\sqrt{\gamma}-1)^2\ge 0$. Thus $F_t\le \dfrac{C_a}{\sqrt{t+1+t_0}}$.

    We also notice that $\dfrac{a_\tau(\tau+t_0)}{C_a^\prime} \le \dfrac{2r_{\rm max}}{1-\gamma} \dfrac{\tau+t_0}{C_a^\prime}\le\dfrac{1-\gamma}{4}\le \dfrac{1-\sqrt{\gamma}}{2}$. Furthermore, we have $\dfrac{C_\phi}{C_a^\prime} \le \dfrac{1-\gamma}{4} \le\dfrac{1-\sqrt{\gamma}}{2}$. Then, we have
    \begin{align*}
        F_t^\prime &= \sqrt{\gamma}\dfrac{C_a^\prime}{t+1+t_0} + \dfrac{C_\phi}{t+1+t_0}+\left( \dfrac{\tau+t_0}{t+1+t_0} \right)^{\sigma h}a_\tau \le \sqrt{\gamma}\dfrac{C_a^\prime}{t+1+t_0} + \dfrac{C_\phi}{t+1+t_0}+\dfrac{a_\tau(\tau+t_0)}{t+1+t_0} \\
        &\le \sqrt{\gamma}\dfrac{C_a^\prime}{t+1+t_0} + \dfrac{1-\sqrt{\gamma}}{2}\dfrac{C_a^\prime}{t+1+t_0} + \dfrac{1-\sqrt{\gamma}}{2}\dfrac{C_a^\prime}{t+1+t_0} = \dfrac{C_a^\prime}{t+1+t_0}
    \end{align*}
    This finishes the induction, and we have $a_T \le \frac{C_a}{\sqrt{T+t_0}} + \frac{C_a^\prime}{T+t_0}$, where $C_a = \frac{12\bar{\epsilon}}{1-\gamma}\sqrt{\frac{(\tau+1)h}{\sigma}\log\left(\frac{2(\tau+1)T^2n}{\delta}\right)}$, $C_a^\prime = \frac{4}{1-\gamma}\max(C_\phi, \frac{2(\tau+t_0)r_{\rm max}}{1-\gamma}), C_\phi = \frac{16\bar{\epsilon}h\tau}{\sigma}$. Based on Lemma \ref{applemma:boundonepsilonphi}, we have $\bar{\epsilon}:= \dfrac{4r_{\rm max}}{1-\gamma} + C$ where $C\le (1+\gamma)\|x^*\|\le 2\dfrac{r_{\rm max}}{1-\gamma}$. Therefore, we have $\bar{\epsilon}\le \dfrac{6r_{\rm max}}{1-\gamma}$. Taken together with $\tau = \lceil \log_2(\frac{2}{\mu_{\rm min}}) \rceil t_{\rm mix}$ and $\sigma = \dfrac{\mu_{\rm min}}{2}$, we have with probability at least $1-\delta$,
    \begin{align*}
        \|\hat{Q}_T - Q^*\| &\le \dfrac{72r_{\rm max}}{(1-\gamma)^2} \sqrt{\dfrac{2(\lceil \log_2(\frac{2}{\mu_{\rm min}}) \rceil t_{\rm mix} + 1)h}{\mu_{\rm min} (T+t_0)} \log\left( \dfrac{2(\lceil \log_2(\frac{2}{\mu_{\rm min}}) \rceil t_{\rm mix} + 1)T^2|\mathcal{S}||\mathcal{A}|}{\delta} \right)} \\
        & + \dfrac{4r_{\rm max}}{(1-\gamma)^2} \max \left( \dfrac{192 h \lceil \log_2(\frac{2}{\mu_{\rm min}}) \rceil t_{\rm mix}}{\mu_{\rm min}}, 2\left( \lceil \log_2(\frac{2}{\mu_{\rm min}}) \rceil t_{\rm mix} + t_0\right) \right)\dfrac{1}{T+t_0} \\
        &\simeq \tilde{\mathcal{O}}\left( \dfrac{r_{\rm max}\sqrt{t_{\rm mix}}}{(1-\gamma)^{2.5}\mu_{\rm min}}\dfrac{1}{\sqrt{T}} + \dfrac{ r_{\rm max}t_{\rm mix}}{(1-\gamma)^3\mu_{\rm min}^2}\dfrac{1}{T} \right).
    \end{align*}
    The above inequality holds when we take $h=\Theta(\frac{1}{\mu_{\rm min}(1-\gamma)}), t_0 = \tilde{\Theta}(\max(\frac{1}{\mu_{\rm min}(1-\gamma)}, t_{\rm mix}))$. The whole proof is thus completed.
\end{proof}

\subsection{Proof of Corollary \ref{coro:samplecomplexity}}
\begin{corollary}[Sample complexity]
For any $0<\delta<1$ and $0<\epsilon<1$, with Q-SMR algorithm we have:
\begin{equation}
    \forall (s,a)\in\mathcal{S}\times\mathcal{A}: \|\hat{Q}_T - Q^*\|_\infty \le \epsilon,
\end{equation}
holds with probability at least $1-\delta$, provided the iteration number $T$ obeys:
\begin{equation}
    T\stackrel{>}{\sim}\dfrac{r_{\rm max}^2 t_{\rm mix}}{(1-\gamma)^5\mu_{\rm min}^2}\dfrac{1}{\epsilon^2}.
\end{equation}
\end{corollary}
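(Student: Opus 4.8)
The plan is to obtain the corollary as a direct consequence of the finite time error bound in Theorem~\ref{theo:convergencerate}, via the standard device of forcing each of the two error terms below $\epsilon/2$. Recall that theorem states that, with probability at least $1-\delta$,
\begin{equation*}
\|\hat{Q}_T - Q^*\|_\infty \le \tilde{\mathcal{O}}\!\left( \frac{r_{\rm max}\sqrt{t_{\rm mix}}}{(1-\gamma)^{2.5}\mu_{\rm min}}\frac{1}{\sqrt{T}} + \frac{r_{\rm max}t_{\rm mix}}{(1-\gamma)^3\mu_{\rm min}^2}\frac{1}{T} \right).
\end{equation*}
First I would handle the dominant $1/\sqrt{T}$ term: requiring $\frac{r_{\rm max}\sqrt{t_{\rm mix}}}{(1-\gamma)^{2.5}\mu_{\rm min}}\frac{1}{\sqrt{T}} \le \frac{\epsilon}{2}$ and solving for $T$ yields $T \stackrel{>}{\sim} \frac{r_{\rm max}^2 t_{\rm mix}}{(1-\gamma)^5\mu_{\rm min}^2}\frac{1}{\epsilon^2}$, which is precisely the threshold claimed in the corollary.

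Next I would verify that this same choice of $T$ also drives the $1/T$ term below $\epsilon/2$. The condition $\frac{r_{\rm max}t_{\rm mix}}{(1-\gamma)^3\mu_{\rm min}^2}\frac{1}{T} \le \frac{\epsilon}{2}$ only needs $T \stackrel{>}{\sim} \frac{r_{\rm max}t_{\rm mix}}{(1-\gamma)^3\mu_{\rm min}^2}\frac{1}{\epsilon}$; since $0 < \epsilon < 1$ we have $1/\epsilon \le 1/\epsilon^2$, and under the customary normalization $r_{\rm max}\ge 1$, $1-\gamma \le 1$ one also has $\frac{r_{\rm max}}{(1-\gamma)^3} \le \frac{r_{\rm max}^2}{(1-\gamma)^5}$, so the requirement coming from the linear term is already implied by the one coming from the square-root term. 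Summing the two contributions then gives $\|\hat{Q}_T - Q^*\|_\infty \le \epsilon$ with probability at least $1-\delta$, and the stated probability passes through unchanged from Theorem~\ref{theo:convergencerate} without any extra union bound.

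The only point that needs a little care is the bookkeeping of the polylogarithmic factors hidden inside $\tilde{\mathcal{O}}$: the bound in Theorem~\ref{theo:convergencerate} carries a $\mathrm{poly}\log$ dependence on $T$, $|\mathcal{S}||\mathcal{A}|$, $1/\delta$ and $1/(1-\gamma)$, and in particular a $\sqrt{\log T}$ sits inside the $1/\sqrt{T}$ term. I would absorb this in the usual way: $\sqrt{\log T}$ grows slower than any positive power of $T$, so keeping it only inflates the threshold on $T$ by a polylogarithmic amount, which is exactly what the $\stackrel{>}{\sim}$ and $\tilde{\mathcal{O}}$ symbols in the statement are meant to suppress. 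This is the main (and quite mild) obstacle; the rest is pure algebraic rearrangement of the two-term bound.
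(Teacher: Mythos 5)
Your proposal is correct and follows essentially the same route as the paper: both read off the threshold on $T$ from the dominant $\frac{1}{\sqrt{T}}$ term of Theorem~\ref{theo:convergencerate} and observe that the $\frac{1}{T}$ term is then automatically controlled. You are in fact slightly more careful than the paper (which simply asserts the $\frac{1}{\sqrt{T}}$ term is "sufficient" by scaling), since you explicitly verify the subordinate term and account for the polylogarithmic factors hidden in $\tilde{\mathcal{O}}$.
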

\begin{proof}
    The proof is quite straightforward. Since $\|\hat{Q}_T - Q^*\| \le \tilde{\mathcal{O}}\left( \dfrac{r_{\rm max}\sqrt{t_{\rm mix}}}{(1-\gamma)^{2.5}\mu_{\rm min}}\dfrac{1}{\sqrt{T}} + \dfrac{r_{\rm max}t_{\rm mix}}{(1-\gamma)^3\mu_{\rm min}^2}\dfrac{1}{T} \right)$. Reaching an accuracy of $\epsilon$ means that $\dfrac{r_{\rm max}\sqrt{t_{\rm mix}}}{(1-\gamma)^{2.5}\mu_{\rm min}}\dfrac{1}{\sqrt{T}} + \dfrac{ r_{\rm max}t_{\rm mix}}{(1-\gamma)^3\mu_{\rm min}^2}\dfrac{1}{T} \le \epsilon$. With the scale of $\dfrac{1}{\sqrt{T}}$ and $\dfrac{1}{T}$, $\dfrac{r_{\rm max}\sqrt{t_{\rm mix}}}{(1-\gamma)^{2.5}\mu_{\rm min}}\dfrac{1}{\sqrt{T}}\le \epsilon$ is sufficient, which leads to $T\ge \dfrac{r_{\rm max}^2 t_{\rm mix}}{(1-\gamma)^5\mu_{\rm min}^2}\dfrac{1}{\epsilon^2}$.
\end{proof}

\section{Missing Experimental Results and Details}
\label{sec:missingexperiments}
In this section, we provide some missing experimental results and details. We first demonstrate the experimental results of TD3-SMR and DDPG-SMR, and we also show how reducing SMR ratio and increasing batch size will affect them. We then list the full results of SAC-SMR on DMC suite \cite{Tassa2018DeepMindCS} and PyBullet-Gym \cite{benelot2018}, including state-based tasks and image-based tasks. Furthermore, we show that SMR can boost the sample efficiency of the base algorithm with longer online interactions (1M online interactions). We also conduct experiments on Arcade Learning Environment (Atari) where we combine SMR with DQN \cite{Mnih2015HumanlevelCT}. Finally, we show that SMR can improve sample efficiency regardless of the initial learning rate. 
\subsection{Performance of TD3-SMR and DDPG-SMR}
\label{appsec:td3smrddpgsmr}
We summarize the full performance comparison of TD3-SMR against the vanilla TD3 as well as DDPG-SMR versus DDPG (here we use our DDPG from \cite{Fujimoto2018AddressingFA}) on four continuous control tasks from OpenAI Gym \cite{Brockman2016OpenAIG} in Figure \ref{fig:td3-smr-ddpg-smr}. We use $M=10$ by default. We notice that the sample efficiency of both TD3 and DDPG benefit greatly from SMR on many of the evaluated tasks. While we do observe a sort of performance instability on Ant-v2 for TD3-SMR, and find that DDPG-SMR underperforms the vanilla DDPG. For TD3, this may be because the neural networks encounter the phenomenon of overfitting in this environment. While for DDPG, this may be due to the fact that \textit{SMR does not modify the way of value estimation}, indicating that the phenomenon of overestimation still exists in DDPG-SMR. The overestimation bias can be accumulated during the sample reuse loop on Ant-v2, resulting in poor performance. On other tasks, we find that SMR consistently aids the sample efficiency of the base algorithm for both TD3 and DDPG, often by a large margin. As mentioned in Section \ref{sec:overfitting}, the ways of remedying the overfitting phenomenon can be (1) using smaller $M$, e.g., $M=5$; (2) resetting the agent periodically; (3) leveraging a larger batch size; etc. We show below the effectiveness of part of them, including using a smaller SMR ratio and using a larger batch size.

\begin{figure}
    \centering
    \includegraphics[width=0.95\linewidth]{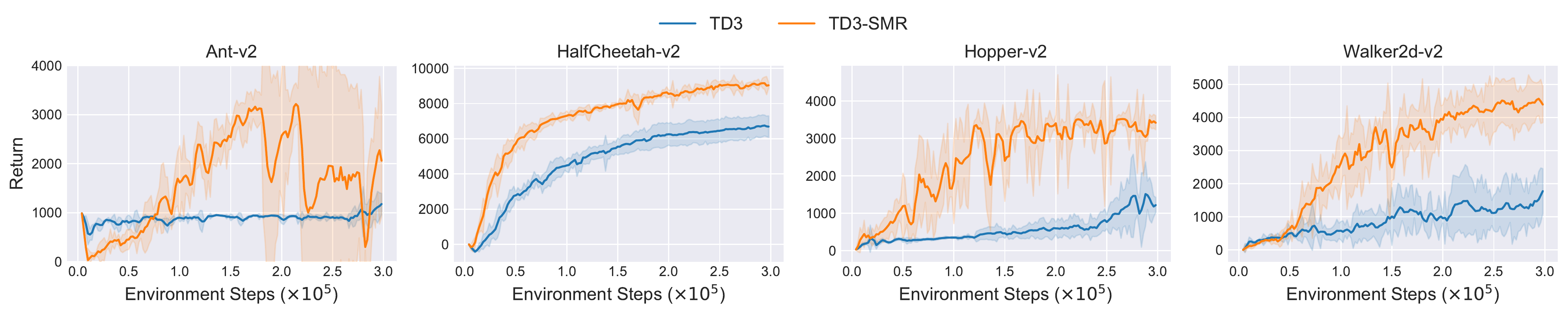}
    \includegraphics[width=0.95\linewidth]{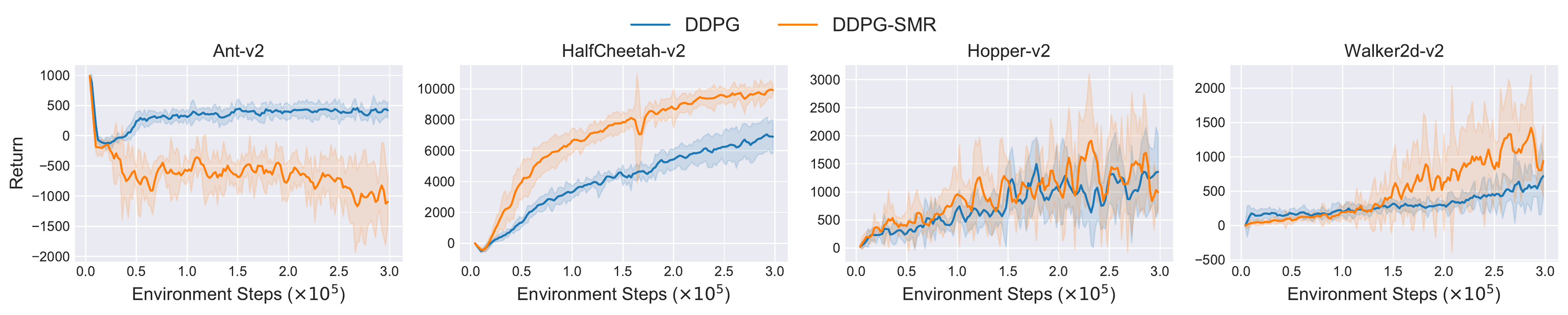}
    \caption{Experimental results of TD3-SMR against TD3 and DDPG-SMR against DDPG. The results are averaged over 6 random seeds, and the shaded region denotes the standard deviation.}
    \label{fig:td3-smr-ddpg-smr}
\end{figure}

\begin{figure}[h]
    \centering
    \subfigure[TD3-SMR batch size]{
    \label{fig:td3batchsize}
    \includegraphics[scale=0.6]{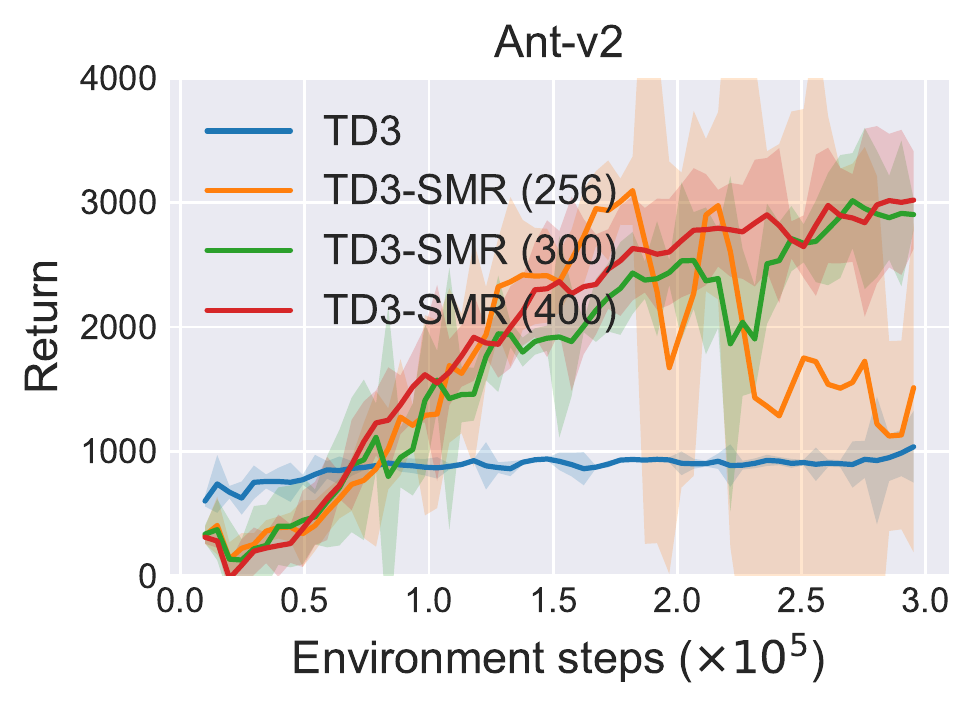}
    }\hspace{-2mm}
    \subfigure[TD3-SMR SMR ratio]{
    \label{fig:td3ratio}
    \includegraphics[scale=0.6]{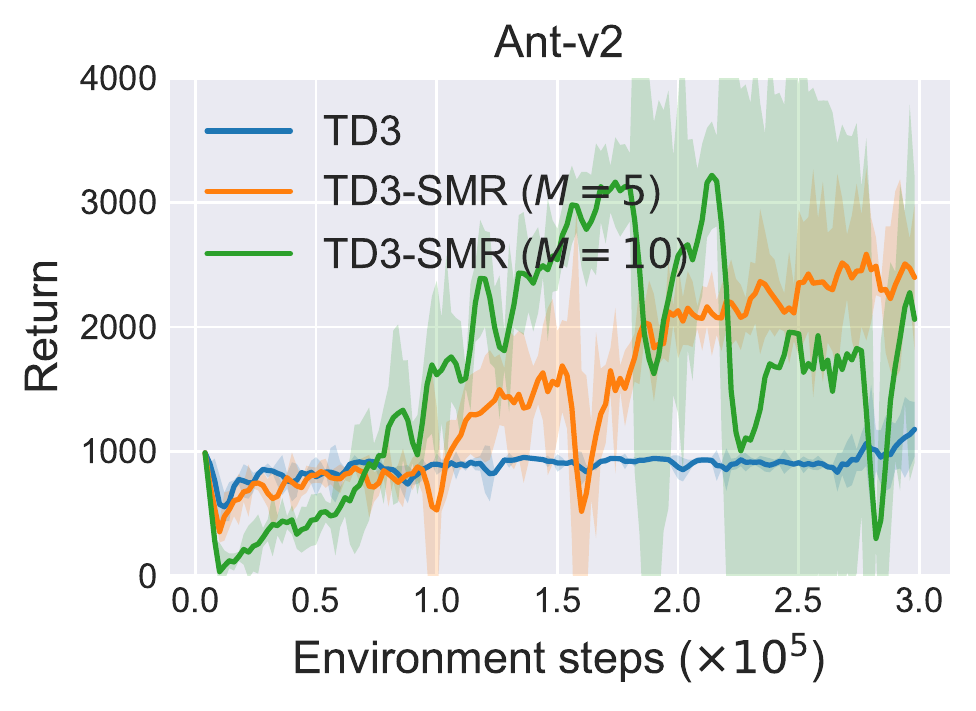}
    }\hspace{-2mm}
    \subfigure[DDPG-SMR batch size]{
    \label{fig:ddpgbatchsize}
    \includegraphics[scale=0.6]{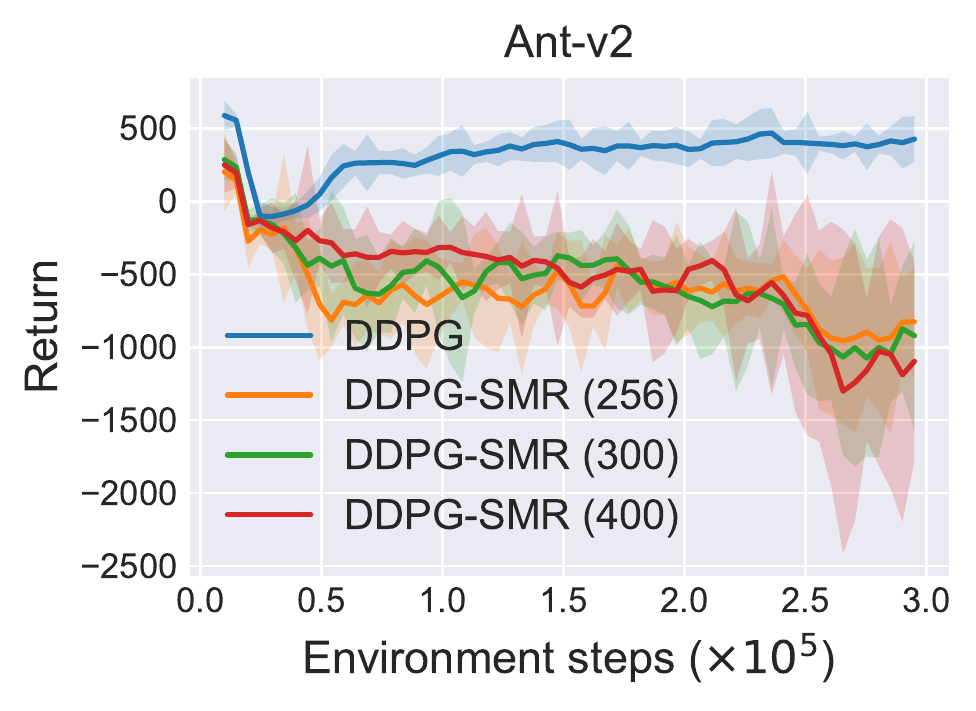}
    }\hspace{-2mm}
    \subfigure[DDPG-SMR SMR ratio]{
    \label{fig:ddpgratio}
    \includegraphics[scale=0.6]{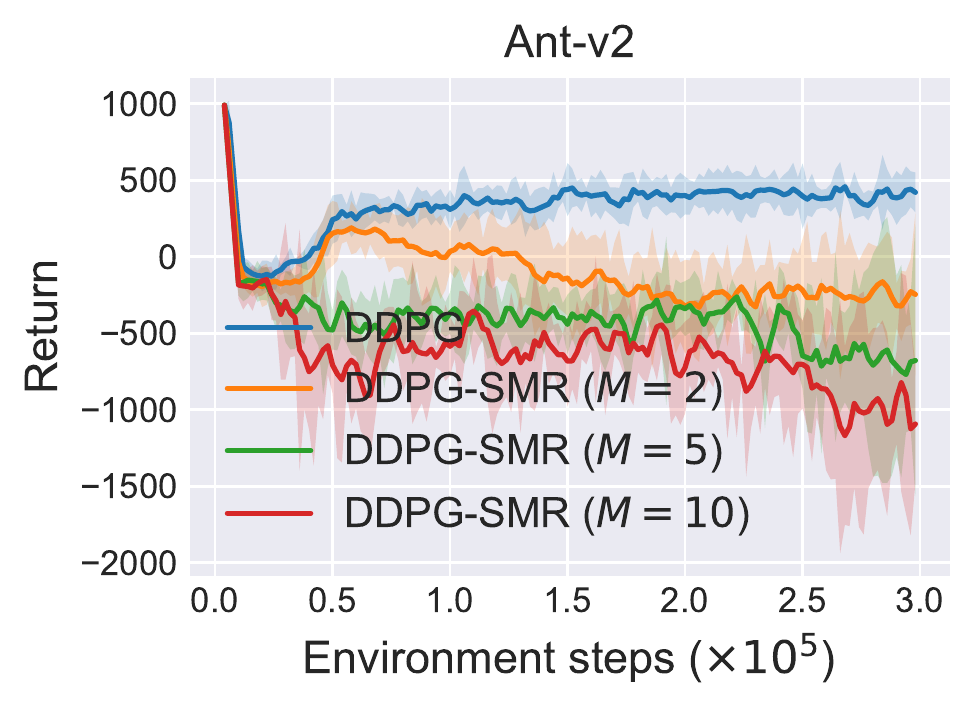}
    }\hspace{-2mm}
    \caption{Performance comparison of TD3-SMR and DDPG-SMR against their base algorithms on Ant-v2. (a) TD3-SMR with different batch sizes where we sweep over $\{256,300,400\}$; (b) TD3-SMR with different SMR ratios $M$ where we compare TD3-SMR ($M=10$) against TD3-SMR ($M=5$); (c) DDPG-SMR with different batch sizes in $\{256,300,400\}$; (d) DDPG-SMR with different SMR ratios $M, M\in\{2,5,10\}$. We report the mean performance along with the standard deviation over 6 different random seeds.}
    \label{fig:alleviatingoverfitting}
\end{figure}

We summarize the empirical results in Figure \ref{fig:alleviatingoverfitting}, where we run for 300K online interactions and evaluate the agent every 1000 timesteps over 10 trials. We find that using a smaller SMR ratio or using a larger batch size is beneficial to the stability and satisfying performance for TD3-SMR as shown in Figure \ref{fig:td3batchsize} and \ref{fig:td3ratio}. However, it can be seen that DDPG-SMR does not seem to benefit from either a smaller SMR ratio $M$ (even $M=2$) or a larger batch size. Using a smaller SMR ratio or larger batch size can help improve DDPG-SMR with $M=10$ to some extent. While they still underperform vanilla DDPG. This is due to \textit{SMR only enforces more updates on the fixed batch data instead of dealing with overestimation bias}. As shown in Equation \ref{eq:smrlearningrate}, SMR tends to \textit{smooth the gradient for updating} by leveraging the gradient of intermediate values. On tasks like \texttt{HalfCheetah-v2} and \texttt{Walker2d-v2}, SMR can benefit DDPG by better exploiting collected data, while on some tasks like \texttt{Ant-v2}, DDPG-SMR does not seem to be able to escape from the curse of overestimation bias. We, therefore, recommend the application of SMR upon off-policy continuous control algorithms that can address the overestimation bias themselves, e.g., TD3 \cite{Fujimoto2018AddressingFA} by using clipped double Q-learning; TQC \cite{Kuznetsov2020ControllingOB} by truncating small proportion of estimated $Q$ distribution, etc.

\subsection{Omitted results from DMC suite and PyBullet-Gym}
We demonstrate in this subsection the missing experimental results of SAC-SMR on DMC suite \cite{Tassa2018DeepMindCS} and PyBullet-Gym \cite{benelot2018}. The performance comparison of SAC-SMR and SAC is available in Figure \ref{fig:missingstatebasedtasks}. As expected, we observe that SAC-SMR outperforms the vanilla SAC on all of the evaluated state-based tasks. These further show that SMR can benefit the sample efficiency of the base algorithm on a wide range of different tasks.

\begin{figure}
    \centering
    \includegraphics[width=0.95\linewidth]{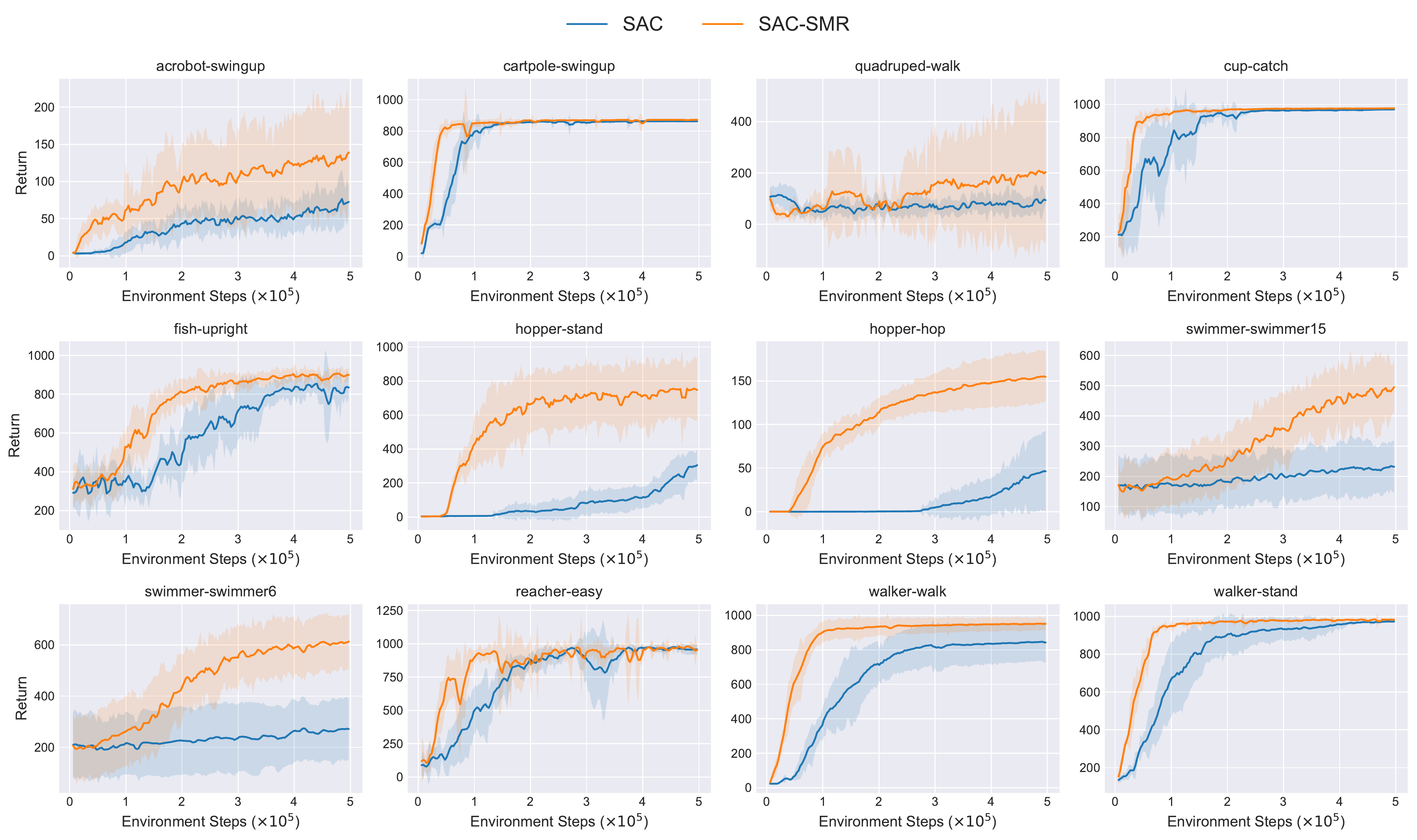}
    \includegraphics[width=0.95\linewidth]{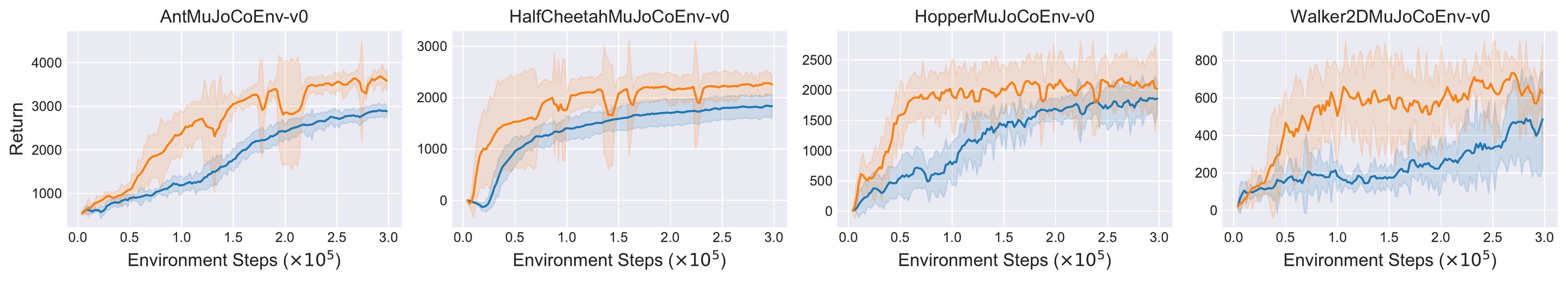}
    \caption{Experimental results of SAC-SMR against SAC on state-based tasks from DMC suite and PyBullet-Gym. The results are averaged over 6 random seeds with 500K online interactions, and the shaded region denotes the standard deviation.}
    \label{fig:missingstatebasedtasks}
\end{figure}

We further demonstrate in Figure \ref{fig:dmc-imagesacsmr} the experimental results on 4 additional image-based tasks from DMC suite. Note that we run experiments on DMC suite 100K benchmarks for SAC-SMR. For image-based tasks, we use a comparatively small SMR ratio $M=5$ as it will be very time-consuming to adopt a $M=10$ (image-based tasks already take much longer time to run than state-based tasks). For example, it takes about 4 hours to run with SAC on \texttt{reacher-easy} while it takes about 15 hours to run with SAC-SMR ($M=5$) on this task. If we adopt $M=10$, it will take more than 24 hours. We observe that on some of the image-based tasks, our SMR can boost the sample efficiency of SAC, e.g., SAC-SMR learns faster than vanilla SAC on \texttt{cheetah-run} and beats SAC on \texttt{cup-catch}. While we also see that SAC-SMR kind of underperforming SAC on \texttt{cartpole-swingup} and \texttt{reacher-easy}. In fact, we do not see a large margin on image-based tasks as on the state-based tasks. We attribute the reason to \textit{bad representation}. We usually leverage an encoder to deal with image input, where we do representation learning to reduce the size of the input. However, the parameters of the encoder are also continually updated during the training process. The error in representations accumulates and may impede the agent from learning better policy. For some of the tasks, SMR can benefit the agent, while on some tasks, things are different. SMR can benefit the state-based tasks as the states are \textit{precise representation} of the information that the agent needs to perform control. This phenomenon also exists on Atari tasks, one can refer to Appendix \ref{appsec:atari} for more details. Furthermore, as mentioned in \cite{yarats2022mastering}, the automatic entropy adjustment strategy in SAC is inadequate and in some cases may result in a premature \textit{entropy collapse}. This will in turn prevent the agent from finding more optimal behaviors. SMR can somewhat worsen this phenomenon due to multiple updates on the sampled batch. These we believe can explain the failure of SAC-SMR on some of the image-based tasks.

\begin{figure}
    \centering
    \includegraphics[width=0.95\linewidth]{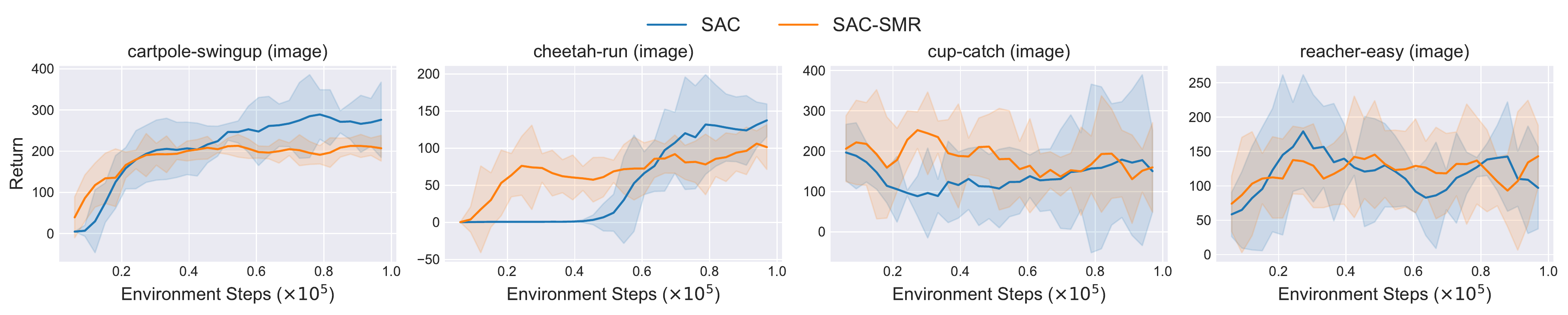}
    \caption{Experimental results on 4 image-based tasks from DMC suite 100K benchmarks. The results are averaged over 6 different random seeds, and the shaded region denotes the standard deviation.}
    \label{fig:dmc-imagesacsmr}
\end{figure}

One interesting question is: whether more advanced methods for image-based tasks can benefit from SMR? To answer this question, we select the most recent DrQ-v2 \cite{yarats2022mastering} and combine it with our SMR. DrQ-v2 is built upon DrQ \cite{yarats2021image}, an actor-critic approach that uses data augmentation to learn directly from pixels. The improvements of DrQ-v2 over DrQ include: (1) switch SAC to DDPG (to avoid entropy collapse); (2) incorporate $n$-step returns to estimate temporal difference (TD); (3) introduce a decaying schedule for exploration noise error; (4) improve running speed; (5) find better hyperparameters. We adopt the SMR ratio $M=5$ as we do for image-based tasks in SAC-SMR. We demonstrate in Figure \ref{fig:drqv2} that DrQ-v2-SMR can outperform DrQ-v2 on most of the evaluated tasks (e.g., \texttt{cup-catch}, \texttt{cartpole-swingup}) and is competitive to DrQ-v2 on other tasks (e.g., \texttt{cheetah-run}). We also compare the final performance of DrQ-v2 and DrQ-v2-SMR at 500K frames in Table \ref{tab:drqv2performance}, where we unsurprisingly find the advantage of DrQ-v2-SMR over DrQ-v2. The success of SMR upon DrQ-v2 may due to (1) no entropy collapse and better exploration mechanism; (2) data augmentation to help alleviate the negative influence of initial bad representation.

Note that DrQ-v2-SMR spends 3 times of training time than DrQ-v2. Thanks to the fast running speed of DrQ-v2, this cost is comparatively tolerable. For example, DrQ-v2 requires 7 hours on \texttt{finger-spin} while DrQ-v2-SMR takes 20 hours.

\begin{figure}
    \centering
    \includegraphics[width=0.8\linewidth]{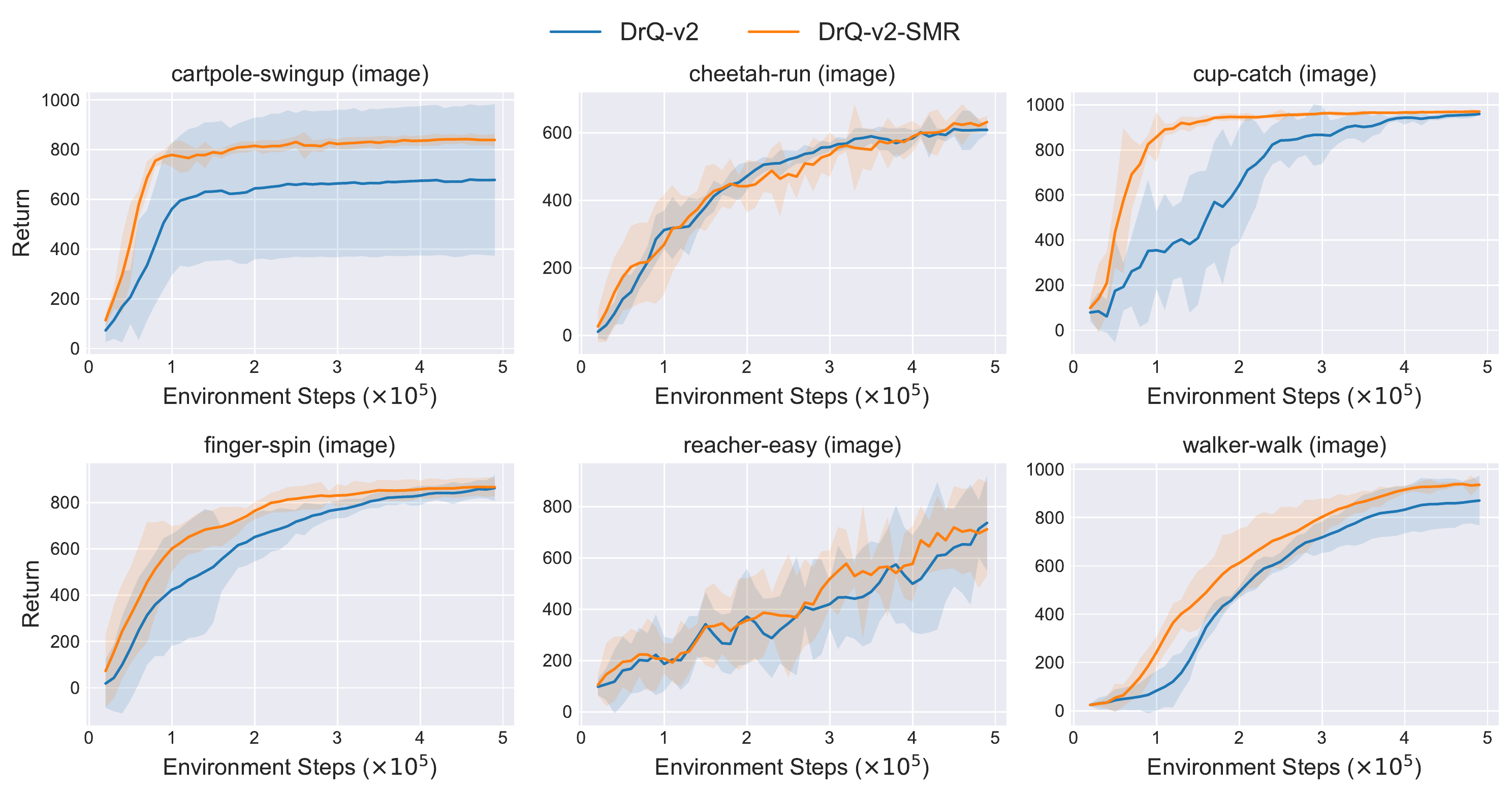}
    \caption{Experimental results of DrQ-v2-SMR against vanilla DrQ-v2 on six image-based tasks from DMC suite. Each algorithm is run for 500K frames and evaluated over 10 trials every 1000 frames. The results are averaged over 6 different random seeds. We report the mean performance and the standard deviation.}
    \label{fig:drqv2}
\end{figure}

\begin{table}
  \caption{Performance comparison of DrQ-v2 and DrQ-v2-SMR on six image-based tasks from DMC suite. The numbers indicate the performance achieved when the specific number of frames is seen. We \textbf{bold} the best mean results.}
  \vspace{0.2cm}
  \renewcommand\arraystretch{1.05}
  \label{tab:drqv2performance}
  \centering
  \small
  \begin{tabular}{l|ll}
    \toprule
    Tasks  & DrQ-v2 & DrQ-v2-SMR \\
    \midrule
    cartpole-swingup@500K & 682.7$\pm$304.8 & \textbf{842.5}$\pm$25.2 \\
    cheetah-run@500K & 605.5$\pm$12.3 & \textbf{626.5}$\pm$17.8 \\  
    cup-catch@500K & 965.6$\pm$5.9 & \textbf{970.2}$\pm$5.1 \\
    finger-spin@500K & 867.7$\pm$55.0 & \textbf{872.0}$\pm$39.3 \\
    reacher-easy@500K & \textbf{736.8}$\pm$185.5 & 736.2$\pm$182.1 \\
    walker-walk@500K & 869.5$\pm$102.6 & \textbf{949.3}$\pm$8.9 \\
    \bottomrule
  \end{tabular}
\end{table}

\subsection{Can SMR still work with longer online interactions?}
\label{sec:smrlonger}
In the main text and the appendix above, we run most of the experiments with only 300K online interactions or 500K online interactions (100K for SAC-SMR on image-based tasks from the DMC suite). Though 300K or 500K (or even fewer) online interactions are widely adopted for examining sample efficiency in model-based methods \cite{Janner2019WhenTT, Pan2020TrustTM, Lai2020BidirectionalMP, wu2022plan} and REDQ \cite{Chen2021RandomizedED}, one may wonder whether our method can consistently improve sample efficiency with longer online interactions. To address this concern, we run SAC-SMR ($M=10$) on 16 tasks from the DMC suite for a typical 1M online interactions. Each algorithm is evaluated every 1000 timesteps over 10 trials. We summarize the empirical results in Figure \ref{fig:sacsmr1m} where SAC-SMR significantly outperforms vanilla SAC on all of the evaluated tasks by a remarkable margin. SAC-SMR can converge faster and learn faster with longer online interactions.

\begin{figure}
    \centering
    \includegraphics[width=0.95\linewidth]{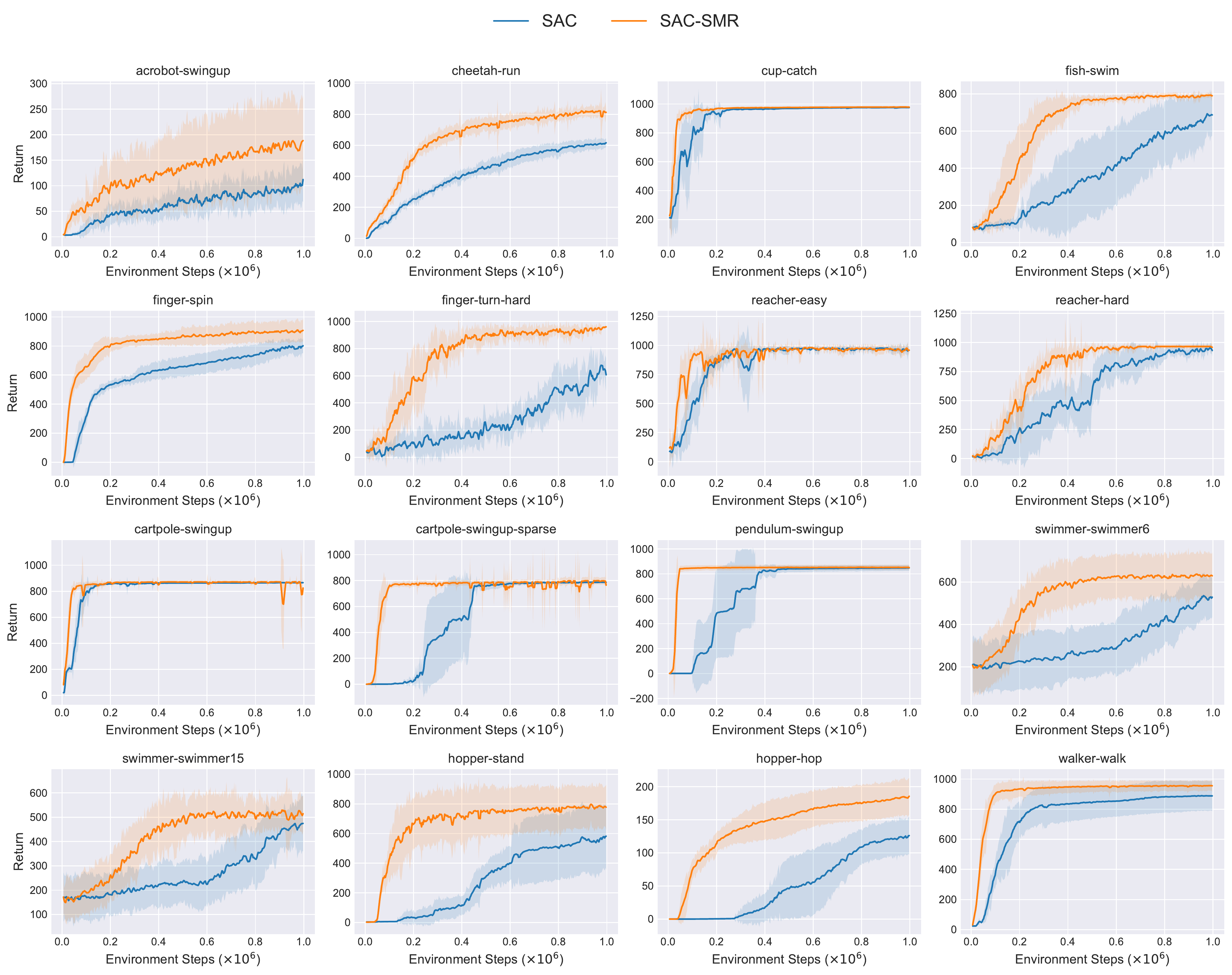}
    \caption{Experimental results of SAC-SMR against SAC on 16 tasks from DMC suite. All methods are run for 1M online interactions, The results are averaged over 6 different random seeds and the shaded region represents the standard deviation.}
    \label{fig:sacsmr1m}
\end{figure}

Furthermore, we run TQC-SMR and TQC for 1M online interactions on 4 OpenAI Gym environments to show the generality of the above conclusion. We summarize the empirical results in Figure \ref{fig:tqcsmr1m}. It can be seen that SMR consistently improves the sample efficiency of TQC with longer interactions, often surpassing the base algorithm by a large margin. We believe the evidence above are enough to show that SMR does aid sample efficiency with longer interactions.

\begin{figure}
    \centering
    \includegraphics[width=0.95\linewidth]{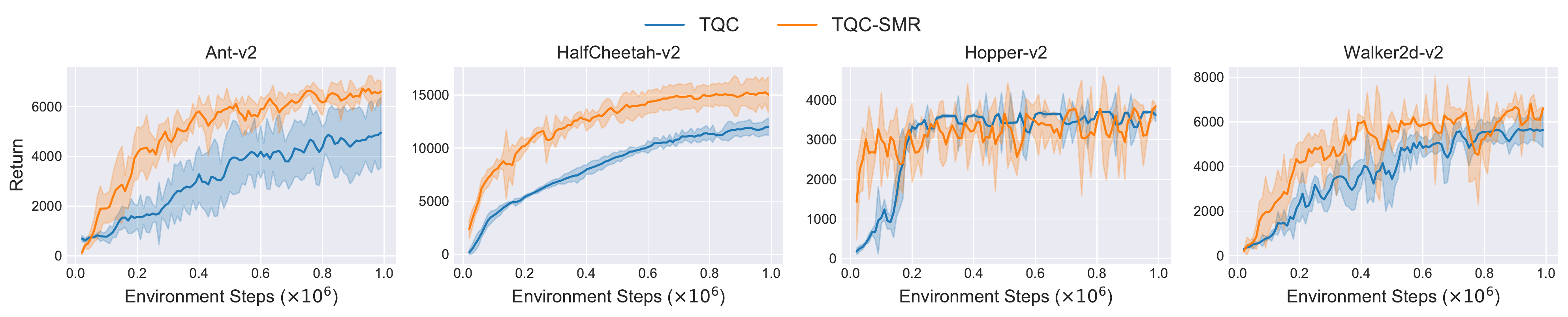}
    \caption{Experimental results of TQC-SMR against TQC on 4 tasks from OpenAI Gym. All methods are run for 1M online interactions, The results are averaged over 6 different random seeds and the shaded region represents the standard deviation.}
    \label{fig:tqcsmr1m}
\end{figure}

The concern on whether SMR aids sample efficiency with longer interactions is strongly correlated with the concern on the asymptotic performance of SMR. One can find in Figure \ref{fig:sacsmr1m} and \ref{fig:tqcsmr1m} that the asymptotic performance of SMR upon different baseline algorithms are actually quite good. For example, on many tasks like \textit{finger-turn-hard}, \textit{reacher-hard}, SAC-SMR converges very fast and achieves the highest possible return on them. Meanwhile, as we emphasize in the main text, we do not mean that the users have to always use a large SMR ratio if one worries about overfitting. SMR can serve as a quite good warm-up strategy, i.e., utilizing SMR (with SMR ratio $M=10$) for some initial interactions (e.g., 300K) and then resume vanilla training process (with SMR ratio $M=1$). In this way, one can enjoy both good sample efficiency from SMR and good asymptotic performance from the vanilla algorithm.

\subsection{Experimental results on Atari}
\label{appsec:atari}

\begin{wrapfigure}{r}{0.4\textwidth}
\centering
\includegraphics[scale=0.6]{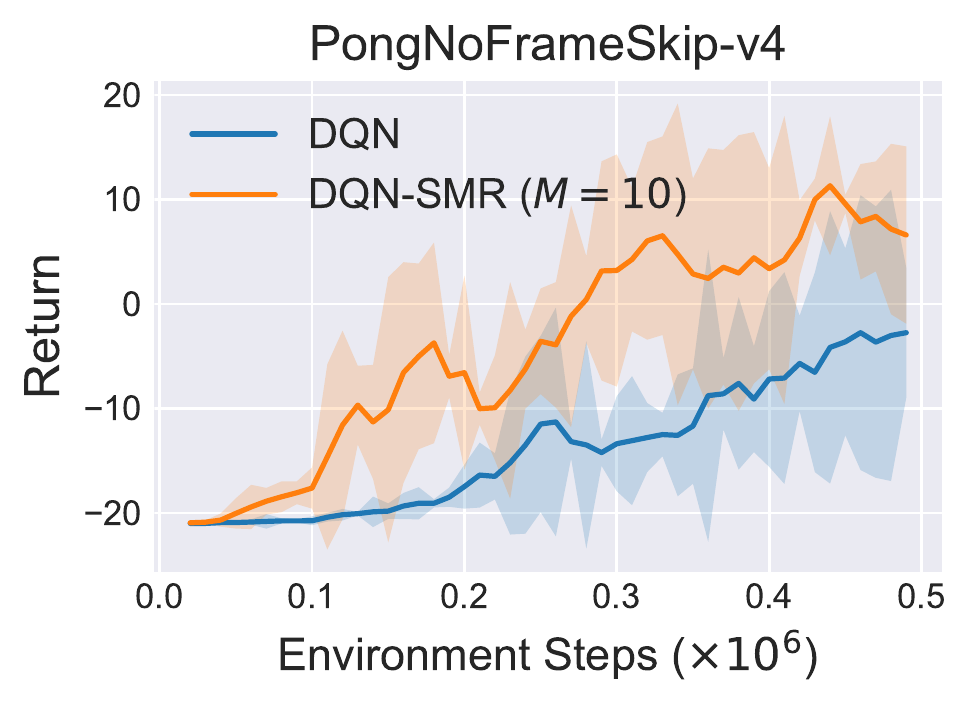}
\caption{Experimental results of DQN-SMR ($M=10$) against DQN on PongNoFrameSkip-v4 task. Each method is run for 500K frames. The results are averaged over 5 random seeds, and the shaded region captures the standard deviation.}
\label{fig:pongexample}
\end{wrapfigure}

We show in the main text that Q-SMR significantly outperforms Q-learning on two discrete control tasks. One may naturally ask: can SMR aid the sample efficiency of DQN on Arcade Learning Environment \cite{Bellemare2012TheAL}? To answer this question, we conduct experiments on one typical environment \texttt{PongNoFrameSkip-v4}. We adopt the original way of processing the Atari image input, i.e., map the image ($3\times 84 \times 84$) into an embedding of size $32\times 7 \times 7$ with convolutional networks. Then, this representation is passed into an MLP with a hidden layer size of 256 to get $Q$-value estimate. We keep the default hyperparameters of DQN unchanged and only incorporate a sample reuse loop in it to yield DQN-SMR. It can be found in Figure \ref{fig:pongexample} that DQN-SMR with SMR ratio $M=10$ remarkably outperforms DQN on Pong. However, It takes about \textbf{84 hours} for DQN-SMR to run 500K frames on Pong, which is 8.7 times slower than DQN (9.6 hours). The computation cost is due to the fact that the size of image input is very large, and it takes time for the network to process it. Updating on the fixed batch (which SMR does) will inevitably worsen the situation and take longer time to train the agent. Considering that there are many advanced methods for discrete control with image input like EfficientZero \cite{Ye2021MasteringAG} (which solves Atari within 2 hours of real-time game play), MuZero \cite{Schrittwieser2019MasteringAG}, Dreamer v2 \cite{hafner2021mastering}, SimPLe \cite{Kaiser2019ModelBasedRL}, it is \textbf{\color{red} STRONGLY NOT RECOMMENDED} to adopt SMR on image-based tasks like Atari. 

However, to show that our method can also work in discrete control settings, we conduct experiments on four environments from Atari. To save training time, we only adopt a small SMR ratio $M=2$. We run \texttt{PongNoFrameSkip-v4} for 1M frames, and other tasks for 4M frames. Each algorithm is evaluated every 5000 timesteps over 10 trials. The results are summarized in Figure \ref{fig:dqn-4}.

It is interesting to see that DQN-SMR outperforms DQN on \texttt{PongNoFrameSkip-v4} and is slightly better than DQN on \texttt{BreakoutNoFrameSkip-v4}. However, DQN-SMR underperforms DQN on \texttt{BeamRiderNoFrameSkip-v4} and only exceeds DQN at the last few online interactions. DQN-SMR learns faster than DQN at first several timesteps on \texttt{SpaceInvadersNoFrameSkip-v4} and underperforms DQN afterwards. Such a phenomenon is due to the fact that the encoder in the DQN network (convolutional layers) is also continually updated during training. At the first several steps, the encoder may output bad representations for the task, indicating that the resulting representations are actually biased and inaccurate. With the sample reuse loop on these bad representations, it will become harder for the network to learn the correct knowledge and policy for this control task. For some of the tasks, the agent may successfully get rid of this dilemma, while on some other tasks, the agent may get stuck and cannot escape from it. Also, DQN is known to incur overestimation bias \cite{Hasselt2015DeepRL, Sabry2019OnTR}, which is similar to DDPG. We observe DDPG-SMR underperforms DDPG on Ant-v2 in Appendix \ref{appsec:td3smrddpgsmr}, and the situation is similar here. Meanwhile, though we adopt a very small SMR ratio $M=2$, it still takes about 2 times longer training time than vanilla DQN, e.g., it takes 18 hours for DQN to run 1M steps on \texttt{PongNoFrameSkip-v4}, while it takes 37 hours of training time for DQN-SMR with $M=2$; it takes 3 days for DQN to run 4M frames on \texttt{BreakoutNoFrameSkip-v4}, while it takes about 6 days of training time for DQN-SMR with SMR ratio $M=2$. We thus do not recommend using SMR loop on image-based tasks. Since we set our focus on the continuous control domain, we do not actively conduct extensive experiments on DQN and its variants (e.g. C51 \cite{Bellemare2017ADP}, Rainbow \cite{Hessel2017RainbowCI}) in discrete control tasks.

\begin{figure}
    \centering
    \includegraphics[width=0.95\linewidth]{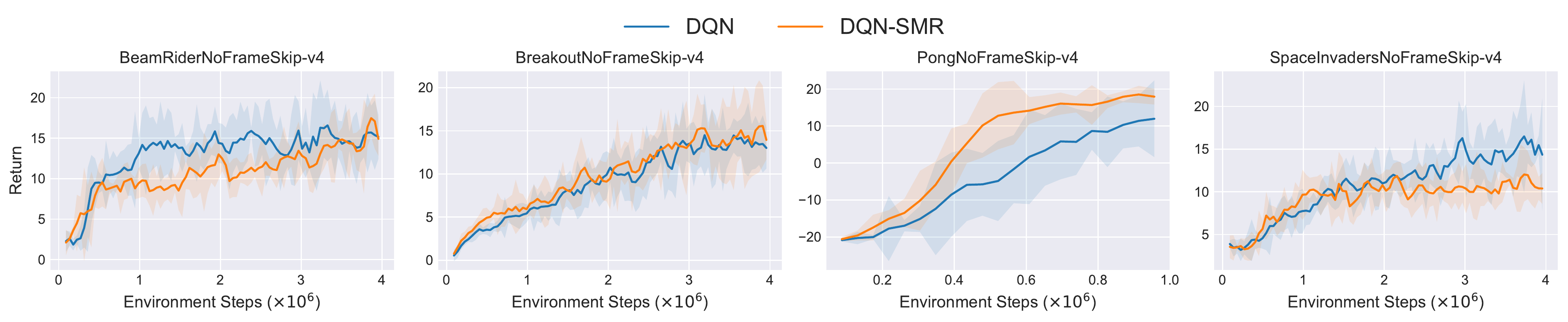}
    \caption{Empirical results of DQN-SMR against DQN on four tasks from Atari. The results are averaged over 5 different random seeds with the shaded region denoting standard deviation.}
    \label{fig:dqn-4}
\end{figure}

\subsection{Can SMR benefit base algorithm with different learning rate?}
In the main text, we combine SMR with the base algorithm without tuning the hyperparameters. Considering the difference between magnifying learning rate and SMR loop, one may wonder whether SMR can boost the sample efficiency of the base algorithm with different initial learning rates. We answer this by comparing SAC-SMR ($M=10$) against SAC and conducting experiments on two typical environments from OpenAI Gym \cite{Brockman2016OpenAIG}, HalfCheetah-v2 and Walker2d-v2, under different initial learning rates for actor and critic networks. We sweep the learning rate over $\{1\times 10^{-3}, 1\times 10^{-4}, 3\times 10^{-4}, 3\times 10^{-5}\}$ (SAC uses a learning rate of $3\times 10^{-4}$ by default, one can check the detailed hyperparameter setup for SAC in Appendix \ref{appsec:continuouscontrolalgo}). We summarize the empirical results in Figure \ref{fig:sac-different-lr}. It is easy to find that SMR notably improves the sample efficiency of SAC upon different initial learning rates, even with a large learning rate $1\times 10^{-3}$. We believe this evidence can alleviate the concern, and validate the effectiveness and generality of SMR.

\begin{figure}[h]
    \centering
    \subfigure[Learning rate 0.001]{
    \label{fig:lr1e-3}
    \includegraphics[width=0.46\linewidth]{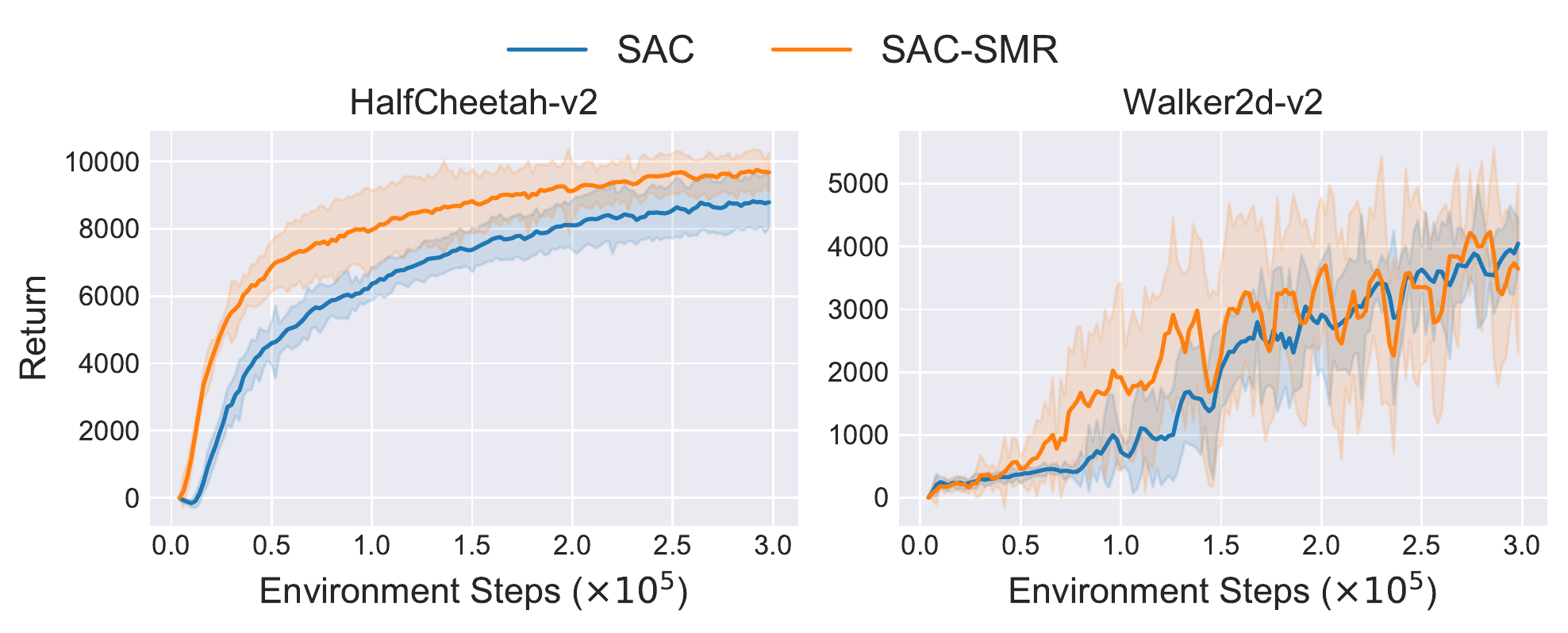}
    }\hspace{-2mm}
    \subfigure[Learning rate 0.0001]{
    \label{fig:lr1e-4}
    \includegraphics[width=0.46\linewidth]{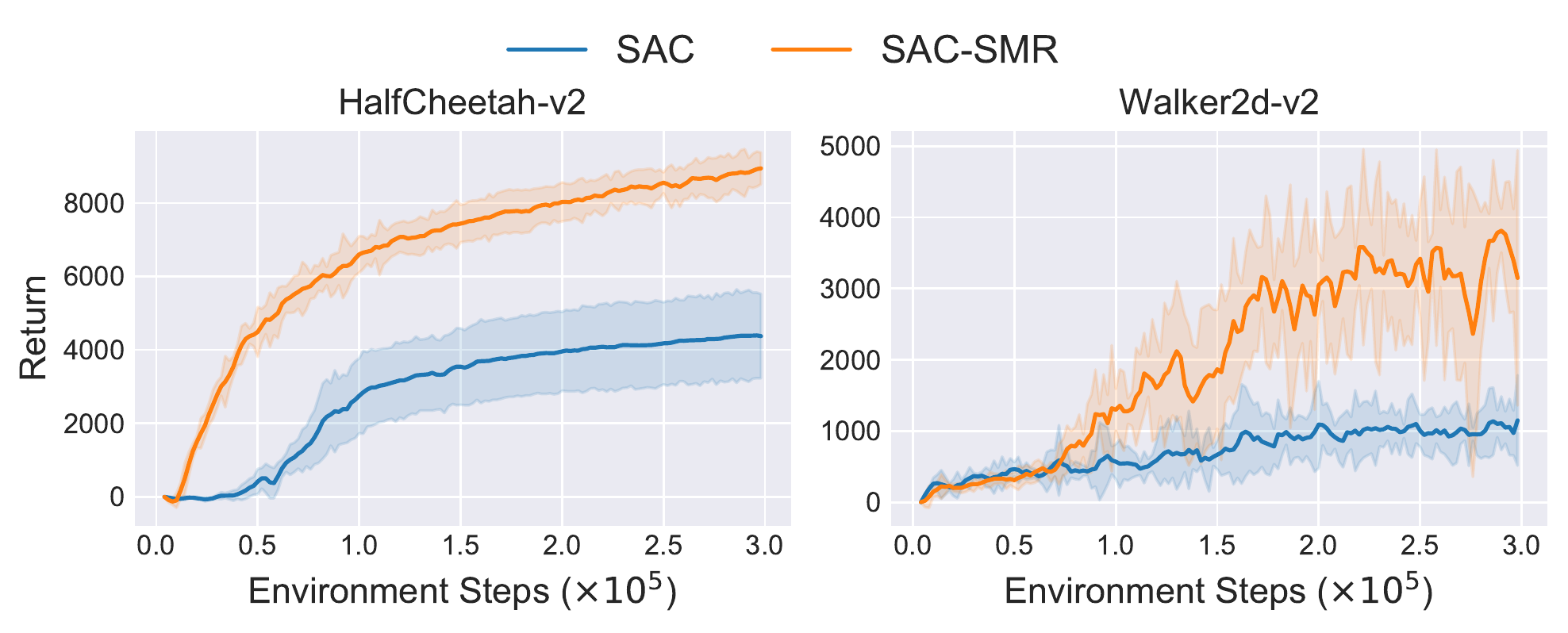}
    }\hspace{-2mm}
    \subfigure[Learning rate 0.0003]{
    \label{fig:lr3e-4}
    \includegraphics[width=0.46\linewidth]{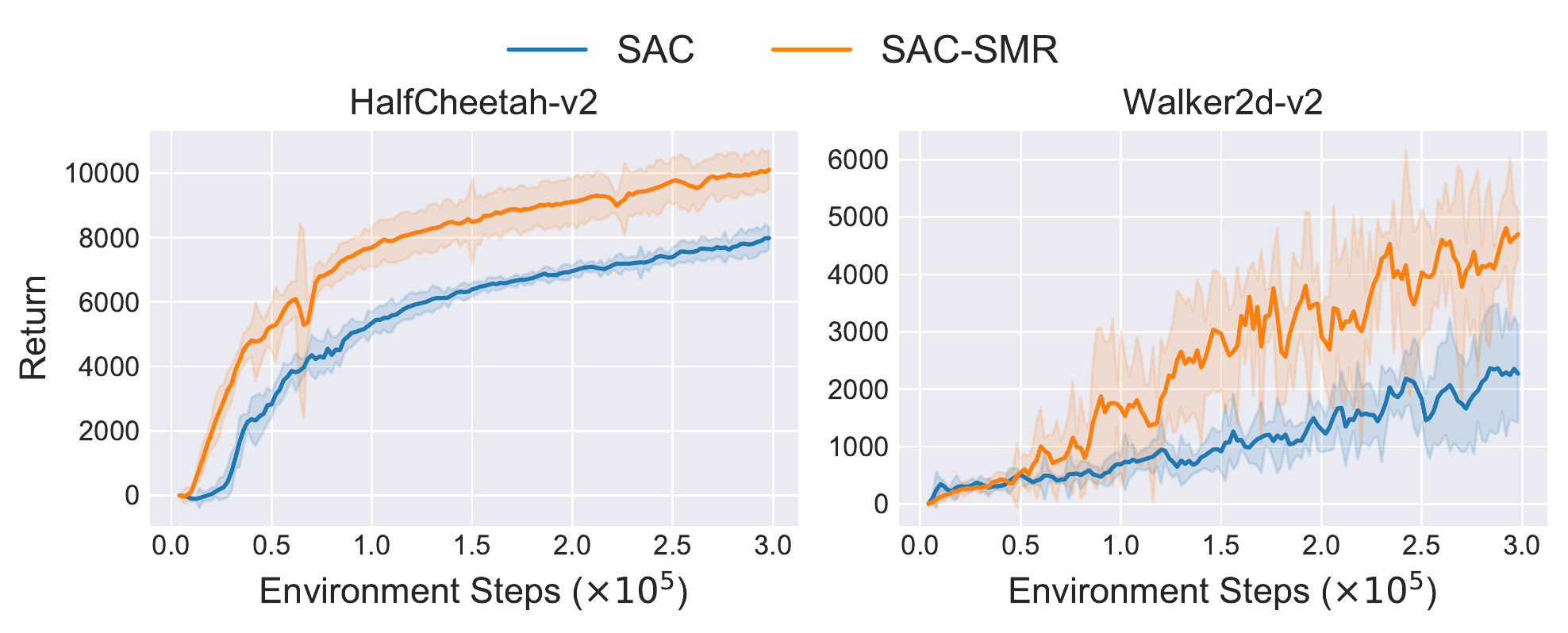}
    }\hspace{-2mm}
    \subfigure[Learning rate 0.00003]{
    \label{fig:lr3e-5}
    \includegraphics[width=0.46\linewidth]{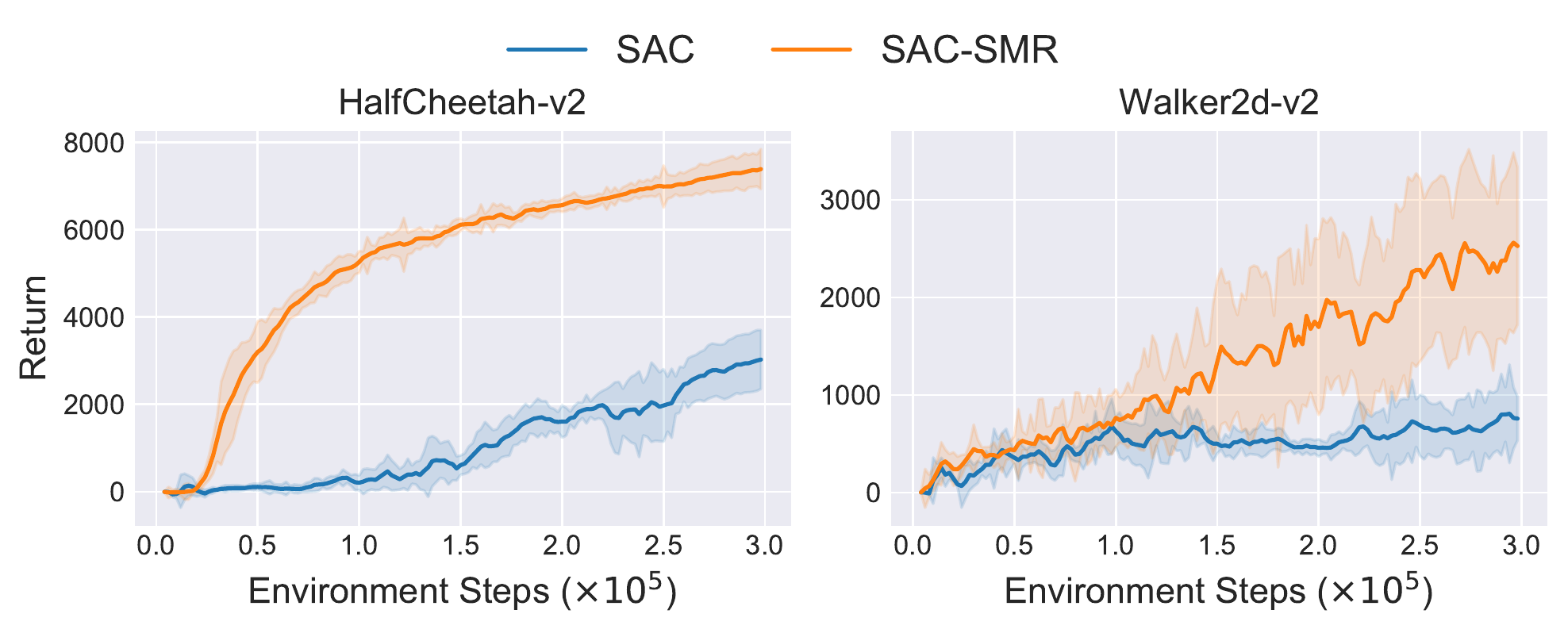}
    }\hspace{-2mm}
    \caption{Performance comparison of SAC-SMR with SMR ratio $M=10$ and SAC on HalfCheetah-v2 and Walker2d-v2 under different initial (fixed) learning rates for actor and critic networks. We sweep the initial learning rate across $\{0.001, 0.0001, 0.0003, 0.00003\}$. The results are averaged over 6 different random seeds and the shaded region denotes the standard deviation.}
    \label{fig:sac-different-lr}
\end{figure}


\section{Pseudo Codes and Hyperparameters of Off-Policy Algorithms with SMR}
\label{sec:pseudocodes}
In this section, we list the missing details on pseudo codes and hyperparameter setup for off-policy algorithms we adopt in this paper. We first introduce the hyperparameters for Q-learning and Q-SMR (the pseudo codes are omitted, please check Algorithm \ref{alg:q-smr}). As we only present the general framework of SMR upon actor-critic architecture in Algorithm \ref{alg:ac-smr}, we further offer the detailed pseudo codes and hyperparameter setup for various continuous control algorithms.
\subsection{Q-learning and Q-SMR}
\label{appsec:qlearningandqsmr}
We conduct experiments using Q-SMR and Q-learning on two discrete control tasks, \texttt{CliffWalking-v0} from OpenAI Gym \cite{Brockman2016OpenAIG} and \texttt{maze-random-20x20-plus-v0} from Gym-Maze (please refer to Gym documentation (\href{https://gymnasium.farama.org/}{https://gymnasium.farama.org/}) and \href{https://github.com/MattChanTK/gym-maze}{https://github.com/MattChanTK/gym-maze}). \texttt{CliffWalking-v0} is a gridworld learning task adapted from Example 6.6 from \cite{Sutton2005ReinforcementLA}. It contains 4$\times$12 grids. The agent starts at the bottom-left and aims at reaching the bottom-right. There exists a cliff in the bottom-center, and the agent will return to the start position if it steps on the cliff. The agent can take four actions (move up, move down, move left, and move right). Each episode of game play contains 100 steps. The episode ends if the agent steps on the cliff. We run Q-SMR and Q-learning for 500 episodes and average their performance over 20 independent runs. 

\texttt{maze-random-20x20-plus-v0} is a 2D maze environment where the agent is targeted to find its way from the top left corner to the goal at the bottom right corner. The objective is to find the shortest path from the start to the goal. The agent can also take four actions (move up, move down, move left, and move right) and the observation space is given by the coordinates of the agent. The agent receives a reward of 1 if it reaches the goal position. For every step in the maze, the agent receives an additional reward of $-\frac{0.1}{\# \rm cells}$, where $\# \rm cells$ denotes the number of cells. For \texttt{maze-random-20x20-plus-v0}, there are $20\times 20$ cells. Specially, the agent can \textit{teleport} from a portal to another portal of the same color. We run Q-SMR and Q-learning for 100 episodes where each episode contains 40000 steps. The maze will be reset if the episode terminates.

For both two environments, we use a learning rate of $\alpha = 0.05$, discount factor $\gamma=0.99$, and exploration rate $\epsilon = 0.1$ ($\epsilon$-greedy) for the training process. Unlike DQN, we use a fixed exploration rate and learning rate instead of decaying them. During the evaluation, we use an exploration rate of $\epsilon=0$. We adopt random seeds of 0-19 for simplicity.

\subsection{Continuous control algorithms}
\label{appsec:continuouscontrolalgo}
In our experiments, we use \texttt{MuJoCo 2.0} with \texttt{Gym} version \texttt{0.18.3} and \texttt{PyTorch} \cite{Paszke2019PyTorchAI} version \texttt{1.8}. We conduct experiments on MuJoCo ``-v2" environments and PyBullet ``-MuJoCoEnv-v0" environments.

We present in Algorithm \ref{alg:algtd3smr} the pseudo code for TD3-SMR. We omit the pseudo code for SAC-SMR since it is much similar to that of TD3-SMR. Compared to the original TD3, TD3-SMR only injects a sample reuse loop (see line 7-15 of Algorithm \ref{alg:algtd3smr}), which is the only modification. We list in Table \ref{tab:parametertd3sac} the hyperparameters for SAC, TD3, and SAC-SMR, TD3-SMR where SAC-SMR and TD3-SMR share identical parameters with their base algorithms. We keep the hyperparameters of all these algorithms fixed on all of the evaluated tasks. Our parameter setup generally resembles \cite{haarnoja2018softactorcritic, Haarnoja2018SoftAO}. It is worth noting that this hyperparameter setup is slightly different from the original TD3, where network size $(400,300)$, learning rate $1\times 10^{-3}$ and batch size $100$ are adopted (see \cite{Fujimoto2018AddressingFA}). As the authors mentioned (please see \href{https://github.com/sfujim/TD3}{https://github.com/sfujim/TD3}), the parameter setup for TD3 is now different from the original paper. We therefore choose to follow the current hyperparameter setup in the authors' official implementation.

\begin{algorithm}[tb]
\caption{TD3-SMR}
\label{alg:algtd3smr}
\begin{algorithmic}[1] 
\STATE Initialize critic networks $Q_{\theta_1}, Q_{\theta_2}$ and actor network $\pi_{\phi}$ with random parameters
\STATE Initialize target networks $\theta_1^\prime \leftarrow \theta_1, \theta_2^\prime \leftarrow \theta_2, \phi^\prime \leftarrow \phi$ and replay buffer $\mathcal{B} = \{\}$
\FOR{$t$ = 1 to $T$}
\STATE Select action $a$ with exploration noise $a\sim \pi_\phi(s) + \epsilon$, where $\epsilon\sim \mathcal{N}(0,\sigma)$ and observe reward $r$, new state $s^\prime$
\STATE Store transitions in the replay buffer, i.e., $\mathcal{B}\leftarrow\mathcal{B}\bigcup \{(s,a,r,s^\prime)\}$
\STATE Sample $N$ transitions $\{(s_j,a_j,r_j,s_j^\prime)\}_{j=1}^N\sim\mathcal{B}$
\color{red} \FOR{$m$ = 1 to $M$}
\STATE $\tilde{a}\sim\pi_{\phi^\prime}(s^\prime) + \epsilon$, $\epsilon\sim$ clip($\mathcal{N}(0,\bar{\sigma}),-c,c$)
\STATE $y \leftarrow r + \gamma\min_{i=1,2}Q_{\theta^\prime_i}(s^\prime,\tilde{a})$
\STATE Update critic $\theta_i$ by minimizing $\frac{1}{N}\sum_s (Q_{\theta_i}(s,a)-y)^2$
\IF{$t\mod d$}
\STATE Update actor $\phi$ by deterministic policy gradient $\nabla_\phi J(\phi) = \frac{1}{N}\sum_s \nabla_a Q_{\theta_1}(s,a)|_{a=\pi_{\phi}(s)}\nabla_{\phi}\pi_{\phi}(s)$
\STATE Update target networks: $\theta_i^\prime \leftarrow \tau\theta_i + (1-\tau)\theta_i^\prime, \phi^\prime\leftarrow\tau\phi+(1-\tau)\phi^\prime$
\ENDIF
\ENDFOR
\ENDFOR
\end{algorithmic}
\end{algorithm}

We list the pseudo code for DARC-SMR \cite{Efficient2022Lyu} in Algorithm \ref{alg:algdarc} and its hyperparameter setup in Table \ref{tab:parametertd3sac}. We follow the original hyperparameter setup of the DARC paper and adopt the network size $(400,300)$ for both the actor network and critic network. For the weighting coefficient $\nu$ in DARC (for balancing the underestimation bias and overestimation bias), we also follow the best recommended hyperparameter by the authors, where we adopt $\nu=0.15$ for Hopper-v2, $\nu=0.25$ for Ant-v2, and $\nu=0.1$ for HalfCheetah-v2 and Walker2d-v2. For the regularization parameter $\lambda$ in DARC, we use $\lambda=0.005$ by default. Other parameters are identical to the original paper and we keep them unchanged throughout our experiments. We use the official implementation of DARC (\href{https://github.com/dmksjfl/DARC}{https://github.com/dmksjfl/DARC}) when conducting experiments.

\begin{table*}
\centering
\caption{Hyperparameters setup for TD3 \cite{Fujimoto2018AddressingFA}, SAC \cite{haarnoja2018softactorcritic}, DARC \cite{Efficient2022Lyu}, TQC \cite{Kuznetsov2020ControllingOB}, and REDQ \cite{Chen2021RandomizedED} on continuous control benchmarks.}
\label{tab:parametertd3sac}
\begin{tabular}{lrr}
\toprule
\textbf{Hyperparameter}  & \textbf{Value} \quad \\
\midrule
Shared & \\
\qquad Actor network  & \qquad  $(400,300)$ for DARC and $(256,256)$ for others \\
\qquad Batch size     &\qquad   $256$ \\
\qquad Learning rate  & \qquad $1\times 10^{-3}$ for DARC and $3\times 10^{-4}$ for others \\
\qquad Optimizer & \qquad Adam \cite{KingmaB14adam} \\
\qquad Discount factor & \qquad $0.99$ \\
\qquad Replay buffer size & \qquad $10^6$  \\
\qquad Warmup steps & \qquad $256$ for TQC and $5\times 10^3$ for others \\
\qquad Nonlinearity & \qquad ReLU \\
\qquad Target update rate & \qquad $5\times 10^{-3}$ \\
\midrule
TD3 & \\
\qquad Target update interval & \qquad $2$ \\
\qquad Critic network & \qquad $(256,256)$ \\
\qquad Exploration noise &\qquad  $\mathcal{N}(0,0.1)$ \\
\qquad Target noise & \qquad $0.2$ \\
\qquad Noise clip & \qquad $0.5$ \\
\midrule
DARC & \\
\qquad Regularization parameter $\lambda$ & \qquad $0.005$ \\
\qquad Critic network & \qquad $(400,300)$ \\
\midrule
SAC & \\
\qquad Critic network & \qquad $(256,256)$ \\
\qquad Target update interval & $1$ \\
\qquad Reward scale & \qquad $1$ \\
\qquad Entropy target & \qquad $-{\rm dim}(\mathcal{A})$  \\
\qquad Entropy auto-tuning & \qquad True \\
\qquad Maximum log std & \qquad $2$ \\
\qquad Minimum log std & \qquad $-20$ \\
\midrule
TQC & \\
\qquad Critic network & \qquad $(512,512,512)$ \\
\qquad Number of critic networks & \qquad $5$ \\
\qquad Number of atoms & \qquad $25$ \\
\qquad Huber loss parameter & \qquad $1$ \\
\midrule
REDQ & \\
\qquad Critic network & \qquad $(256,256)$ \\
\qquad Update-to-data (UTD) ratio & \qquad $20$ \\
\qquad Number of critic networks & \qquad $10$ \\
\qquad In-target minimization parameter & \qquad $2$ \\
\bottomrule
\end{tabular}
\end{table*}

\begin{algorithm}[tb]
\caption{DARC-SMR}
\label{alg:algdarc}
\begin{algorithmic}[1] 
\STATE Initialize critic networks $Q_{\theta_1}, Q_{\theta_2}$ and actor networks $\pi_{\phi_1}, \pi_{\phi_2}$ with random parameters
\STATE Initialize target networks $\theta_1^\prime \leftarrow \theta_1, \theta_2^\prime \leftarrow \theta_2, \phi_1^\prime \leftarrow \phi_1, \phi_2^\prime \leftarrow \phi_2$ and replay buffer $\mathcal{B} = \{\}$
\FOR{$t$ = 1 to $T$}
\STATE Select action $a$ with $\max_i \max_j Q_{\theta_i}(s, \pi_{\phi_j}(s))$ added $\epsilon\sim \mathcal{N}(0,\sigma)$
\STATE Execute action $a$ and observe reward $r$, new state $s^\prime$ and done flag $d$
\STATE Store transitions in the replay buffer, i.e., $\mathcal{B}\leftarrow\mathcal{B}\bigcup \{(s,a,r,s^\prime,d)\}$
\FOR{$i = 1,2$}
\STATE Sample $N$ transitions $\{(s_j,a_j,r_j,s_j^\prime,d_j)\}_{j=1}^N\sim\mathcal{B}$
\color{red} \FOR{$m$ = 1 to $M$}
\STATE Get actions: $a_1\leftarrow \pi_{\phi_1^\prime}(s^\prime) + \epsilon$, $a_2 \leftarrow \pi_{\phi_2^\prime}(s^\prime) + \epsilon$, $\epsilon\sim$ clip($\mathcal{N}(0,\bar{\sigma}),-c,c$)
\STATE Calculate $\hat{V}(s^\prime) = (1 - \nu) \max_{k=1,2} \min_{j=1,2} Q_{\theta_j^\prime}(s^\prime,a_k) + \nu \min_{k=1,2} \min_{j=1,2} Q_{\theta_j^\prime}(s^\prime,a_k),$
\STATE $y \leftarrow r + \gamma(1-d) \hat{V}(s^\prime)$
\STATE Update critic $\theta_i$ by minimizing $\frac{1}{N}\sum_s \left \{ (Q_{\theta_i}(s,a)-y)^2 + \lambda [Q_{\theta_1}(s,a) - Q_{\theta_2}(s,a)]^2 \right \}$
\STATE Update actor $\phi_i$ by maximizing $\frac{1}{N}\sum_s \nabla_a Q_{\theta_i}(s,a)|_{a=\pi_{\phi_i}(s)}\nabla_{\phi_i}\pi_{\phi_i}(s)$
\STATE Update target networks: $\theta_i^\prime \leftarrow \tau\theta_i + (1-\tau)\theta_i^\prime, \phi_i^\prime\leftarrow\tau\phi_i+(1-\tau)\phi_i^\prime$
\ENDFOR
\ENDFOR
\ENDFOR
\end{algorithmic}
\end{algorithm}

We further combine SMR with TQC \cite{Kuznetsov2020ControllingOB} and list the pseudo code for TQC-SMR in Algorithm \ref{alg:algtqc}, with its hyperparameter setup listed in Table \ref{tab:parametertd3sac}. Similarly, we follow the default hyperparameter recommended by the authors, e.g., the actor network has a network size of $(256,256)$, while the critic network has a size of $(512,512,512)$. The agent starts training when $256$ samples are collected. For the most critical hyperparameter, the number of dropped atoms $d$, we follow the original paper and adopt $d=5$ for Hopper-v2, $d=0$ for HalfCheetah-v2, and $d=2$ for Ant-v2 and Walker2d-v2. For TD3-SMR, SAC-SMR, and TQC-SMR, we adopt an SMR ratio $M=10$ by default for all of the evaluated state-based tasks. We use the official implementation of TQC (\href{https://github.com/SamsungLabs/tqc_pytorch}{https://github.com/SamsungLabs/tqc\_pytorch}) for all of the experimental evaluation.

\begin{algorithm}[tb]
\caption{TQC-SMR}
\label{alg:algtqc}
\begin{algorithmic}[1] 
\STATE Initialize critic networks $Z_{\theta_n}, n\in\{1,2,\ldots,N\}$ and actor network $\pi_{\phi}$ with random parameters
\STATE Initialize target networks $\theta_n^\prime \leftarrow \theta_n, n\in\{1,2,\ldots,N\}$ and replay buffer $\mathcal{D} = \{\}$
\STATE Set target entropy $\mathcal{H}_T = -{\rm dim}(\mathcal{A})$, $\alpha=1$, number of quantiles $L$, left atom proportion $k$
\FOR{$t$ = 1 to $T$}
\STATE Execute action $a\sim\pi_\phi$ and observe reward $r$, new state $s^\prime$
\STATE Store transitions in the replay buffer, i.e., $\mathcal{D}\leftarrow\mathcal{D}\bigcup \{(s,a,r,s^\prime)\}$
\STATE Sample a mini-batch transitions $B = \{(s,a,r,s^\prime)\}\sim\mathcal{D}$
\color{red} \FOR{$m$ = 1 to $M$}
\STATE Update temperature parameter via $\nabla_\alpha J(\alpha) = \nabla_\alpha \mathbb{E}_{B,\pi_\phi}[\log\alpha \cdot (-\log\pi_\phi(a|s) - \mathcal{H}_T)]$
\STATE Update actor parameter $\phi$ via $\nabla_\phi \mathbb{E}_{B,\pi_\phi} \left[ \alpha\log\pi_{\phi}(a|s) - \dfrac{1}{NL}\displaystyle\sum_{l,n=1}^{N,L} \psi_{\theta_n}^l(s,a) \right]$ \\ // $\psi_{\theta_n}^l,l\in[1,L]$ is the atom at location $l$
\STATE $y_i = r + \gamma[z_{(i)}(s^\prime,a^\prime) - \alpha\log\pi_\phi(a^\prime|s^\prime)]$ \quad // $z_{(i)}$ is the sorted atoms in ascending order, $i\in[NL]$
\STATE Update critic parameter $\theta_n$ by $\nabla_{\theta_n} \mathbb{E}_{B,\pi_\phi}\left[ \dfrac{1}{kNL}\displaystyle \sum_{l=1}^L \sum_{i=1}^{kN} \rho_{\tau_l}(y_i - \psi_{\theta_n}^l) \right]$ \\ 
// $\rho_{\tau_l}$ is the Huber quantile loss with parameter 1
\STATE Update target networks: $\theta_n^\prime \leftarrow \beta\theta_n + (1-\beta)\theta_n^\prime, n\in\{1,2,\ldots,N\}$
\ENDFOR
\ENDFOR
\end{algorithmic}
\end{algorithm}

For REDQ \cite{Chen2021RandomizedED}, we also keep the original hyperparameters unchanged when combining it with SMR, i.e., it uses a learning rate of $3\times 10^{-4}$ and a network size of $(256,256)$ for both the actor network and critic network, an ensemble size of $10$ for critics. REDQ also adopts a high update-to-data (UTD) ratio of $G=20$ and samples $2$ different indices from $10$ critics when calculating the target $Q$ value. We summarize the pseudo code for REDQ-SMR in Algorithm \ref{alg:algredq} and the hyperparameter setup in Table \ref{tab:parametertd3sac}. Inspired by the fact that model-based methods often attain higher sample efficiency by using a high UTD ratio (the number of updates taken by the agent compared to the number of actual interactions with the environment), REDQ explores the feasibility of high UTD ratio without a dynamics model on continuous control tasks. As discussed in the main text (Section \ref{sec:relatedwork}), SMR is different from adopting a high UTD ratio. REDQ and model-based methods update the agent multiple times with bootstrapping, i.e., each time the agent sees different samples and updates on these different data multiple times. SMR, however, updates multiple times on the \textit{fixed} batch data for multiple times. Since REDQ already leverages a high UTD ratio, we use an SMR ratio $M=5$ for REDQ-SMR. It is worth noting that our reported performance of REDQ is slightly different from the original paper. We have tried our best to reproduce the performance of REDQ on MuJoCo tasks. However, as the authors commented in \href{https://github.com/watchernyu/REDQ}{https://github.com/watchernyu/REDQ}, the performance of REDQ seems to be quite different with different PyTorch \cite{Paszke2019PyTorchAI} version and the reasons are not entirely clear. We thus choose to run REDQ with its official implementation (\href{https://github.com/watchernyu/REDQ}{https://github.com/watchernyu/REDQ}) and PyTorch 1.8 and report the resulting learning curves.

\begin{algorithm}[tb]
\caption{REDQ-SMR}
\label{alg:algredq}
\begin{algorithmic}[1] 
\STATE Initialize critic networks $Q_{\theta_i}, i=1,2,\ldots,N$ and actor network $\pi_{\phi}$ with random parameters
\STATE Initialize target networks $\theta_i^\prime \leftarrow \theta_i, i=1,2,\ldots,N$ and replay buffer $\mathcal{D} = \{\}$
\FOR{$t$ = 1 to $T$}
\STATE Take one action $a_t\sim\pi_{\phi}(\cdot|s_t)$ and observe reward $r_t$, new state $s^\prime_{t+1}$
\STATE Store transitions in the replay buffer, i.e., $\mathcal{D}\leftarrow\mathcal{D}\bigcup \{(s_t,a_t,r_t,s^\prime_{t+1})\}$
\FOR{$g$ = 1 to $G$}
\STATE Sample a mini-batch $B = \{(s,a,r,s^\prime)\}\sim\mathcal{D}$
\color{red} \FOR{$m$ = 1 to $M$}
\STATE Sample a set $\mathcal{K}$ of $K$ indices from $\{1,2,\ldots,N\}$
\STATE Compute the $Q$ target $y = r + \gamma \left( \min_{i\in\mathcal{K}}Q_{\theta_i^\prime}(s^\prime,\tilde{a}^\prime) - \alpha\log\pi_\phi(\tilde{a}^\prime|s^\prime) \right), \tilde{a}^\prime\sim\pi_\phi(\cdot|s^\prime)$
\FOR{$i=1,2,\ldots,N$}
\STATE Update $\theta_i$ with gradient descent using $\nabla_{\theta_i}\frac{1}{|B|}\sum_{(s,a,r,s^\prime)\sim B} (Q_{\theta_i}(s,a)-y)^2$
\STATE Update target networks: $\theta_i^\prime \leftarrow \tau\theta_i + (1-\tau)\theta_i^\prime$
\ENDFOR
\IF{$g=G$}
\STATE Update actor $\phi$ with gradient ascent using $\nabla_\phi \frac{1}{|B|} \sum_{s\in B} \left( \frac{1}{N}\sum_{j=1}^N Q_{\theta_j}(s,\tilde{a}) - \alpha\log\pi_\phi(\tilde{a}|s) \right),\tilde{a}\sim\pi_\phi(\cdot|s)$
\ENDIF
\ENDFOR
\ENDFOR
\ENDFOR
\end{algorithmic}
\end{algorithm}

For image-based tasks, we adopt the environment wrapper from TD-MPC \cite{Hansen2022TemporalDL} for SAC and SAC-SMR. The image is processed with a 4-layer CNN with kernel size $(7,5,3,3)$, stride $(2,2,2,2)$ and $32$ filters per layer. Then the representation is input into a 2-layer MLP with $512$ hidden units. We map the raw image input into an embedding of size $200$ and repeat the actions every two frames for six evaluated tasks. For DrQ-v2, we use its official implementation (\href{https://github.com/facebookresearch/drqv2}{https://github.com/facebookresearch/drqv2}) and keep its default hyperparameters setup fixed. For DQN experiments on Atari, we adopt the widely used RL playground implementation for DQN (\href{https://github.com/TianhongDai/reinforcement-learning-algorithms}{https://github.com/TianhongDai/reinforcement-learning-algorithms}).

\section{Compute Infrastructure}

In Table \ref{tab:computing}, we list the compute infrastructure that we use to run all of the algorithms.

\begin{table}[htb]
\caption{Compute infrastructure.}
\label{tab:computing}
\centering
\begin{tabular}{c|c|c}
\toprule
\textbf{CPU}  & \textbf{GPU} & \textbf{Memory} \\
\midrule
AMD EPYC 7452  & RTX3090$\times$8 & 288GB \\
\bottomrule
\end{tabular}
\end{table}

\section{Licences}

We implement SAC on our own. Other codes are built upon source DDPG and TD3 codebases under MIT licence (\href{https://github.com/sfujim/TD3}{https://github.com/sfujim/TD3}), DARC codebase under MIT licence (\href{https://github.com/dmksjfl/DARC}{https://github.com/dmksjfl/DARC}), TQC codebase under MIT licence (\href{https://github.com/SamsungLabs/tqc\_pytorch}{https://github.com/SamsungLabs/tqc\_pytorch}), REDQ codebase under MIT licence (\href{https://github.com/watchernyu/REDQ}{https://github.com/watchernyu/REDQ}), DrQ-v2 codebase under MIT licence (\href{https://github.com/facebookresearch/drqv2}{https://github.com/facebookresearch/drqv2}).

\section{Broader Impacts}
This work mainly focuses on a simple and novel way of improving sample efficiency of the off-the-shelf off-policy RL algorithms. We do not foreseen any potential negative social impact of this work.

\end{document}